\definecolor{skyblue}{RGB}{135,206,235}
\def\norm#1{\left\lVert#1\right\rVert}
\def\inner#1{\left\langle#1\right\rangle}
\def\abs#1{\left|#1\right|}
\def\floor#1{\left\lfloor #1 \right\rfloor}
\def\bignorm#1{\left\lVert#1\right\rVert}
\def\bigabs#1{\left|#1\right|}
\def\bigopen#1{\left(#1\right)}
\def\bigset#1{\left\{#1\right\}}
\def\bigclosed#1{\left[#1\right]}
\def\tightparagraph#1{\noindent\textbf{#1}~~}
\newcommand{\igd}{{$\mathrm{IGD}$}}
\newcommand{\herding}{{$\mathrm{Herding}$}}
\newcommand{\sings}{{$\mathrm{SS}$}}
\newcommand{\randr}{{$\mathrm{RR}$}}
\newcommand{\grab}{{$\mathrm{GraB}$}}
\newcommand{\tightmargin}{\vspace{-5pt}}
\newcommand{\smallertightmargin}{\vspace{-2pt}}
\def\eqref#1{equation~\ref{#1}}
\def\floor#1{\lfloor #1 \rfloor}
\def\1{\bm{1}}
\def\vx{{\bm{x}}}
\def\vy{{\bm{y}}}
\def\vz{{\bm{z}}}
\def\mI{{\bm{I}}}
\DeclareMathAlphabet{\mathsfit}{\encodingdefault}{\sfdefault}{m}{sl}
\SetMathAlphabet{\mathsfit}{bold}{\encodingdefault}{\sfdefault}{bx}{n}
\def\gO{{\mathcal{O}}}
\def\sR{{\mathbb{R}}}
\newcommand{\E}{\mathbb{E}}
\newcommand{\R}{\mathbb{R}}
\Crefname{assumption}{Assumption}{Assumptions}
\theoremstyle{plain}
\newtheorem{theorem}{Theorem}[section]
\newtheorem{lemma}[theorem]{Lemma}
\theoremstyle{definition}
\newtheorem{definition}[theorem]{Definition}
\newtheorem{assumption}[theorem]{Assumption}
\theoremstyle{remark}
\icmltitlerunning{Incremental Gradient Descent with Small Epoch Counts is Surprisingly Slow on Ill-Conditioned Problems}
\begin{document}

\twocolumn[
\icmltitle{Incremental Gradient Descent with Small Epoch Counts\\is Surprisingly Slow on Ill-Conditioned Problems}

\icmlsetsymbol{equal}{*}

\begin{icmlauthorlist}
\icmlauthor{Yujun Kim}{equal,kaist}
\icmlauthor{Jaeyoung Cha}{equal,kaist}
\icmlauthor{Chulhee Yun}{kaist}
\end{icmlauthorlist}

\icmlaffiliation{kaist}{Kim Jaechul Graduate School of AI, Korea Advanced Institute of Science and Technology}

\icmlcorrespondingauthor{Chulhee Yun}{chulhee.yun@kaist.ac.kr}

\icmlkeywords{Machine Learning, ICML}

\vskip 0.3in
]

\printAffiliationsAndNotice{\icmlEqualContribution}

\begin{abstract}
Recent theoretical results demonstrate that the convergence rates of permutation-based SGD (e.g., random reshuffling SGD) are faster than uniform-sampling SGD; however, these studies focus mainly on the \textit{large epoch regime}, where the number of epochs $K$ exceeds the condition number $\kappa$. 
In contrast, little is known when $K$ is smaller than $\kappa$, and it is still a challenging open question whether permutation-based SGD can converge faster in this \textit{small epoch regime}~\citep{safran2021random}.
As a step toward understanding this gap, we study the naive deterministic variant, Incremental Gradient Descent (IGD), on smooth and strongly convex functions.
Our lower bounds reveal that for the small epoch regime, IGD can exhibit surprisingly slow convergence even when all component functions are strongly convex.
Furthermore, when some component functions are allowed to be nonconvex, we prove that the optimality gap of IGD can be significantly worse throughout the small epoch regime.
Our analyses reveal that the convergence properties of permutation-based SGD in the small epoch regime may vary drastically depending on the assumptions on component functions.
Lastly, we supplement the paper with tight upper and lower bounds for IGD in the large epoch regime.
\end{abstract}

\section{Introduction}
\label{sec:introduction}
Many machine learning and deep learning tasks can be formulated as finite-sum minimization problems:
\begin{align*}
    \min_{\vx \in \R^d} F(\vx) := \frac{1}{n}\sum_{i=1}^n f_i(\vx),
\end{align*}
where the objective $F(\vx)$ is the average of a finite number of component functions $f_i(\vx)$. 
In modern deep learning applications, the number of components $n$ is often extremely large, making full gradient optimization methods computationally expensive.
To address this, stochastic gradient descent (SGD) and its variants have gained attention for their computational efficiency and scalability \citep{lan2020first}.

SGD methods can be categorized based on the strategy used to select the component index $i(t)$ at iteration $t$: (1)~\textit{with-replacement} SGD, and (2)~\textit{permutation-based} SGD.
In with-replacement SGD, also known as \textit{uniform-sampling} SGD, each index is drawn independently from a uniform distribution over $\{1, 2, \dots, n\}$.
This approach has been the primary focus of theoretical studies, as it guarantees the stochastic gradient at each step to be an unbiased estimator of the gradient of the overall objective $F$ \citep{bubeck2015convex}. 

In contrast, permutation-based SGD---where indices are sampled in a shuffled order, also referred to as \textit{without-replacement} SGD or \textit{shuffling gradient methods}---is more commonly used in practice.
Its popularity arises from strong empirical performance and simplicity of implementation, making it the standard choice for real-world machine learning applications.
However, despite its widespread use, the theoretical understanding of permutation-based SGD had remained underdeveloped until recently, due to challenges arising from the lack of independence between iterates.

Nevertheless, recent advances have successfully addressed the theoretical challenges of permutation-based SGD \citep{haochen2019random,nagaraj2019sgd}.
For example, Random Reshuffling (\randr), one of the most common permutation-based methods, randomly shuffles the indices at the start of each epoch.
It has been theoretically shown that \randr{} achieves a convergence rate of $\gO(1/nK^2)$ for smooth and strongly convex objectives, which is faster than the rate $\gO(1/nK)$ of with-replacement SGD, where $K$ is the number of epochs \citep{ahn2020sgd,mishchenko2020random}.

While these results suggest that \randr{} is theoretically superior to with-replacement methods, the story is far from complete.
Existing analyses of permutation-based SGD are mostly restricted to the large epoch regime, where $K$ is sufficiently large relative to the problem's condition number $\kappa$ (defined in \Cref{subsec:ftnclass}).
However, this regime is often unrealistic in practical machine learning scenarios, especially when training large language models.
Neural network training typically involves highly ill-conditioned optimization landscapes \citep{li2018visualizing,ghorbani2019investigation}, where $\kappa$ is large, and $K$ is comparatively small due to computational constraints. 
In such cases, the small epoch regime, where $K$ is smaller than $\kappa$, becomes significantly more relevant, yet its convergence behavior remains poorly understood.

In fact, \citet{safran2021random} establish a lower bound in strongly convex objectives for \randr{}, revealing that \randr{} cannot outperform with-replacement SGD in the small epoch regime.
This highlights the need to further investigate permutation-based methods under small epoch constraints and explore whether they can outperform with-replacement SGD in such settings.
However, analyzing permutation-based SGD in the small epoch regime poses significant theoretical challenges (as explained in \Cref{subsec:ubchallenge}).
Even for Incremental Gradient Descent (\igd) \citep{bertsekas2011incremental,gurbuzbalaban2019convergence}, the simplest permutation-based SGD method where components are processed sequentially and deterministically from indices $1$ to $n$ in each epoch, its convergence behavior in this regime is not well-understood.

In this study, as an initial step toward understanding the convergence behavior of permutation-based SGD in the small epoch regime, we focus on the convergence analysis of \igd{}.
Our study presents convergence rates for both the small epoch regime and the large epoch regime, offering a result that highlights the distinct behavior of permutation-based SGD in the small epoch regime.

\subsection{Summary of Our Contributions}
\begin{table*}[ht]
\tightmargin
\vspace{-2pt}
\centering
    \begin{threeparttable}[t]
    \centering
    \caption{Summary of our results. 
    All upper bounds, except for \Cref{thm:herding-at-opt}, apply to arbitrary permutation-based SGD. 
    \Cref{thm:herding-at-opt} specifically applies to a permutation-based SGD method proposed in its theorem.
    All lower bound results apply to \igd.}
    \label{tab:theorem-comparison}
    \begin{tabular}{|>{\centering\arraybackslash}p{1cm}|>{\centering\arraybackslash}p{3.4cm}|>{\centering\arraybackslash}p{5.4cm}|>{\centering\arraybackslash}p{5.6cm}|}
    \hline
    Epoch & Component Assumption & Convergence Rate & Gradient Assumption\\
    \hline
    \hline
    \multirow{9.25}{*}{\shortstack{Small \\ \\ $K \lesssim \kappa$}} 
    & \multirow{2.75}{*}{\shortstack{Strongly Convex\\ Identical Hessian}} & $\mathcal{O}\left(\frac{G_*^2}{\mu K}\right)$, \Cref{thm:small-ub-idhess}& \footnotesize $\norm{\nabla f_i(\vx^*)} \leq G_*$ \\
    \cline{3-4}
    &  &$\Omega\left(\frac{G^2}{\mu K}\right)$, \Cref{thm:small-lb-idhess}& \footnotesize $\norm{\nabla f_i(\vx) - \nabla F(\vx)} \leq G, \forall \vx$ \\
    \cline{2-4}
    & \multirow{2.5}{*}{Strongly Convex} & $\gO\left(\frac{L^2 G_*^2}{\mu^3 K^2}\right)$, \citet{mishchenko2020random} & \footnotesize $\norm{\nabla f_i(\vx^*)} \leq G_*$\\ 
    \cline{3-4}
    & & $\Omega\left(\frac{L G^2}{\mu^2} \min\{1, \frac{L^2}{\mu^2 K^4}\}\right)$, \Cref{thm:small-lb-sc} & \fontsize{8.3}{9} $\norm{\nabla f_i(\vx) - \nabla F(\vx)} \leq G + \norm{\nabla F(\vx)}, \forall \vx$\\ 
    \cline{2-4}
    & Potentially Nonconvex & $\textstyle \Omega\left(\frac{G^2}{L} \left(1 + \frac{L}{2 \mu n K}\right)^{n}\right)$, \Cref{thm:small-lb-concave} & \fontsize{8.3}{9} $\norm{\nabla f_i(\vx) - \nabla F(\vx)} \leq G + 3 \norm{\nabla F(\vx)}, \forall \vx$\\
    \cline{2-4}
    & Strongly Convex & $\mathcal{O}\left(\frac{H^2 L^2 G_*^2}{\mu^3 n^2 K^2}\right)$, \Cref{thm:herding-at-opt}\tnote{1} & \footnotesize $\norm{\nabla f_i(\vx^*)} \leq G_*$ \\ 
    \hline
    \multirow{5.5}{*}{\shortstack{Large\\ \\ $K \gtrsim \kappa$}} 
    & \multirow{2.5}{*}{Convex} &$\mathcal{O}\left(\frac{LG_*^2}{\mu^2 K^2}\right)$, \citet{liu2024on} & \footnotesize $\frac{1}{n}\sum_{i=1}^n \norm{\nabla f_i(\vx^*)} \leq G_*$ \\
    \cline{3-4}
    & & $\Omega\left(\frac{L G^2}{\mu^2 K^2}\right)$, \Cref{thm:large-lb-idhess} & \footnotesize $\norm{\nabla f_i(\vx) - \nabla F(\vx)} \leq G, \forall \vx$ \\ 
    \cline{2-4}
    & \multirow{2.5}{*}{Potentially Nonconvex} &$\mathcal{O}\left(\frac{L^2G^2}{\mu^3 K^2}\right)$, \Cref{thm:large-ub-generalizedgrad}\tnote{2} & \fontsize{8.3}{9} $\norm{\nabla f_i(\vx) - \nabla F(\vx)} \leq G + P \norm{\nabla F(\vx)}, \forall \vx$ \\
    \cline{3-4}
    & & $\Omega\left(\frac{L^2 G^2}{\mu^3 K^2}\right)$, \Cref{thm:large-lb-concave}\tnote{3} & \fontsize{8.3}{9} $\norm{\nabla f_i(\vx) - \nabla F(\vx)} \leq G + \kappa \norm{\nabla F(\vx)}, \forall \vx$ \\
    \hline
    \end{tabular}
    \begin{tablenotes}
    \scriptsize
        \item [1] Only shows the existence of a permutation that guarantees this convergence, and $H = \tilde{\gO}(\sqrt{d})$.
        \item [2] Requires $K \gtrsim (1 + P)\kappa$.
        \hfill \hspace{25pt} \begin{minipage}{\textwidth} \raggedright
            \item [3] Requires $K \geq \max \{\kappa^3/n^2, \kappa^{3/2}\}$ and $\kappa \geq n$.
        \end{minipage}
    \tightmargin
    \tightmargin
    \tightmargin  
    \tightmargin  
    \end{tablenotes} 
    \end{threeparttable}
\end{table*}

Our analysis focuses on the setting where the objective $F$ is smooth and strongly convex, and the step size is kept constant throughout the optimization process.
We summarize our contributions as follows, where the convergence rates reflect the function optimality gap at the final iterate.
For a clear overview, we refer readers to \Cref{tab:theorem-comparison} and \Cref{fig:theory_plot}.
\begin{itemize}[leftmargin=0.5em, topsep=0pt, itemsep=0pt]
    \item In \Cref{sec:small-epoch}, we provide convergence analyses of \igd{} in the small epoch regime.
    We establish lower bound convergence rates under three scenarios (\Cref{thm:small-lb-idhess,thm:small-lb-sc,thm:small-lb-concave}): (i)~strongly convex components sharing the same Hessian, (ii)~strongly convex components, and (iii)~allowing nonconvex components.
    Additionally, we provide the upper bound convergence rates for the first two cases (\Cref{thm:small-ub-idhess}, \Cref{thm:small-ub-scvx}).
    Our results indicate that even with stronger assumptions, \igd{} remains slower than the known upper bound of with-replacement SGD.
    Furthermore, \igd{} exhibits surprisingly slow convergence even when all components are strongly convex, and the inclusion of nonconvex components further amplifies this slowdown.

    \item 
    In \Cref{subsec:small-optimal}, we study whether a suitable permutation strategy can accelerate permutation-based SGD in the small epoch regime.
    We prove that there exists a permutation such that repeatedly using it in permutation-based SGD can outperform with-replacement SGD (\Cref{thm:herding-at-opt}).
    To our knowledge, this is the first result showing the existence of a permutation-based SGD method that converges faster than with-replacement SGD in this regime. 
    
    \item In \Cref{sec:large-epoch}, we establish tight convergence rates for \igd{} in the large epoch regime.
    We derive matching lower and upper bound rates, up to polylogarithmic factors, for scenarios where all components are convex or some are nonconvex (\Cref{thm:large-lb-idhess,thm:large-lb-concave,thm:large-ub-generalizedgrad}).
    Unlike in the small epoch regime where nonconvex components significantly slow convergence, the rate gap between these two scenarios is only a factor of $\kappa$, revealing the clear distinction in the behavior of \igd{} in the small and large epoch regimes.
\end{itemize}

\tightmargin
\section{Preliminaries}
\label{sec:preliminaries}
\smallertightmargin
We start by introducing the basic notation used throughout this paper.
We use $n$ to denote the number of component functions and $K$ to denote the total number of epochs.
The Euclidean norm is denoted by $\|\cdot\|$.
For a positive integer $N \in \mathbb{N}$, we use $[N]$ to represent the set $\{1, 2, \dots, N\}$.
The symbol $q = \text{poly}(p_1, \dots, p_s)$ means that $q$ can be expressed as a finite sum of monomials of the form $p_1^{c_1} p_2^{c_2} \cdots p_s^{c_s}$, where each $c_i$ is a bounded real number (which may be negative or non-integer).
Similarly, $q = \text{polylog}(p_1, \dots, p_s)$ denotes a function expressible as $q = \sum_{(c, c_1, \dots, c_s)} \log^c \prod_{i\in[s]} p_i^{c_i}$ for bounded real $c$ and $c_i$.

Importantly, while existing works use $\gO$ and $\Omega$ (or $\tilde{\gO}$ and $\tilde{\Omega}$ to hide polylogarithmic factors) to express the growth rates of the convergence rates, we adopt the symbols $\lesssim$ and $\gtrsim$ in this paper to describe our results in better detail.
Formally, $x \lesssim y$ means that there exists a universal constant $c > 0$ such that $x \leq c \cdot y \cdot \text{polylog}(n,K,\mu,L,\dots)$ holds for the specified $n$, $K$, and other parameters; vice-versa, $x \gtrsim y$ means $x \geq c \cdot y \cdot \text{polylog}(n,K,\mu,L,\dots)$.
Unlike $\gO$ and $\Omega$, which are often used to express the asymptotic behavior of the rate as $K \rightarrow \infty$, $\lesssim$ and $\gtrsim$ here apply to all valid values of $K$.
The reason for using these symbols is that many of the upper and lower bounds in this paper are established in the small epoch regime, where the total number of epochs $K$ is explicitly bounded above by the condition number $\kappa$.

\subsection{Definitions and Assumptions}
\label{subsec:ftnclass}

We list definitions and assumptions that will be used to describe the function class.

\begin{definition}[Smoothness]
\label{def:smooth}
A differentiable function $F:\R^d \rightarrow \R$ is \textit{$L$-smooth}, for some $L > 0$, if
\begin{align*}
    \|\nabla F(\vx) - \nabla F(\vy)\| \le L\|\vx - \vy\|, \,\, \forall \vx, \vy \in \R^d.
\end{align*}
\end{definition}

\begin{definition}[Strong Convexity]
\label{def:sc}
A differentiable function $F:\R^d \rightarrow \R$ is \textit{$\mu$-strongly convex}, for some $\mu > 0$, if
\begin{align*}
    F(\vy) \geq F(\vx) + \inner{\nabla F(\vx), \vy-\vx} + \frac{\mu}{2}\|\vy-\vx\|^2
\end{align*}
for all $\vx, \vy \in \sR^d$.
If this inequality holds with $\mu = 0$, we say that $F$ is \textit{convex}.
\end{definition}

Now, we define a common assumption on the objective function used in our analyses.
\begin{assumption}[Common Assumption]
\label{ass:common}
The overall function $F: \R^d \rightarrow \R$ is $\mu$-strongly convex and each component function $f_i$ is $L$-smooth.
\end{assumption}

Additionally, we define the \textit{condition number} of $F$ as $\kappa := \frac{L}{\mu}$, which is closely related to the problem geometry.
We note that component smoothness is commonly utilized in the literature studying permutation-based SGD \citep{ahn2020sgd,mishchenko2020random,lu2022grab,liu2024on}.  

Lastly, we introduce assumptions on the gradients.
\begin{assumption}[Bounded Gradient Errors]
\label{ass:grad-generalized}
There exists constants $G \ge 0$ and $P \ge 0$ such that, for all $\vx \in \R^d$ and $i \in [n]$,
\begin{align*}
    \|\nabla f_i(\vx) - \nabla F(\vx)\| \le G + P \|\nabla F(\vx)\|.
\end{align*}
\end{assumption}
\begin{assumption}[Bounded Gradients at the Optimum]
\label{ass:grad-optimum}
There exists a constant $G_* \ge 0$ such that, for all $i \in [n]$,
the gradient norm of each component function satisfies
\begin{align*}
    \|\nabla f_i(\vx^*)\| \le G_*.
\end{align*}
\end{assumption}
Our results require either \Cref{ass:grad-generalized} or \Cref{ass:grad-optimum}.
Notably, whenever \Cref{ass:grad-generalized} holds, \Cref{ass:grad-optimum} also holds with $G_* = G$ because $\nabla F(\vx^*) = 0$.

\subsection{Algorithms}
\label{subsec:algorithms}

\begin{algorithm}[ht]
   \caption{Permutation-Based SGD}
   \label{alg:permutation}
\begin{algorithmic}
   \STATE {\bfseries Input:} Initial point $\vx_0$, Step size $\eta$, Number of epochs $K$
   \STATE Initialize $\vx_0^1 = \vx_0$
   \FOR{$k=1$ to $K$}
   \STATE Generate a permutation $\sigma_k:[n] \rightarrow [n]$
   \FOR{$i=1$ {\bfseries to} $n$}
   \STATE $\vx^k_i = \vx^k_{i-1} - \eta \nabla f_{\sigma_k(i)}(\vx^k_{i-1})$
   \ENDFOR
   \STATE $\vx^{k+1}_0 = \vx^k_n$
   \ENDFOR
   \STATE {\bfseries Output:} $\vx_n^K$
\end{algorithmic}
\end{algorithm}

We present the basic pseudocode for permutation-based SGD methods in \Cref{alg:permutation}.
At the start of $k$-th epoch, a permutation $\sigma_k: [n] \rightarrow [n]$ is generated.
The algorithm then updates the iterate according to the component functions in the order $f_{\sigma_k(1)}, f_{\sigma_k(2)}, \dots, f_{\sigma_k(n)}$.
The method by which the permutation $\sigma_k$ is generated determines the specific variant of permutation-based SGD.
Here, we describe some popular methods studied in the literature:
\begin{itemize}[leftmargin=1em, topsep=0pt, itemsep=0pt]
    \item \textbf{Incremental Gradient Descent} (\igd, \Cref{alg:igd}): Each $\sigma_k$ is the identity permutation.
    \item \textbf{Single Shuffling} (\sings): The first permutation $\sigma_1$ is drawn uniformly at random and reused for all epochs.
    \item \textbf{Random Reshuffling} (\randr): Each $\sigma_k$ is independently drawn uniformly at random in every epoch.
    \item \textbf{Gradient Balancing} (\grab{} \citep{lu2022grab}): Each $\sigma_k$ is selected based on observations at the previous epoch.
\end{itemize}

It has been widely studied that the performance guarantees vary drastically with the choice of permutation strategy \citep{mohtashami2022characterizing,lu2022general}.
In general, one might expect \igd{} to converge slowly, as the identity mapping could represent the \textit{worst-case} scenario for convergence.
In contrast, \grab{} can be faster than \igd{}, \sings{}, or \randr{} as it adaptively selects effective permutations over time.

In this paper, we derive upper bound results for \textit{arbitrary} permutation-based SGD, and the lower bound results for \igd{}.
These results allow us to characterize how much the convergence of permutation-based SGD deteriorates when permutations are chosen in the worst-case manner.
For further clarification of the relationship between the upper and the lower bound, we point readers to \Cref{subsec:appendA_minimaxexplanation}.

\subsection{What is known so far?}
\label{subsec:relatedworks}

For simplicity, in this section, we use the conventional symbols $\gO(\cdot)$ and $\Omega(\cdot)$ (even for the small epoch regime) to denote upper and lower bounds, respectively.
The symbol $\tilde{\gO}(\cdot)$ hides the dependency on polylogarithmic factors.
Convergence rates are expressed in terms of $n$, $K$, $\mu$, and $L$ to represent the optimality gap of the function.

\tightparagraph{Permutation-Based SGD.} 
Numerous studies have explored the convergence of permutation-based SGD \citep{bertsekas2011incremental,recht2012toward,haochen2019random,nagaraj2019sgd,gurbuzbalaban2019convergence,safran2020good,safran2021random,ahn2020sgd,mishchenko2020random,rajput2020closing,rajput2022permutationbased,nguyen2021unified,lu2022grab,cha2023tighter,liu2024on,cai2024last}.
Here, we summarize recent advances in the convergence analysis of the last iterate for strongly convex objectives, and we refer readers to these works for a more comprehensive understanding.

For both \randr{} and \sings{}, under the assumption of component convexity, \citet{mishchenko2020random} derive a convergence rate of $\tilde{\gO} (\frac{L^2}{\mu^3 n K^2})$.
Later, \citet{liu2024on} improve this result by a factor of $\kappa$, showing a rate of $\tilde{\gO} (\frac{L}{\mu^2 n K^2})$.
The corresponding lower bounds, $\Omega (\frac{L}{\mu^2 nK^2})$, are established by \citet{cha2023tighter} for \randr{} and \citet{safran2021random} for \sings{}, thereby fully closing the gap between the upper and lower bounds only up to polylogarithmic factors.

There are also several works that derive upper bounds applicable to arbitrary permutation-based SGD, which naturally encompass the convergence of \igd{}.
Under the assumption of component convexity, \citet{liu2024on} establish a rate of $\tilde{\gO}(\frac{L}{\mu^2 K^2})$, which is slower than the rate for \randr{} by a factor of $n$.
For the matching lower bound, \citet{safran2020good} derive a rate of $\Omega (\frac{1}{\mu K^2})$ for \igd{}, revealing a gap of $\kappa$ between the upper and lower bounds.

Recent research \citep{rajput2022permutationbased,lu2022general,mohtashami2022characterizing} has shifted toward exploring permutation-based SGD methods that go beyond \randr{}, focusing on manually selecting permutations that induce faster convergence rather than relying on random permutations.
A notable work by \citet{lu2022grab} proposes a practical permutation-based SGD algorithm called \grab{} and provides a theoretical guarantee of convergence at the rate of $\tilde{\gO} (\frac{L^2}{\mu^3 n^2 K^2})$---a strictly faster rate than \randr{}.
Later, \citet{cha2023tighter} establishes a matching lower bound, confirming that \grab{} is optimal (for low-dimensional functions).

We note that most of these works require a condition on $K$ of the form $K \ge \kappa^\alpha \log (nK)$ with $\alpha \ge 1$.

\textbf{Small Epoch Analysis.}
The convergence behavior of permutation-based in the small epoch regime was first explicitly investigated by \citet{safran2021random}.
They provide both upper and lower bounds for \randr{} and \sings{} in both the small and large epoch regimes, with the rates matching exactly up to polylogarithmic factors for quadratic objectives with commuting component Hessians.
Interestingly, in the small epoch regime, both \randr{} and \sings{} achieve a convergence rate of $\Theta (\frac{1}{\mu n K})$, equivalent to the known rate of $\tilde{\gO} (\frac{1}{\mu T})$ for with-replacement SGD, where the total number of iterations $T$ can be expressed as $nK$ in the without-replacement setting~\citep{shamir2013stochastic,liu2024revisiting}.

To the best of our knowledge, no meaningful upper bound result with a rate of $\tilde{\gO} (\frac{1}{\mu n K})$ has been established for permutation-based SGD in the small epoch regime.
This rate is of significant importance, as it corresponds to the rate for with-replacement SGD and matches the best-known lower bound for \randr{} in this regime \citep{safran2021random,cha2023tighter}.
The upper bounds provided by \citet{safran2021random} for \randr{} and \sings{} are restricted to quadratic objectives with additional assumptions, and therefore, do not differ significantly from the scenario of a $1$-dimensional quadratic objective.

Some knowledgeable readers may point to the results of \citet{mishchenko2020random}, which present the convergence rates for \randr{} without imposing any constraint on $K$.
Specifically, under component convexity, Theorem~2 of \citet{mishchenko2020random} states
\begin{align*}
    \E \bigclosed{\norm{\vx_n^K - \vx^*}^2} = \tilde{\gO} \bigopen{\exp \bigopen{-\frac{\mu K}{\sqrt{2}L}} D^2 + \frac{L}{\mu^3 n K^2}},
\end{align*}
where $D := \norm{\vx_0 - \vx^*}$.
However, we believe that Theorem~2 does not provide a tight bound in the small epoch regime for two reasons.
First, the polynomial term induces the function optimality gap of $\tilde{\gO}(\frac{L^2}{\mu^3 n K^2})$, which is slower than the lower bound rate for \randr{} by a factor of $\frac{\kappa^2}{K}$.
Second, as $K$ decreases below $\kappa$, the exponential term grows rapidly and dominates, deviating substantially from the rate of $\tilde{\gO}(\frac{1}{\mu n K})$.
While their Theorem~1 improves the exponential term by assuming a strong convexity of components, it leaves the polynomial term unchanged.
Also, a more recent result by \citet{liu2024on} (Theorem~4.6) refines the polynomial term and also improves the exponential term to $\exp{(-K/\kappa)}\frac{LD^2}{K}$.
However, since the term inside the exponential remains unchanged, this still fails to reveal a tight bound when $K$ is small.

While \citet{koloskova2024on} derive an upper bound convergence rate for permutation-based SGD that does not rely on large $K$ for nonconvex objectives, we were unable to extend their proof techniques to the strongly convex setting to yield a rate of $\tilde{\gO}(\frac{1}{\mu n K})$.

\subsection{Why Do Existing Bounds Require Large Epochs?}
\label{subsec:ubchallenge}

To understand the challenges in establishing upper bounds for permutation-based SGD, it is important to observe that, unlike with-replacement SGD, permutation-based SGD uses each component function exactly once per epoch.
Therefore, $n$ steps of update in permutation-based SGD can be expressed as an approximation of gradient descent on the overall objective, combined with a cumulative error term.
Much of the prior literature on establishing the upper bounds for permutation-based SGD focuses on capturing the ``cumulative error'' effect within a single epoch.

Technically, to show that the cumulative error within an epoch is small, the step size must be sufficiently small to ensure that the iterate does not move too far during a single epoch.
Specifically, the step size must be less than $\gO(1/nL)$, where $L$ represents the smoothness parameter~\citep{mishchenko2020random,liu2024on}.
However, when the number of epochs $K$ is small, the step size should be larger in order to bring the iterate close to the optimal point.
In fact, the step size should be at least as large as a value proportional to $1/K$.

These two requirements---the need for a small step size to control error within a single epoch and the need for a larger step size to achieve fast convergence when $K$ is small---lead to a conflict. 
Consequently, existing analyses generally hold only when $K$ is sufficiently large.
While some analyses are valid even when $K$ is small, their bounds are not tight as discussed in the previous subsection.

\section{IGD in Small Epoch Regime}
\label{sec:small-epoch}
We have highlighted that studying permutation-based SGD in the small epoch regime, where the total number of epochs $K$ satisfies $K \lesssim \kappa$, is both underexplored and highly challenging, despite its practical relevance.
As an initial step toward understanding its convergence behavior in this regime, we investigate \igd{}, the simplest and deterministic permutation-based SGD method.
We explore this regime under three distinct scenarios: (i)~each component is strongly convex with a common Hessian, (ii)~each component is strongly convex, and (iii)~some components may be nonconvex.
For each scenario, we establish a convergence lower bound and demonstrate degradation in convergence.

\subsection{Convergence Analysis of IGD}
\label{subsec:small-igd}

We introduce our first lower bound result of \igd{} in the small epoch regime.

\begin{restatable}{theorem}{thmsmalllbidhess}
\label{thm:small-lb-idhess}
For any $n \ge 2$, $\kappa \ge 2$, and $K \le \frac{1}{2}\kappa$, there exists a $3$-dimensional function $F$ satisfying \Cref{ass:common,ass:grad-generalized} with $P = 0$, where each component function shares the same Hessian, i.e., $\nabla^2 f_i (\vx) = \nabla^2 F(\vx)$ for all $i \in [n]$ and $\vx \in \R^3$, along with an initialization point $\vx_0$, such that for any constant step size $\eta$, the final iterate $\vx_n^K$ obtained by \Cref{alg:igd} satisfies 
\tightmargin
\begin{align*}
    F(\vx_n^K) - F(\vx^*) \gtrsim \frac{G^2}{\mu K}.
\end{align*}
\end{restatable}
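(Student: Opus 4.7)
The plan is to exhibit a concrete three-dimensional quadratic instance for which IGD cannot reduce the optimality gap below $G^2/(\mu K)$ regardless of the chosen constant step size. Take $F(\vx) = \tfrac12 \vx^\top H\vx$ with the diagonal common Hessian $H = \mathrm{diag}(\mu, L, \mu)$; this satisfies \Cref{ass:common}, and defining components $f_i(\vx) = F(\vx) + \langle b_i, \vx\rangle$ with each $b_i$ supported on its first coordinate preserves the shared-Hessian condition. I would choose the ``bang-bang'' shift $b_i^{(1)} = +G$ for $i \le n/2$ and $b_i^{(1)} = -G$ for $i > n/2$, so that $\sum_i b_i = \vzero$, $\|b_i\| = G$, and \Cref{ass:grad-generalized} holds with $P=0$. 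The initialization is placed in the first coordinate only, $\vx_0 = (X_1, 0, 0)$, with $X_1 > 0$ chosen large enough (scaling with $K$, potentially like $e^{\Theta(\sqrt K)} G/(\mu \sqrt K)$) so that the initialization contribution survives across the intermediate step-size window.

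Because $H$ is diagonal and shared across components, the IGD iteration decouples coordinate-wise. The second and third coordinates stay at $0$ throughout, and the first coordinate follows the scalar recurrence $x \mapsto \beta x - \eta b_i^{(1)}$ with $\beta := 1-\eta\mu$. A telescoping/Abel-summation computation over one epoch yields the per-epoch drift $c = G(1-\beta^{n/2})^2/\mu$, and unrolling $K$ epochs gives
\begin{equation*}
x_{1,n}^K \;=\; \beta^{nK} X_1 \;+\; \frac{G(1-\beta^{n/2})(1-\beta^{nK})}{\mu(1+\beta^{n/2})}.
\end{equation*}
By construction both summands have the same (positive) sign, so $|x_{1,n}^K|$ is at least the larger of the two, and $F(\vx_n^K) - F(\vx^*) = \tfrac{\mu}{2}(x_{1,n}^K)^2$.

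I would then split the argument into three regimes on $\eta$. (i) If $\eta > 2/\mu$, then $|\beta|>1$, IGD diverges in the first coordinate, and $F(\vx_n^K) \to \infty$, so the bound is trivial. (ii) If $c_1/(n\mu\sqrt K) \le \eta \le 2/\mu$, the drift term dominates: explicit estimates on $(1-\beta^{n/2})/(1+\beta^{n/2})$ and $(1-\beta^{nK})$ show $|x_{1,n}^K| \gtrsim G/(\mu\sqrt K)$, giving $F - F^* \gtrsim G^2/(\mu K)$. (iii) If $\eta \le c_1/(n\mu\sqrt K)$, then $\beta^{nK} = (1-\eta\mu)^{nK} \gtrsim e^{-O(\sqrt K)}$, and the large choice of $X_1$ forces $\beta^{nK} X_1 \gtrsim G/(\mu\sqrt K)$, again yielding the same conclusion.

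The principal technical obstacle is the intermediate window $\eta \asymp 1/(n\mu\sqrt K)$, where neither the initialization nor the drift alone suffices: the initialization has decayed by roughly $e^{\Theta(\sqrt K)}$ but not to zero, and the drift has not yet reached its $G/\mu$ saturation. Obtaining a clean matching lower bound there requires sharp two-sided estimates on $\beta^{nK}$ and on $(1-\beta^{n/2})(1+\beta^{n/2})^{-1}$ as functions of $\eta\mu$, parameterized by $\tau = \eta\mu nK$, together with the same-sign property above to rule out cancellation between the initialization and drift contributions. The exponential-in-$\sqrt K$ scaling of $X_1$ is the device that bridges the gap; once the uniform-in-$\eta$ lower bound on $|x_{1,n}^K|$ is established, verifying the function-class conditions and extracting the $G^2/(\mu K)$ guarantee are routine algebraic steps.
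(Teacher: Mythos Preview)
Your construction has a structural gap: by placing the bang-bang shifts in a coordinate with curvature $\mu$, you are forced into an exponentially large initialization $X_1 \sim e^{\Theta(\sqrt K)} G/(\mu\sqrt K)$, which you yourself flag. The reason is that with curvature $\mu$, the drift term at $\eta = \Theta(1/(\mu n K))$ is only $\Theta(G/(\mu K))$, giving an optimality gap of order $G^2/(\mu K^2)$---a factor $K$ too small. To push the drift up to $G/(\mu\sqrt K)$ you must take $\eta \gtrsim 1/(n\mu\sqrt K)$, but then on the window $[\,1/(\mu n K),\,1/(n\mu\sqrt K)\,]$ the contraction factor $\beta^{nK}$ falls to $e^{-\Theta(\sqrt K)}$, and only an exponential $X_1$ survives it. While the theorem statement does not literally forbid this, the paper's proof insists on $\vx_0 = \mathrm{poly}(\mu, L, n, K, G)$, and explains elsewhere (in the discussion of \Cref{thm:small-lb-concave}) that exponential initialization makes the comparison with upper bounds---which carry $\log\|\vx_0 - \vx^*\|$ factors---meaningless.

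The paper's fix is to aggregate three one-dimensional problems with \emph{different} curvatures, one per step-size regime: curvature $\mu$ handles $\eta < 1/(\mu n K)$ (initialization $G/(\mu\sqrt K)$ decays by at most a constant there), curvature $L$ handles $\eta \ge 2/L$ (divergence), and---this is the idea you are missing---curvature $\mu K$ handles the moderate regime $\eta \in [1/(\mu n K),\, 2/L)$. With curvature $\mu K$ and initialization zero, the drift for $\eta \ge 1/(\mu n K)$ is $\Theta(G/(\mu K))$, and the corresponding optimality gap is $\tfrac{\mu K}{2}\cdot\Theta\bigl((G/(\mu K))^2\bigr) = \Theta(G^2/(\mu K))$, already on target. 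The hypothesis $K \le \kappa/2$ is precisely what makes $\mu K \le L/2$, so this coordinate remains $L$-smooth. In your construction the second and third coordinates sit idle at zero; redeploying them with curvatures $\mu K$ and $L$, and moving the $\pm G$ shifts to the $\mu K$ coordinate, is exactly what closes the gap with polynomial initialization.
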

The proof of \Cref{thm:small-lb-idhess} is presented in \Cref{subsec:small-lb-idhess}. 
Note that if all component functions share the same Hessian, they are also $\mu$-strongly convex.
To the best of our knowledge, the previous best lower bound rate for \igd{} in this setting was $\frac{G^2}{\mu K^2}$ (\citet{safran2020good}), and \Cref{thm:small-lb-idhess} improves it by a factor of $K$.
Additionally, \randr{} has a lower bound of $\frac{G^2}{\mu n K}$ (\citet{safran2021random}), and the optimal permutation-based SGD method has a lower bound of $\frac{LG^2}{\mu^2n^2K^2}$ (\citet{cha2023tighter}) in the same setting.

For with-replacement SGD, the known upper bound on the function optimality gap is $\frac{G^2}{\mu n K}$ \citep{liu2024revisiting}, which is faster than the rate $\frac{G^2}{\mu K}$ in \Cref{thm:small-lb-idhess} by a factor of $n$.
We emphasize that this comparison is made under conditions advantageous to \igd{}, as the lower bound from \Cref{thm:small-lb-idhess} assumes all component functions share the same Hessian, while the upper bound for with-replacement SGD does not require such a condition.
However, this comparison has some subtleties: the upper bound rate is derived under a varying step size scheme, leaving open the possibility that IGD can converge faster under such a scheme.
For a more complete comparison, it would be important to extend \Cref{thm:small-lb-idhess} to the varying step size setting, which we leave for future work.

Next, we present the upper bound for arbitrary permutation-based SGD methods when each component is $1$-dimensional and shares the same Hessian.

\begin{restatable}{theorem}{thmsmallubidhess}
\label{thm:small-ub-idhess}
Let $n \geq 1$, $\frac{\kappa}{n} \lesssim K \leq \kappa$, and an initialization point $x_0$. Suppose $F$ is a $1$-dimensional function satisfying \Cref{ass:common,ass:grad-optimum}.
Assume that each component function $f_i$ shares the same Hessian for all $i \in [n]$ and $x \in \R$.
Then, for any choice of permutation $\sigma_k$ in each epoch, the final iterate $x_n^K$ obtained by \Cref{alg:permutation} with the step size $\eta = \frac{1}{\mu n K} \max \bigset{\log \bigopen{\frac{L \bigabs{x_0 - x^*}}{G_*}}, 1}$ satisfies
\tightmargin
\begin{align*}
    F(x_n^K) - F(x^*) \lesssim \frac{G_*^2}{\mu K}.
\end{align*}
\end{restatable}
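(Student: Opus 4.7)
The plan is to exploit the one-dimensional structure together with the common-Hessian hypothesis to reduce the analysis to a scalar linear recursion with bounded deterministic noise. Since $f_i''(x) = F''(x)$ for every $i,x$, integrating twice gives $f_i(x) = F(x) + c_i x + d_i$; averaging yields $\sum_{i=1}^n c_i = 0$, and combining $\nabla F(x^*)=0$ with Assumption~\ref{ass:grad-optimum} gives $|c_i| = |\nabla f_i(x^*)| \le G_*$. Writing $y_i^k = x_i^k - x^*$ and applying the mean value theorem, $\nabla F(x_{i-1}^k) = A_i^{(k)} y_{i-1}^k$ with $A_i^{(k)} \in [\mu,L]$, so each IGD step becomes the scalar recursion $y_i^k = (1 - \eta A_i^{(k)}) y_{i-1}^k - \eta c_{\sigma_k(i)}$. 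The hypothesis $\kappa/n \lesssim K$ together with $\eta = M/(\mu n K)$ (where $M = \max\{\log(L|x_0-x^*|/G_*),1\}$) gives $\eta L \lesssim 1$, so every factor $1-\eta A_i^{(k)}$ lies in $(0,\, 1-\eta\mu]$ after absorbing polylogarithmic slack into $\lesssim$.

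Unrolling across the $K$ epochs and composing all $nK$ steps yields
\begin{equation*}
y_n^K = \biggl(\prod_{k=1}^K \prod_{i=1}^n (1-\eta A_i^{(k)})\biggr) y_0 \;-\; \eta \sum_{k=1}^K \sum_{i=1}^n c_{\sigma_k(i)} W_i^{(k)},
\end{equation*}
where $W_i^{(k)}$ is the product of all remaining contraction factors after step $i$ of epoch $k$. The initial-distance term contracts as $\exp(-\eta\mu n K)|y_0| = \exp(-M)|y_0| \le G_*/L$, already below the target $G_*/\sqrt{\mu L K}$. For the noise sum I would exploit the per-epoch cancellation $\sum_{i=1}^n c_{\sigma_k(i)} = 0$: factorize $W_i^{(k)} = U_i^{(k)} V_k$, where $V_k = \prod_{k' > k}\prod_{j}(1-\eta A_j^{(k')})$ discounts future epochs and $U_i^{(k)} = \prod_{j > i}(1-\eta A_j^{(k)})$ is the within-epoch tail. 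The zero-sum identity lets me replace $U_i^{(k)}$ by $U_i^{(k)} - 1$; then summation by parts with the within-epoch partial sums $C_i^{(k)} = \sum_{j \le i} c_{\sigma_k(j)}$ (which vanish at $i=0$ and $i=n$ and satisfy $|C_i^{(k)}| \le G_* \min(i, n-i)$) combined with the telescoping identity $U_i^{(k)} - U_{i+1}^{(k)} = -\eta A_{i+1}^{(k)} U_{i+1}^{(k)}$ converts the per-epoch noise into a weighted sum whose geometric decay can be controlled. Combining these per-epoch estimates with the cross-epoch geometric series $\sum_k V_k \lesssim 1/(1-(1-\eta\mu)^n)$ should yield $|y_n^K| \lesssim G_*/\sqrt{\mu L K}$, after which $F(x_n^K) - F(x^*) \le \tfrac{L}{2}(y_n^K)^2 \lesssim G_*^2/(\mu K)$.

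The main obstacle will be calibrating the summation-by-parts sharply enough: a naive per-epoch bound of the form $|E_k| \lesssim G_*/\mu$ accumulates to a $K$-independent envelope and misses the required $1/K$ factor, so the sharpness must come from simultaneously exploiting the within-epoch cancellation ($\sum c = 0$) and the cross-epoch contraction (through $V_k$) in a tightly coupled way. One clean device for tracking both layers at once is the shifted iterate $z_i^{(k)} = y_i^k + \eta \phi_i^{(k)}$, with $\phi_i^{(k)} = \sum_{j \le i} c_{\sigma_k(j)}$ the within-epoch cumulative noise, which coincides with $y_i^k$ at every epoch boundary (because $\phi_n^{(k)}=0$) and obeys a perturbed-gradient-descent recursion $z_{i+1}^{(k)} = z_i^{(k)} - \eta F'(x^* + z_i^{(k)} - \eta \phi_i^{(k)})$; analyzing $z_i^{(k)}$ via classical strongly-convex smooth GD bounds and converting back at epoch endpoints is a promising alternative route to the same rate.
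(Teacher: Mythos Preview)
Your reduction $f_i = F + c_i x + d_i$ and the contraction of the initial-distance term are correct, but the noise analysis has a structural obstruction that neither route can overcome. The distance target you set, $|y_n^K| \lesssim G_*/\sqrt{\mu L K}$, is provably false on this function class: take $F(x) = \tfrac{\mu K}{2} x^2$ with components $\tfrac{\mu K}{2} x^2 \pm G_* x$ split evenly (exactly the moderate-step-size construction $F_2$ in the paper's proof of Theorem~\ref{thm:small-lb-idhess}). For IGD started at $x_0 = 0$ one has $|x_n^K - x^*| \gtrsim G_*/(\mu K)$, which exceeds $G_*/\sqrt{\mu L K}$ by a factor $\sqrt{\kappa/K} > 1$ throughout $K \le \kappa$. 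So any proof that first bounds $|y_n^K|$ and then converts via the \emph{class-parameter} smoothness $F(x)-F(x^*)\le \tfrac{L}{2}(x-x^*)^2$ must overshoot by at least $\kappa/K$. In particular, your shifted-iterate route is essentially the Mishchenko--Khaled--Richt\'arik analysis (restated here as Proposition~\ref{thm:small-ub-scvx}), which yields $F$-gap $\lesssim L^2 G_*^2/(\mu^3 K^2)$---off from the present target by $\kappa^2/K \ge \kappa$. The obstacle you flag is therefore not a missing $1/K$ but an irremovable $\kappa$ coming from the distance-to-function conversion.

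The paper avoids this by never passing through a global distance bound. After the coupling $|x_n^K - z_n^K| \le G_*/L$ (which you also have, for the run $z$ started at $x^*$), it (i) shows via a swap argument that the \emph{extremal} permutation processes all indices with one sign of $c_i$ before all indices with the other sign, and that after the first block the iterate lies on the opposite side of $x^*$; hence $z_n^K$ is sandwiched by a one-sided trajectory $u_0 = x^*$, $u_i = u_{i-1} - \eta f'_{\pi(i)}(u_{i-1})$ that uses only the same-sign components; and (ii) proves the key gradient-confinement lemma $|F'(u_i)| \le 2G_*$ for every $i$ along this one-sided trajectory. Step (ii) gives a per-step function increment $F(u_i) - F(u_{i-1}) \le 4\eta G_*^2$ directly (no factor of $L$ appears), hence $F(z_n^K) - F(x^*) \le 4\eta n G_*^2 \lesssim G_*^2/(\mu K)$. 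The final smoothness step from $z_n^K$ to $x_n^K$ then uses $|\nabla F(z_n^K)| \le 2G_*$ rather than $L|z_n^K - x^*|$, which is precisely what keeps the spurious $\kappa$ out.
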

The proof of \Cref{thm:small-ub-idhess} is in \Cref{subsec:small-ub-idhess}.
We note that the minimum epoch requirement $K \gtrsim \frac{\kappa}{n}$ is necessary for valid analysis, as mentioned in \citet{safran2021random} (Remark~2).
While \Cref{thm:small-ub-idhess} additionally requires the objective to be $1$-dimensional due to technical challenges, the bound can be directly applied to objectives with diagonal Hessians, which aligns with the construction in the proof of \Cref{thm:small-lb-idhess}.

Indeed, the function class \Cref{thm:small-lb-idhess,thm:small-ub-idhess} apply to is restrictive.
However, our next theorem indicates that the convergence of \igd{} deteriorates immediately when the identical Hessian assumption is removed, even when each component function remains strongly convex.

\begin{restatable}{theorem}{thmsmalllbsc}
\label{thm:small-lb-sc}
For any $n \ge 3$, $\kappa \ge 2$, and $K \le \frac{1}{16\pi}\kappa$, there exists a $4$-dimensional function $F$ satisfying \Cref{ass:common,ass:grad-generalized} with $P = 1$, where each component function is $\mu$-strongly convex, along with an initialization point $\vx_0$, such that for any constant step size $\eta$, the final iterate $\vx_n^K$ obtained by running \Cref{alg:igd} satisfies
\tightmargin
\begin{align*}
    F(\vx_n^K) - F(\vx^*) \gtrsim \frac{LG^2}{\mu^2}\min\left\{1, \, \frac{\kappa^2}{K^4}\right\}.
\end{align*}
\end{restatable}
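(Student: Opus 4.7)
The plan is to construct an explicit $4$-dimensional strongly convex quadratic $F=\tfrac{1}{n}\sum_{i=1}^{n}f_i$ whose component Hessians differ across $i$ (which is the key relaxation from \Cref{thm:small-lb-idhess}'s identical-Hessian setting) together with an initial point $\vx_0$, and to argue that for \emph{every} constant step size $\eta$ the $K$-epoch IGD iterate $\vx_n^K$ satisfies the claimed bound. The combinatorial shape $\min\{1,\kappa^2/K^4\}$ forces a two-regime argument: in the "very small epoch" regime $K\lesssim\sqrt{\kappa}$ even the best $\eta$ must leave the saturated residual $\Omega(LG^2/\mu^2)$, while in the intermediate regime $\sqrt{\kappa}\lesssim K\le\kappa/(16\pi)$ the best $\eta$ must leave residual $\Omega(L^3 G^2/(\mu^4K^4))$. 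The only way this min-structure can arise is from a bias-versus-contraction tradeoff in $\eta$, so the construction has to simultaneously penalize large and small step sizes.

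I would split $\R^4$ into two independent $2$-dimensional blocks, each designed to defeat one regime of $\eta$. On the first block I would use alternating component Hessians (for instance shuffling between shapes like $\mathrm{diag}(L,\mu)$ and $\mathrm{diag}(\mu,L)$, padded with neutral copies to reach $n\ge 3$ components) together with shift vectors $\vb_i$ whose sum vanishes but whose ordered partial sums do not; because the component Hessians do not commute, the biased one-epoch fixed point $\vx_\infty(\eta)=(I-M(\eta))^{-1}\vc(\eta)$ of the resulting linear recursion is displaced from $\vx^*$ by an amount that grows with $\eta$, so large step sizes cannot converge to $\vx^*$. On the second block I would use Hessians designed so that the spectral radius of the per-epoch map $M(\eta)$ stays extremely close to $1$ when $\eta$ is small, preventing any appreciable progress from a well-chosen $\vx_0$ within $K$ epochs. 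Since $F$ is quadratic, IGD is exactly a linear recursion, so every required estimate reduces to an eigenvalue or resolvent computation on $M(\eta)$.

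Concretely my steps are: (i)~write down the components and shift vectors, check that each $f_i$ is $\mu$-strongly convex and $L$-smooth, that $F$ is $\mu$-strongly convex, and that \Cref{ass:grad-generalized} holds with $G=\Theta(\max_i\|\vb_i\|)$ and $P=1$; (ii)~express one epoch of IGD as $\vx_n^{k+1}=M(\eta)\vx_n^k+\vc(\eta)$ and diagonalize $M(\eta)$ block by block to get closed-form spectral radius and bias; (iii)~split on $\eta$ relative to a threshold $\eta^\star$ chosen to equate bias and contraction, and in each case lower bound either $\|\vx_\infty(\eta)-\vx^*\|$ (large $\eta$) or $\|\vx_n^K-\vx_\infty(\eta)\|\approx \|M(\eta)\|^K\|\vx_0-\vx_\infty(\eta)\|$ from a well-chosen $\vx_0$ (small $\eta$); (iv)~convert the distance lower bound to a function-gap lower bound via $F(\vx)-F(\vx^*)\ge\tfrac{\mu}{2}\|\vx-\vx^*\|^2$; (v)~take the worse of the two blocks for each $\eta$.

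The main obstacle is engineering the two blocks so that the bias--contraction tradeoff gives exponent $4$ on $K$ rather than the $2$ that a generic non-commuting construction would yield. This forces the critical step size $\eta^\star$ to scale like $\mu/(L K^2)$ up to problem constants, which in turn requires the bias $\|\vx_\infty(\eta)-\vx^*\|$ on the first block to scale like $\eta\kappa G/\mu$ and the spectral radius on the second block to satisfy $1-\|M(\eta)\|\lesssim \eta\mu/\kappa$ in the relevant window; both are non-generic and hinge on the precise alignment of the alternating Hessians with the shift vectors. A further delicate point is keeping \Cref{ass:grad-generalized} tight with $P=1$ uniformly along the entire IGD trajectory (not only at $\vx^*$), which constrains the magnitudes of the $\vb_i$ and the ratio between the two blocks; getting any of these scalings wrong by a power of $K$ or $\kappa$ breaks either the lower bound or the assumption.
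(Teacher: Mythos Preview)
Your high-level plan (independent blocks, each defeating a range of $\eta$, then aggregate across dimensions) is the same as the paper's, but the concrete mechanisms you propose cannot work. First, a small slip: the example Hessians $\mathrm{diag}(L,\mu)$ and $\mathrm{diag}(\mu,L)$ are both diagonal and hence commute, so the non-commuting effect you invoke does not arise from them. Second, and fatally, your slow-contraction block requires $1-\|M(\eta)\|\lesssim \eta\mu/\kappa$. This is impossible under the theorem's hypothesis that every $f_i$ is $\mu$-strongly convex: for $\eta\le 1/L$ each factor obeys $\|I-\eta H_i\|\le 1-\eta\mu$, so the per-epoch map satisfies $\|M(\eta)\|\le(1-\eta\mu)^n$ and thus $1-\|M(\eta)\|\ge 1-(1-\eta\mu)^n\ge \eta\mu$, which is larger than $\eta\mu/\kappa$ by a factor $\kappa$. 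No choice of strongly convex components can make the epoch contraction as slow as your argument needs, so your threshold $\eta^\star\sim\mu/(LK^2)$ and bias scaling $\sim\eta\kappa G/\mu$ do not deliver the $K^{-4}$ bound. (You also omit the regime $\eta\ge 2/L$, where a separate divergence block is needed.)

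The paper's route is quite different and hinges on a rotation construction for the moderate regime $\eta\in[1/(\mu nK),\,2/L)$. On the key $2$-dimensional block one sets $f_1(x,y)=\tfrac{\mu}{2}x^2+\tfrac{L'}{2}y^2-Gx$ with $L'=L/2$, and $f_i=f_1\circ R_{i-1}^{-1}$ where $R_{i-1}$ is rotation by $(i-1)\cdot 2\pi/n$; then $F_2=\tfrac{\mu+L'}{4}(x^2+y^2)$. One solves a $2\times 2$ linear system for a point $(u_0(\eta),v_0(\eta))$ with the property that a single IGD step on $f_i$ rotates the iterate by exactly $2\pi/n$; starting there, IGD traces a regular $n$-gon and returns to $(u_0,v_0)$ after every epoch. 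The closed form gives $\|(u_0,v_0)\|\gtrsim \eta^2 L G/(n^{-2}+\eta^2\mu L)$, which is increasing in $\eta$ and at $\eta=1/(\mu nK)$ evaluates to $\gtrsim(G/\mu)\min\{1,\kappa/K^2\}$; a short contraction argument shows the same bound holds from the fixed initialization $(u_0(1/(\mu nK)),v_0(1/(\mu nK)))$ for all $\eta$ in the moderate range. This block is aggregated with the trivial $1$-dimensional blocks $\tfrac{\mu}{2}x^2$ (handling $\eta<1/(\mu nK)$ via slow contraction from $x_0\asymp\sqrt{\kappa}\min\{1,\kappa/K^2\}G/\mu$) and $\tfrac{L}{2}x^2$ (handling $\eta\ge 2/L$ via divergence). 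The $K^{-4}$ exponent thus comes from the \emph{quadratic-in-$\eta$} growth of the rotational limit-cycle radius evaluated at $\eta=1/(\mu nK)$, not from any anomalously slow spectral contraction.
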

\Cref{thm:small-lb-sc} is a technically complex result, and we briefly outline the key strategy here.
We construct each component function by applying a rotation, positioning each minimizer to form a regular $n$-polygon.
The key idea is that, with a carefully chosen initialization, the iterates preserve rotational symmetry and also form a regular $n$-polygon, maintaining a constant distance from the global minimizer throughout the optimization process.
The proof of \Cref{thm:small-lb-sc} is presented in \Cref{subsec:small-lb-sc}.

Compared to \Cref{thm:small-lb-idhess}, \Cref{thm:small-lb-sc} provides a consistently larger lower bound.
Specifically, depending on the relationship between $K$ and $\sqrt{\kappa}$, the bound in \Cref{thm:small-lb-sc} is larger by a factor of either $\kappa K$ or $\kappa^3/K^3$, both exceeding $1$ in the small epoch regime.
When $K = \Theta(\kappa)$, both bounds in \Cref{thm:small-lb-idhess,thm:small-lb-sc} become $\frac{G^2}{L}$.

Theorem~5 of \citet{mishchenko2020random} provides an upper bound for \igd{} when all component functions are strongly convex.
We restate this result in \Cref{thm:small-ub-scvx}, with a slight modification to extend its applicability to arbitrary permutation-based SGD methods.

\begin{restatable}[\citet{mishchenko2020random}, Theorem~5]{proposition}{thmsmallubscvx}
\label{thm:small-ub-scvx}
Let $n \geq 1$, $K \gtrsim \frac{\kappa}{n}$, and $\vx_0$ be the initialization point.
Suppose $F$ is a function satisfying \Cref{ass:common,ass:grad-optimum} where each component function is $\mu$-strongly convex.
Then, for any choice of permutation $\sigma_k$ in each epoch, the final iterate $\vx_n^K$ obtained by running \Cref{alg:permutation} with a step size $\eta = \frac{2}{\mu n K} \max \bigset{\log \bigopen{\frac{\norm{\vx_0 - \vx^*} \mu K}{\sqrt{\kappa} G_*}}, 1}$, satisfies
\tightmargin
\begin{align*}
    \norm{\vx_n^K - \vx^*}^2 \lesssim \frac{LG_*^2}{\mu^3K^2}.
\end{align*}
\end{restatable}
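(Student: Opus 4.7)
The plan is to replay the proof of Theorem~5 in \citet{mishchenko2020random}, checking at each step that the permutation $\sigma_k$ is never required to be random: per-component strong convexity supplies a deterministic per-step contraction, so the argument goes through for any sequence of permutations.

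Step 1 (one-step contraction). For each inner update, write
\begin{align*}
\vx_i^k - \vx^* = \big[\vx_{i-1}^k - \vx^* - \eta(\nabla f_{\sigma_k(i)}(\vx_{i-1}^k) - \nabla f_{\sigma_k(i)}(\vx^*))\big] - \eta \nabla f_{\sigma_k(i)}(\vx^*).
\end{align*}
Since each $f_{\sigma_k(i)}$ is $\mu$-strongly convex and $L$-smooth and the step-size choice guarantees $\eta \leq 1/L$, the bracketed map is $(1-\eta\mu)$-contractive about $\vx^*$, yielding $\norm{\vx_i^k - \vx^*} \leq (1-\eta\mu)\norm{\vx_{i-1}^k - \vx^*} + \eta G_*$.

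Step 2 (per-epoch recursion). Naive unrolling of Step 1 produces a constant additive bias $G_*/\mu$ that does not shrink, so I refine by squaring the per-iteration identity and summing cross terms across one epoch, exploiting the cancellation $\sum_{i=1}^n \nabla f_{\sigma_k(i)}(\vx^*) = n\nabla F(\vx^*) = 0$, which holds \emph{pointwise} and is permutation-independent. The residual from replacing $\vx_0^k$ by $\vx_{i-1}^k$ in those cross terms is controlled by the intra-epoch drift $\norm{\vx_{i-1}^k - \vx_0^k}$, which Step 1 bounds by $\lesssim \eta n(G_* + L\norm{\vx_0^k - \vx^*})$ as long as $\eta n L \lesssim 1$; this is precisely what the hypothesis $K \gtrsim \kappa/n$ buys. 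The resulting one-epoch inequality takes the form
\begin{align*}
\norm{\vx_0^{k+1} - \vx^*}^2 \leq (1-\eta\mu)^{2n}\norm{\vx_0^k - \vx^*}^2 + C\eta^3 n^2 L G_*^2,
\end{align*}
for an absolute constant $C > 0$, with no dependence on the particular choice of $\sigma_k$.

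Step 3 (unrolling and tuning). Iterating across $K$ epochs gives
\begin{align*}
\norm{\vx_n^K - \vx^*}^2 \lesssim (1-\eta\mu)^{2nK}\norm{\vx_0 - \vx^*}^2 + \frac{\eta^2 n L G_*^2}{\mu}.
\end{align*}
The stated choice $\eta = \Theta\bigopen{\log(\norm{\vx_0 - \vx^*}\mu K/(\sqrt{\kappa} G_*))/(\mu n K)}$ makes the exponential factor at most $(\sqrt{\kappa} G_*/(\norm{\vx_0 - \vx^*}\mu K))^{4}$, which drives the initial-condition term below $\lesssim LG_*^2/(\mu^3 K^2)$ (and when the logarithm saturates at $1$, the initial distance itself is already $\lesssim \sqrt{\kappa}G_*/(\mu K)$). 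The polynomial term evaluates to $\lesssim LG_*^2/(\mu^3 n K^2)$, strictly dominated by the target.

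The main obstacle is the bookkeeping in Step 2 that promotes a naive $O(\eta^2)$ error into the sharper $O(\eta^3)$ error needed for the claim. This extra factor of $\eta L$ arises precisely from using Step 1 to bound the drift before plugging it into the cross-term cancellation; because that cancellation comes from the pointwise identity $\sum_i \nabla f_{\sigma_k(i)}(\vx^*) = 0$ rather than any expectation over permutations, no probabilistic ingredient is ever used and the bound extends from \randr{} to arbitrary permutation-based SGD.
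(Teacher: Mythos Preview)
Your Step~2 hides a real gap. The paper does not cancel cross terms against the fixed anchor $\vx^*$; it compares $\vx_i^k$ to the \emph{moving} reference $\vx_{k,i}^* := \vx^* - \eta\sum_{j=1}^{i}\nabla f_{\sigma_k(j)}(\vx^*)$ and, via the Bregman three-point identity together with component strong convexity, obtains the per-iteration contraction
\[
\norm{\vx_i^k - \vx_{k,i}^*}^2 \le (1-\eta\mu)\norm{\vx_{i-1}^k - \vx_{k,i-1}^*}^2 + 2\eta\, D_{f_{\sigma_k(i)}}(\vx_{k,i-1}^*,\vx^*) \le (1-\eta\mu)\norm{\vx_{i-1}^k - \vx_{k,i-1}^*}^2 + \eta^3 L n^2 G_*^2,
\]
where the additive error depends only on $\eta, n, L, G_*$ and \emph{not} on $\norm{\vx_0^k - \vx^*}$. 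That decoupling is precisely what your direct route fails to reproduce.

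In your scheme the residual after replacing $\vx_{i-1}^k$ by $\vx_0^k$ is $\sum_i 2\eta\norm{\vx_{i-1}^k - \vx_0^k}\,G_*$, and the $L\norm{\vx_0^k - \vx^*}$ part of your own drift bound leaves a per-epoch term of order $\eta^2 n^2 L G_*\norm{\vx_0^k - \vx^*}$. Absorbing this into the $(1-\eta\mu n)$ contraction via Young's inequality costs $\eta^3 n^3 L^2 G_*^2/\mu$ per epoch---not $\eta^3 n^2 L G_*^2$---and after unrolling yields $\norm{\vx_n^K-\vx^*}^2 \lesssim L^2 G_*^2/(\mu^4 K^2)$, a factor $\kappa$ larger than the statement; if instead you first invoke $\eta n L \le 1$ to simplify the cross term, Young's returns exactly the constant bias $G_*^2/\mu^2$ you set out to avoid. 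The virtual-iterate comparison is not mere bookkeeping: it is the mechanism that keeps the additive noise uncontaminated by the current distance to optimum. (Separately, even the paper's per-epoch error works out to $\eta^3 L n^3 G_*^2$, so your quoted $\eta^3 n^2 L G_*^2$ and the resulting $LG_*^2/(\mu^3 n K^2)$ are additionally off by a factor of $n$.)
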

The proof of \Cref{thm:small-ub-scvx} is presented in \Cref{subsec:small-ub-scvx}.
The squared distance bound in \Cref{thm:small-ub-scvx} naturally translates to a function optimality gap of $\frac{L^2G_*^2}{\mu^3 K^2}$.
Although \Cref{thm:small-lb-sc} and \Cref{thm:small-ub-scvx} do not match in general, they do align when $K=\Theta(\sqrt{\kappa})$: in this case, both bounds become the rates $\frac{LG^2}{\mu^2}$ and $\frac{LG_*^2}{\mu^2}$, achieving a tight match up to polylogarithmic factors.

Now, we present the result for the case where nonconvex components exist.
While some slowdown in convergence is expected, \Cref{thm:small-lb-concave} reveals that it is far more drastic.

\begin{restatable}{theorem}{thmsmalllbconcave}
\label{thm:small-lb-concave}
For any $n \geq 4$, $\kappa \geq 4$, and $K \leq \frac{\kappa}{4}$, there exists a $2$-dimensional function $F$ satisfying \Cref{ass:common,ass:grad-generalized} with $P=3$ such that for any constant step size $\eta$, the final iterate $\vx_n^K$ obtained by running \Cref{alg:igd} starting from the initialization point $\vx_0 = (D, 0)$ satisfies
\tightmargin
\begin{align*}
    F(\vx_n^K) - F(\vx^*) \gtrsim \min \bigset{\mu D^2, \frac{G^2}{L}\bigopen{1 + \frac{L}{2 \mu n K}}^{n}}.
\end{align*}
\end{restatable}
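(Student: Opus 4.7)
My plan is to combine an adversarial two-dimensional construction with a case split on the constant step size $\eta$; the two terms in the $\min$ correspond to two regimes of $\eta$. The $\mu D^2$ term covers step sizes so small that IGD is essentially frozen at $\vx_0=(D,0)$, while the exponential term covers step sizes large enough to move the iterate but for which nonconvex components create a within-epoch amplification that blocks convergence. Concretely, I would take $F(\vx)=\tfrac{\mu}{2}\|\vx\|^2$ (so $\vx^*=\vzero$) and build quadratic components $f_i(\vx)=\tfrac{1}{2}\vx^\top \mA_i\vx+\vb_i^\top\vx$ with $\tfrac{1}{n}\sum_i\mA_i=\mu\mI$, $\sum_i\vb_i=\vzero$, and $\|\mA_i\|\le L$. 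The hypothesis $P=3$ applied to a quadratic of this form forces $\mA_i-\mu\mI$ to have eigenvalues bounded by $3\mu$ in absolute value, i.e., $\mA_i$ has eigenvalues in $[-2\mu,4\mu]$; I would take several \emph{nonconvex} components saturating $\lambda_{\min}(\mA_i)=-2\mu$ in one eigendirection and an eigenvalue up to $L$ in the orthogonal direction, balance them with convex components so that the averages come out right, and arrange the biases $\vb_i$ of norm $\sim G$ rotationally around the plane so that the cumulative shift during one identity-order pass does not cancel.

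For $\eta\lesssim 1/(\mu nK)$, a direct contraction bound gives $\|\vx_n^K-\vx_0\|\le D/2$, which with $F(\vx)-F(\vx^*)=\tfrac{\mu}{2}\|\vx\|^2$ yields the $\mu D^2$ term. For larger $\eta$, one epoch is an affine map $\vx_n^{k+1}=\mM\,\vx_0^{k+1}+\vd$, where $\mM=(\mI-\eta\mA_n)\cdots(\mI-\eta\mA_1)$ and $\vd=-\eta\sum_{i=1}^{n}\bigl[(\mI-\eta\mA_n)\cdots(\mI-\eta\mA_{i+1})\bigr]\vb_i$; unrolling $K$ epochs yields $\vx_n^K=\mM^K\vx_0+(\mI+\mM+\cdots+\mM^{K-1})\vd$. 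The key geometric observation is that the $i$-th bias $\vb_i$ is premultiplied by a tail product in which each nonconvex factor $(\mI-\eta\mA_j)$ expands by $(1+2\eta\mu)$ in its bad eigendirection, so $\|\vd\|\gtrsim\eta G(1+2\eta\mu)^{\Theta(n)}$; at the scale $\eta\sim 1/(\mu nK)$ this matches $(G/L)(1+L/(2\mu nK))^n$ once translated back to a function-value gap. Combining the two cases produces the advertised $\min$.

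The hardest part will be arranging the Hessians and biases so that the bias amplification holds \emph{uniformly} over the full nontrivial range of $\eta$, not merely at one finely tuned step size, and so that the amplified bias lies in a direction where $\mI-\mM$ does not cancel it. A purely one-dimensional construction is insufficient because for each $\eta$ an adversarial choice of scalar contraction/expansion can cancel the scalar bias accumulation, whereas the second coordinate supplies a rotational degree of freedom that defeats such cancellations. A subtler issue is stitching the two $\eta$ regimes at the crossover so that both bounds are simultaneously active, which ultimately pins down the constant $\tfrac{1}{2}$ inside $1+L/(2\mu nK)$.
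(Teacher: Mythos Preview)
Your proposal has a genuine gap that breaks the exponential term. By taking $F(\vx)=\tfrac{\mu}{2}\|\vx\|^2$, you correctly observe that \Cref{ass:grad-generalized} with $P=3$ forces the eigenvalues of each $\mA_i$ into $[-2\mu,4\mu]$. But this immediately contradicts your plan to use ``an eigenvalue up to $L$ in the orthogonal direction,'' and more importantly it caps the per-step expansion of any nonconvex factor at $1+2\eta\mu$. At the crossover step size $\eta\sim 1/(\mu nK)$ this yields $(1+2\eta\mu)^{\Theta(n)}=(1+\tfrac{2}{nK})^{\Theta(n)}=\Theta(1)$, which is nowhere near $(1+\tfrac{L}{2\mu nK})^n$. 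Your claimed ``matching'' in the last paragraph is simply false: the amplification you can generate under $F=\tfrac{\mu}{2}\|\cdot\|^2$ with $P=3$ is bounded by a universal constant, not exponential in $\kappa/K$.

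The paper avoids this obstruction by \emph{not} giving the full function Hessian $\mu\mI$. It uses a separable construction $F(x,y)=F_1(x)+F_2(y)$ with $F_1(x)=\tfrac{\mu}{2}x^2$ (handling $\eta<\tfrac{1}{\mu nK}$ via your ``frozen'' argument) and $F_2(y)=\tfrac{L}{8}y^2$. Because $\nabla F_2(y)=\tfrac{L}{4}y$ scales with $L$, the bound $\|\nabla f_i-\nabla F\|\le G+3\|\nabla F\|$ now permits component Hessians of order $\Theta(L)$ in the $y$-coordinate: the first $n/2$ components are $\tfrac{L}{2}y^2+Gy$ and the last $n/2$ are $-\tfrac{L}{4}y^2-Gy$. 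The concave half gives a per-step factor $1+\tfrac{\eta L}{2}$, and a direct one-dimensional recursion (starting from $y_0=0$) shows $y_n^K\gtrsim \tfrac{G}{L}(1+\tfrac{\eta L}{2})^{n/2}$ for all $\eta\ge\tfrac{1}{\mu nK}$. No rotational coupling is needed; your assertion that ``a purely one-dimensional construction is insufficient'' is mistaken---the second dimension is used only to aggregate the two step-size regimes, not to defeat cancellations.
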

Our construction involves component functions that are concave in particular directions.
The proof of \Cref{thm:small-lb-concave} is presented in \Cref{subsec:small-lb-concave}. 
One distinction of this statement is the explicit inclusion of the initial distance $D$.
This dependence cannot be removed unless the initial point is placed exponentially far from the global minimum, which would lead to an unfair comparison with upper bound theorems, as they typically include a $\log D$ term in their bounds.

Roughly, an expression of the form $(1 + a)^b$ can be approximated by $\exp (ab)$.
Applying this to $(1 + \frac{\kappa}{2 nK})^{n}$, we obtain the approximation $\exp (\frac{\kappa}{2K})$.
Thus, when $K = \Theta(\kappa)$, the second term scales as $\frac{G^2}{L}$, and as $K$ decreases, it grows at a rate exponential in $\frac{\kappa}{K}$.
This contrasts with other bounds, which typically exhibit polynomial dependence on $\frac{\kappa}{K}$.

To validate our findings, we conduct experiments on our lower bound constructions in \Cref{appendix:experiments}.
For \Cref{thm:small-lb-sc}, we confirm that the iterates follow a circular trajectory, as intended by the original design.
For \Cref{thm:small-lb-concave}, we observe that the function optimality gap for \igd{} skyrockets whereas other permutation-based SGD methods remain robust in the small epoch regime.
To our knowledge, no upper bound exists for \randr{} in this setting with nonconvex components.
Based on experimental results for \Cref{thm:small-lb-concave}, we conjecture that \randr{} will theoretically exhibit robust convergence in this nonconvex component setting, unlike \igd{}.

While our lower bound results are stated in terms of the function optimality gap to align with the form of upper bounds, our proof can be directly extended to derive lower bounds in terms of the distance to the optimal solution.
Specifically, the lower bounds on the distance metric $\norm{\vx_n^K - \vx^*}$ are: 
$\frac{G}{\mu K}$, $\frac{G}{\mu}\cdot \min \bigset{1, \frac{\kappa}{K^2}}$, and $\min \bigset{D, \, \frac{G}{L} \bigopen{1+ \frac{L}{2 \mu n K}}^{n/2}}$
for \Cref{thm:small-lb-idhess,thm:small-lb-sc,thm:small-lb-concave}, respectively.

\subsection{Breaking the Barrier of With-Replacement SGD}
\label{subsec:small-optimal}

Up to this point, we have analyzed the worst-case convergence behavior of permutation-based SGD with respect to the permutation choice in the small epoch regime.
A natural question that follows is: what happens in the best case?

Unlike the large epoch regime where the question has been sufficiently explored \citep{lu2022grab,cha2023tighter}, the small epoch regime remains less understood.
In this section, we slightly deviate from the main topic and demonstrate that a well-designed permutation can enable permutation-based SGD to achieve a faster convergence rate than with-replacement SGD in the small epoch regime---which \randr{} has been proven not to do so \citep{safran2021random}.

Before presenting our finding, we introduce an additional lemma that is used in deriving our result.
\begin{lemma}[Herding Algorithm \citep{bansal2017algorithmic}]
\label{lem:herd}
Let $\vz_1, \cdots, \vz_n \in \sR^d$ satisfy $\|\vz_i\| \leq 1$ for all $i \in [n]$ and $\sum_{i=1}^n \vz_i = 0$. 
Then, there exists an algorithm, \herding{}, that outputs a permutation $\sigma:[n] \rightarrow [n]$ such that
\begin{align*}
    \textstyle \max_{i \in [n]} \norm{\, \sum_{j=1}^i \vz_{\sigma(j)} \,} \leq H, \text{ where} \, H = \tilde{\gO}(\sqrt{d}).
\end{align*}
\end{lemma}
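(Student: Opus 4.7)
The plan is to construct $\sigma$ incrementally and greedily. Maintain the running prefix sum $\vs_i := \sum_{j=1}^i \vz_{\sigma(j)}$ and, at step $i$, pick $\sigma(i)$ from the set $R_i$ of unused indices so as to minimize $\|\vs_i\|$. The key algebraic observation is that, because $\sum_{k=1}^n \vz_k = \vzero$, the unused vectors always satisfy $\sum_{k \in R_i} \vz_k = -\vs_{i-1}$, so there must exist some $k \in R_i$ with $\inner{\vs_{i-1}, \vz_k} \le -\|\vs_{i-1}\|^2/|R_i|$. Combined with $\|\vz_k\| \le 1$, this yields the recursion $\|\vs_i\|^2 \le \|\vs_{i-1}\|^2(1 - 2/|R_i|) + 1$, which by induction already controls all prefix sums by a polynomial in the dimension $d$ (the classical Steinitz / Grinberg--Sevastyanov-type bound).

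To sharpen the bound from $\gO(d)$ down to the claimed $\tilde{\gO}(\sqrt{d})$, I would invoke the algorithmic discrepancy machinery of \citet{bansal2017algorithmic}. The purely deterministic greedy is replaced by a randomized walk in which the next increment is drawn from a carefully shaped Gaussian-like distribution supported on (a relaxation of) the feasible extensions, chosen so that (i) in expectation the balancing identity $\sum_{k \in R_i} \vz_k = -\vs_{i-1}$ is preserved, and (ii) the increment is subgaussian with scale $\gO(1)$ in every direction. A Banaszczyk-style concentration argument then implies that with strictly positive probability every prefix sum stays within a ball of radius $\tilde{\gO}(\sqrt{d})$, and a conditional-expectation derandomization converts this into a deterministic permutation $\sigma$ attaining the same guarantee.

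The main obstacle is breaking the $\gO(d)$ barrier inherent in any purely first-order analysis: if one only exploits $\inner{\vs_{i-1}, \vz_{\sigma(i)}} \le 0$, adversarial configurations (e.g.\ nearly one-dimensional clusters of $\vz_k$'s) can force $\|\vs_i\|$ to grow linearly in $d$. The square-root improvement requires \emph{second-order} control---steering the walk so that increments are approximately orthogonal to $\vs_{i-1}$ and simultaneously well-spread across directions---and this is precisely what the Bansal-style rounding accomplishes via spectral control of the Gram matrix of $\{\vz_k : k \in R_i\}$. Once the randomized construction is established, the rest of the argument (derandomization and clean-up of the polylogarithmic factors hidden in $\tilde{\gO}$) is essentially bookkeeping.
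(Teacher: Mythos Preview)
The paper does not prove this lemma; it is quoted from \citet{bansal2017algorithmic} and invoked as a black box in the proof of \Cref{thm:herding-at-opt}. So there is no in-paper argument to compare against, but your sketch has a genuine gap that would need fixing regardless.

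The recursion $\|\vs_i\|^2 \le \|\vs_{i-1}\|^2(1 - 2/|R_i|) + 1$ is derived correctly, but it does \emph{not} yield a bound polynomial in $d$: the dimension never enters, and unrolling with $|R_i| = n - i + 1$ gives only $\|\vs_i\|^2 \le n - i$, hence $\max_i \|\vs_i\| = O(\sqrt{n})$, which is useless when $n \gg d$. The classical Grinberg--Sevast'yanov bound of $d$ is \emph{not} the greedy you describe; it comes from a separate linear-algebraic argument (maintain a fractional completion, pass to an extreme point of a polytope in $\R^d$, and use that at most $d$ constraints are tight there to round one more coordinate while keeping every prefix in a radius-$d$ ball). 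You have conflated two distinct proofs. Your later paragraphs correctly point toward Banaszczyk-type balancing for the $\tilde{\gO}(\sqrt{d})$ improvement, but the description is too loose to count as a sketch: the reduction from the permutation/prefix-sum (Steinitz) formulation to the $\pm 1$-signing setting that the algorithmic-discrepancy machinery actually handles is itself nontrivial and is where most of the work in the cited reference sits. Finally, the diagnosis that a ``first-order'' analysis is stuck at a linear-in-$d$ barrier is off---your own greedy analysis is stuck at $O(\sqrt{n})$, not $O(d)$.
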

The \herding{} algorithm was used in \citet{lu2022grab} for designing \grab{}.
Our next theorem leverages \herding{} in a different way to show the existence of a permutation-based SGD method (but impractical) that achieves acceleration even in the small epoch regime.

\begin{restatable}[Herding at Optimum]{theorem}{thmherdingatopt}
\label{thm:herding-at-opt}
Let $n \geq 1$, $K \gtrsim \frac{\kappa}{n}$, and $\vx_0$ be the initialization point.
Suppose $F$ is a function satisfying \Cref{ass:common,ass:grad-optimum} where each component function is $\mu$-strongly convex.
Then, there exists a permutation $\sigma$ such that the final iterate $\vx_n^K$ obtained by running \Cref{alg:permutation} with $K$ epochs of $\sigma$ and a step size $\eta = \frac{2}{\mu n K} \max \bigset{\log \bigopen{\frac{\norm{\vx_0 - \vx^*} \mu n K}{\sqrt{\kappa} H G_*}}, 1}$, satisfies
\tightmargin
\begin{align*}
    \norm{\vx_n^K - \vx^*}^2 \lesssim \frac{H^2LG_*^2}{\mu^3n^2K^2}.
\end{align*}
\end{restatable}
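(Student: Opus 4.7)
The plan is to invoke \Cref{lem:herd} a single time at the fixed anchor $\vx^*$ to obtain a deterministic permutation $\sigma$, and then reuse $\sigma$ across every epoch. Concretely, set $\vz_i := \nabla f_i(\vx^*)/G_*$: by \Cref{ass:grad-optimum} each $\|\vz_i\| \le 1$, and optimality of $\vx^*$ gives $\sum_{i=1}^n \vz_i = (n/G_*)\nabla F(\vx^*) = \vzero$. \Cref{lem:herd} therefore produces a permutation with $\max_{m \in [n]} \|\sum_{j=1}^m \nabla f_{\sigma(j)}(\vx^*)\| \le H G_*$. Setting $\sigma_k = \sigma$ for all $k$ in \Cref{alg:permutation}, the remainder of the argument is a Mishchenko-style analysis in the spirit of the proof of \Cref{thm:small-ub-scvx}, but with the usual ``shuffling variance'' term replaced by the much smaller quantity $H^2 G_*^2$ coming from Herding's prefix-sum bound.

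For the per-epoch analysis, fix an epoch starting at $\vy_0 := \vx_0^k$ with inner iterates $\vy_i = \vy_{i-1} - \eta \nabla f_{\sigma(i)}(\vy_{i-1})$, and decompose each update as
\begin{equation*}
    \vy_i - \vx^* = (\vy_{i-1} - \vx^*) - \eta\bigl[\nabla f_{\sigma(i)}(\vy_{i-1}) - \nabla f_{\sigma(i)}(\vx^*)\bigr] - \eta\,\nabla f_{\sigma(i)}(\vx^*).
\end{equation*}
Component $\mu$-strong convexity and $L$-smoothness turn the bracketed term into a per-step contraction by factor $(1-\eta\mu)$ (requiring $\eta \le 2/(\mu+L)$, which is provided by $K \gtrsim \kappa/n$), so in the absence of the last summand the epoch would contract $\|\vy_0 - \vx^*\|^2$ by $(1-\eta\mu)^n$. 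Unrolling the last summand and rearranging it by Abel summation re-expresses the accumulated perturbation in terms of the partial sums $S_m := \sum_{j=1}^m \nabla f_{\sigma(j)}(\vx^*)$, each with $\|S_m\| \le HG_*$. Carefully tracking cross terms then yields a per-epoch recursion of the form
\begin{equation*}
    \|\vy_n - \vx^*\|^2 \le (1-\eta\mu)^n \|\vy_0 - \vx^*\|^2 + C\,\eta^3 n L H^2 G_*^2,
\end{equation*}
where the noise coefficient $n H^2$ replaces the $n^3$ one obtains for arbitrary permutations, producing the $H^2/n^2$ improvement that drives the final rate.

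Unrolling across $K$ epochs and summing the geometric series gives
\begin{equation*}
    \|\vx_n^K - \vx^*\|^2 \le (1-\eta\mu)^{nK}\|\vx_0 - \vx^*\|^2 + \frac{2 C \eta^2 L H^2 G_*^2}{\mu}.
\end{equation*}
Plugging in the prescribed $\eta = \frac{2}{\mu n K}\max\{\log((\|\vx_0 - \vx^*\|\mu n K)/(\sqrt{\kappa}H G_*)),1\}$ shrinks the transient to the target scale and reduces the variance term to $\frac{L H^2 G_*^2}{\mu^3 n^2 K^2}$ up to polylogarithmic factors, which is exactly the claimed bound (with logarithms hidden inside $\lesssim$). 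The hypothesis $K \gtrsim \kappa/n$ is precisely what makes $\eta \le O(1/L)$, a condition needed both for the per-step contraction and for the Taylor-type approximations used in the per-epoch bound.

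The principal obstacle is establishing the per-epoch recursion \emph{without losing} the Herding gain. A naive argument that controls $\|\vy_{i-1} - \vy_0\|$ by the triangle inequality and smoothness collapses the bound $\max_m \|S_m\| \le HG_*$ back into a sum of individual $\|\nabla f_{\sigma(i)}(\vx^*)\| \le G_*$, recovering only the $nG_*$ scaling and hence no improvement over \Cref{thm:small-ub-scvx}. Preserving the $HG_*$ gain requires isolating the linearized ``gradient-descent-on-$F$'' dynamics from the partial-sum perturbation and invoking Abel summation against $\max_m \|S_m\|$, in the spirit of the \grab{} analysis of \citet{lu2022grab}. The key simplification relative to \grab{} is that the anchor $\vx^*$ does not move between epochs, so the partial-sum bound need not be re-balanced each epoch and a single Herding call at the (unknown) optimum suffices---this is exactly what makes the algorithm analyzable in closed form, and also what makes it impractical.
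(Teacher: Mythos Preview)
Your proposal is correct and matches the paper's strategy at the top level: apply \Cref{lem:herd} once at $\vx^*$ to get $\max_m\|S_m\|\le HG_*$, fix $\sigma_k\equiv\sigma$, and rerun the Mishchenko-style analysis of \Cref{thm:small-ub-scvx} with the crude bound $\|\sum_{j\le i}\nabla f_{\sigma(j)}(\vx^*)\|\le nG_*$ tightened to $HG_*$.

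The one technical divergence is that the paper does \emph{not} unroll and apply Abel summation. Instead it introduces the shadow sequence $\vx_i^* := \vx^* - \eta\sum_{j\le i}\nabla f_{\sigma(j)}(\vx^*)$ (so $\|\vx_i^*-\vx^*\|\le\eta HG_*$ is immediate) and tracks $\|\vx_i^k-\vx_i^*\|^2$ via the Bregman three-point identity, exactly as in the proof of \Cref{thm:small-ub-scvx}. The Herding gain enters in a single line, $D_{f_{\sigma(i)}}(\vx_{i-1}^*,\vx^*)\le\frac{L}{2}\|\vx_{i-1}^*-\vx^*\|^2\le\frac{\eta^2 L}{2}(HG_*)^2$, yielding directly the per-iteration recursion $\|\vx_i^k-\vx_i^*\|^2\le(1-\eta\mu)\|\vx_{i-1}^k-\vx_{i-1}^*\|^2+\eta^3 LH^2G_*^2$, which is then unrolled over all $nK$ steps (not epoch-by-epoch). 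This sidesteps precisely the nonlinearity obstacle you flag in your last paragraph: because the comparison is to a \emph{moving} reference $\vx_i^*$ that already encodes the partial sums, there is no need to unroll the nonlinear contraction maps $T_i$ and regroup terms. Your Abel-summation route would be straightforward for quadratic $f_i$ but is more delicate in general; the shadow-sequence device is both simpler and more robust, and is worth adopting here.
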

Unlike \grab{} which dynamically adapts the permutation at each epoch based on the gradient observations, \Cref{thm:herding-at-opt} applies a fixed $\sigma$ consistently throughout entire epochs.
The permutation $\sigma$ is obtained by running \herding{} for the scaled component gradients at the global optimum $\vx^*$, ensuring $\max_{i \in [n]} \| \, \sum_{j=1}^i \nabla f_{\sigma(j)} (\vx^*) \, \| \leq HG_*$.
The proof of \Cref{thm:herding-at-opt} is presented in \Cref{subsec:herding-at-opt}.

By $L$-smoothness, it immediately follows that the function optimality gap is bounded as $F(\vx^K_n) - F(\vx^*) \lesssim \frac{H^2 L^2 G_*^2}{\mu^3 n^2 K^2}$.
We make two key observations regarding this result.
First, \citet{cha2023tighter} prove the lower bound rate of $\frac{LG^2}{\mu^2 n^2 K^2}$ applicable to arbitrary permutation-based SGD without any constraint on $K$.
This confirms that \Cref{thm:herding-at-opt} achieves optimal performance in terms of $n$ and $K$ among permutation-based SGD methods.
Second, this rate outperforms the rate of $\frac{G^2}{\mu n K}$ for with-replacement SGD \citep{liu2024revisiting} whenever $n \geq H^2 \kappa^2/K$.
In particular, even when $K \lesssim \kappa$, problems involving a large number of component functions with small input dimensions can still satisfy this condition.

To our knowledge, this is the first result showing that a permutation-based SGD method may outperform with-replacement SGD in the small epoch regime.
However, we identify two key limitations.
First, \Cref{thm:herding-at-opt} is not an implementable algorithm, as it requires prior knowledge of component gradients at $\vx^*$.
Second, the upper bound in \Cref{thm:herding-at-opt} and the lower bound established by \citet{cha2023tighter} still differ by a factor of $H^2 \kappa$.
An interesting future direction would be to design a practical permutation-based SGD method that tightly matches this lower bound.

We conclude by suggesting a setting where we can efficiently obtain this permutation. 
Suppose all component functions have the same Hessian so that $\nabla^2 f_i - \nabla^2 F \equiv 0$. 
Then, the gradient difference $\nabla f_i - \nabla F$ remains constant across the domain, leading to the following equation:
\begin{align*}
    \nabla f_i(\vx^*) = \nabla f_i(\vx^*) - \nabla F(\vx^*) = \nabla f_i(\vx_0) - \nabla F(\vx_0).
\end{align*}
In this scenario, we can use scaled gradient errors at the initialization $(\nabla f_i(\vx_0) - \nabla F(\vx_0))/G_*$, which can be efficiently obtained, as inputs to \herding{} to attain the desired permutation $\sigma$. 
Furthermore, the lower bound construction of \citet{cha2023tighter}, which achieves a rate of $\frac{LG^2}{\mu^2 n^2 K^2}$, also satisfies the identical Hessian assumption.
This confirms the algorithmically optimal convergence for this specific function class, up to a factor of $H^2 \kappa$ gap. 

\section{IGD in Large Epoch Regime}
\label{sec:large-epoch}
\smallertightmargin
We now shift focus to the \textit{large epoch regime}, where $K \gtrsim \kappa$.
We examine convergence under two distinct scenarios: (i)~each component is convex, and (ii)~some components may be nonconvex.
While the presence of nonconvex components significantly deteriorates convergence in the small epoch regime, we observe that this effect diminishes in the large epoch regime.

\subsection{Convergence with Component Convexity}
\label{subsec:large-convex}

We first focus on the case where all components are convex.

\begin{restatable}{theorem}{thmlargelbidhess}
\label{thm:large-lb-idhess}
For any $n \geq 2$, $\kappa \geq 2$, and $K \geq \kappa$, there exists a $3$-dimensional function $F$ satisfying \Cref{ass:common,ass:grad-generalized} with $P = 0$, 
where each component function shares the same Hessian,
along with an initialization point $\vx_0$, such that for any constant step size $\eta$, the final iterate obtained by running \Cref{alg:igd} satisfies
\tightmargin
\begin{align*}
    F(\vx_n^K) - F(\vx^*) \gtrsim \frac{LG^2}{\mu^2K^2}.
\end{align*}
\end{restatable}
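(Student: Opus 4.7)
The plan is to construct a quadratic finite-sum problem in $\mathbb{R}^3$ with a common diagonal Hessian whose three coordinates cover complementary ranges of stable step sizes. I would set $\mA = \mathrm{diag}(L, L, \mu)$ and $f_i(\vx) = \tfrac{1}{2}\vx^\top \mA \vx + \vb_i^\top \vx$ with $\vb_i = \tfrac{G}{\sqrt{2}}(a_i \ve_1 + c_i \ve_2)$, where $(a_i)$ is the half-positive-then-half-negative pattern (with a standard adjustment for odd $n$ so that $\sum_i a_i = 0$) and $(c_i)$ is an alternating $\pm 1$ pattern, also summing to zero. The common-Hessian property is immediate, $\|\nabla f_i(\vx) - \nabla F(\vx)\| = \|\vb_i\| \leq G$ verifies \Cref{ass:grad-generalized} with $P = 0$, and $\vx^* = \vzero$. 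I would initialize at $\vx_0 = D \ve_3$ with $D^2 = \Theta(LG^2/(\mu^3 K^2))$, so $\tfrac{\mu}{2} D^2$ is already of the target order.

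Since $\mA$ is diagonal and the $\vb_i$'s act only on coordinates $1, 2$, the \igd{} dynamics split into three independent scalar recursions. Writing $\alpha = 1 - \eta L$ and $\gamma = 1 - \eta \mu$, coordinate $3$ satisfies $x_3^{nK} = \gamma^{nK} D$, and for $j \in \{1,2\}$ the geometric series over the $K$ epochs collapses to $x_j^{nK} = -\tfrac{\eta G}{\sqrt{2}} \cdot \tfrac{1 - \alpha^{nK}}{1 - \alpha^n} \cdot S_j(\alpha)$, where $S_1(\alpha) = -(1-\alpha^{n/2})^2/(1-\alpha)$ for the half-half pattern and $S_2(\alpha) = \pm(1 - (-\alpha)^n)/(1+\alpha)$ for the alternating pattern (sign depending on the parity of $n$). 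I would assume throughout $\eta \in (0, 2/L)$, otherwise \igd{} diverges in coordinate $1$ or $2$ and the bound holds trivially; let $\beta := \eta L \in (0, 2)$.

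The lower bound then follows from an overlapping case split. (i) If $\beta \lesssim \kappa/(nK)$, then $\gamma^{nK} = (1-\beta/\kappa)^{nK}$ is bounded below by a positive constant, so $\tfrac{\mu}{2}(x_3^{nK})^2 \gtrsim \mu D^2 \gtrsim LG^2/(\mu^2 K^2)$ purely from the initial offset. (ii) If $\beta \in [\kappa/(nK),\, 2-c]$ for a small fixed constant $c$, the half-half pattern yields $|x_1^{nK}| \gtrsim (G/L)\min(n\beta,\, 1)$; combining $\beta \gtrsim \kappa/(nK)$ with $K \geq \kappa$ gives $\tfrac{L}{2}(x_1^{nK})^2 \gtrsim (G^2/L)\min(\kappa^2/K^2,\, 1) \gtrsim LG^2/(\mu^2 K^2)$. (iii) If $\beta \to 2^-$, the factor $1+\alpha = 2-\beta$ in the denominator of $S_2(\alpha)$ blows up the alternating contribution, so $\tfrac{L}{2}(x_2^{nK})^2 \gtrsim G^2 n^2 K^2/L$, which dwarfs the target whenever $K \geq \kappa$.

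\textbf{Main obstacle.} The principal technical difficulty is that no single bias pattern alone lower-bounds the error uniformly over $\beta \in (0, 2)$: the half-half pattern's epoch-level sum vanishes at $\beta \to 2$ when $n \equiv 0 \pmod 4$ (an exact cancellation within one \igd{} epoch at $\alpha = -1$), while the alternating pattern's contribution vanishes at $\beta \to 0$. Distributing the two patterns across orthogonal high-curvature coordinates with the $\tfrac{G}{\sqrt{2}}$ scaling (so that $\|\vb_i\| \leq G$ still holds) resolves this but demands a careful parity analysis of $n$ in the regime $\beta > 1$, where $\alpha < 0$ and $\alpha^{n/2}$ or $(-\alpha)^n$ depend on the parity of $n$ (and of $n/2$). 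A secondary hurdle is matching constants at the two transition boundaries $\beta \sim \kappa/(nK)$ and $\beta \sim 2 - O(1/K)$, where none of the three mechanisms is individually sharp, so that the combined lower bound carries a universal constant independent of $n$, $K$, and $\kappa$.
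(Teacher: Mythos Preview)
Your dimension-aggregation strategy and case split match the paper's, and cases (i)--(ii) are essentially correct. The gap is in case (iii): the alternating pattern does not rescue the construction when $n$ is odd and $K$ is even. For odd $n$, the one-epoch map on \emph{any} $L$-curvature coordinate is $x \mapsto \alpha^n x + C(\alpha)$ with $\alpha^n \to -1$ as $\alpha \to -1$; two epochs give $x \mapsto \alpha^{2n} x + (1+\alpha^n) C(\alpha) \to x$. Since you initialize both bias coordinates at $0$, the iterate after any even number of epochs therefore tends to $0$ as $\eta \to (2/L)^{-}$, and is \emph{exactly} $0$ at $\eta = 2/L$, regardless of which zero-mean bias pattern $(c_i)$ you pick --- the obstruction sits in the prefactor $\tfrac{1-\alpha^{nK}}{1-\alpha^n}$, not in $S_2(\alpha)$. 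Coordinate $3$ does not compensate: at $\eta = 2/L$ one has $\gamma^{nK} = (1-2/\kappa)^{nK} \le e^{-2n}$ whenever $K \ge \kappa$, so the total optimality gap is at most an $e^{-4n}$ multiple of the target, not a universal constant. (Your claim that \igd{} ``diverges'' at $\eta = 2/L$ is also incorrect on a coordinate initialized at $0$: at $\alpha = -1$ the orbit is bounded.)

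The paper sidesteps the entire parity issue with one change: it places the half-half bias on a coordinate of curvature $L' = L/2$ rather than $L$. Then $\eta L' < 1$ throughout $\eta < 2/L$, so $1-\eta L' \in (0,1)$ and the half-half analysis covers the whole moderate range $[\tfrac{1}{\mu n K}, \tfrac{2}{L})$ without any sign complications; no second bias pattern is needed. A separate \emph{bias-free} coordinate of curvature $L$, initialized away from the optimum, handles $\eta \ge 2/L$ by the trivial non-contraction $|1-\eta L| \ge 1$.
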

The detailed proof of the theorem is provided in \Cref{subsec:large-lb-idhess}.
As previously discussed in \Cref{thm:small-lb-idhess}, since the overall function is strongly convex and each component function shares the same Hessian, it follows that each component function is also $\mu$-strongly convex.
The previous best lower bound for \igd{} in this setting was $\frac{G^2}{\mu K^2}$ \citep{safran2020good}, and our result improves upon this by a factor of $\kappa$.
Also, when $K = \Theta(\kappa)$, this bound simplifies to $\frac{G^2}{L}$, thereby continuously interpolating the lower bound results in the small epoch regime (\Cref{thm:small-lb-idhess,thm:small-lb-sc,thm:small-lb-concave}).

Next, we present a complementary upper bound result, originally established in Theorem~4.6 of \citet{liu2024on}.
For consistency with the assumptions used throughout this paper, we restate it under slightly stronger assumptions.

\begin{restatable}[\citet{liu2024on}, Theorem 4.6]{proposition}{largeubavg}
\label{thm:large-ub-avg}
Let $n \geq 1$, $K\gtrsim \kappa$, and $\vx_0$ be the initialization point. Suppose $F$ is a function satisfying \Cref{ass:common,ass:grad-optimum} where each component function is convex.
Then, for any choice of permutation $\sigma_k$ in each epoch, the final iterate obtained by \Cref{alg:permutation} with the step size $\eta = \frac{1}{\mu n K} \max \bigset{\log \bigopen{\frac{\|\vx_0 - \vx^*\|^2 \mu^3 K^2}{LG^2 (1 + \log K)}}, 1}$ satisfies
\tightmargin
\begin{align*}
    F(\vx_n^K) - F(\vx^*) \lesssim \frac{LG_*^2}{\mu^2K^2}.
\end{align*}
\end{restatable}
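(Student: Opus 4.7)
Since this proposition restates Theorem 4.6 of \citet{liu2024on}, with the uniform gradient assumption replaced by the weaker bounded-gradients-at-optimum condition (\Cref{ass:grad-optimum}), my plan is to adapt their proof to this setting. The overall strategy is to establish a per-epoch recursion of the form
\begin{align*}
    \norm{\vx_n^{k+1} - \vx^*}^2 \leq (1 - c\eta n \mu) \norm{\vx_n^k - \vx^*}^2 + C \eta^3 n^2 L G_*^2,
\end{align*}
then unroll it over $K$ epochs and tune $\eta$ with the prescribed logarithmic factor to convert the exponential decay of the initial error into a polynomially small quantity, leaving a residual of order $\tilde{\gO}(1/K^2)$. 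The logarithmic term in the step size is precisely what drives the initial gap $\norm{\vx_0 - \vx^*}^2$ down to the scale of the residual.

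The main work is within a single epoch. Using $\frac{1}{n}\sum_i \nabla f_{\sigma_k(i)}(\vx_0^k) = \nabla F(\vx_0^k)$, I decompose
\begin{align*}
    \vx_n^k = \vx_0^k - \eta n \nabla F(\vx_0^k) - \eta \sum_{i=1}^n \bigopen{\nabla f_{\sigma_k(i)}(\vx_{i-1}^k) - \nabla f_{\sigma_k(i)}(\vx_0^k)}.
\end{align*}
The first part is a full gradient descent step on $F$, which contracts $\norm{\cdot - \vx^*}^2$ by $(1 - \eta n \mu)$ for a suitably small step size by $\mu$-strong convexity of $F$. For the deviation term, I bound $\norm{\vx_{i-1}^k - \vx_0^k}$ via a telescoping argument and the bound $\norm{\nabla f_i(\vx)} \leq G_* + L \norm{\vx - \vx^*}$, which follows from $L$-smoothness and \Cref{ass:grad-optimum}. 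Exploiting component convexity through co-coercivity of $\nabla f_i$ is what keeps one factor of $L$ out of the accumulated error, so that applying $L$-smoothness to the deviation term gives $\mathcal{E}_k \lesssim \eta^3 n^2 L G_*^2$ rather than the $\eta^3 n^3 L^2 G_*^2$ one would get from a naive Lipschitz bound.

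Unrolling the recursion over $K$ epochs produces
\begin{align*}
    \norm{\vx_n^K - \vx^*}^2 \leq (1 - c\eta n \mu)^K \norm{\vx_0 - \vx^*}^2 + \frac{C \eta^2 n L G_*^2}{c \mu},
\end{align*}
and substituting $\eta = \tilde{\Theta}(1/(\mu n K))$ with the exact logarithmic prefactor from the statement forces the first term below the second, leaving a distance-squared bound of $\tilde{\gO}(L G_*^2/(\mu^3 K^2))$. Converting to the function gap via the sharper function-value-based accounting of \citet{liu2024on} (which avoids the naive smoothness conversion that would cost an extra factor of $\kappa$) yields the claimed bound $F(\vx_n^K) - F(\vx^*) \lesssim \frac{L G_*^2}{\mu^2 K^2}$. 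The hardest step will be the within-epoch drift bound: obtaining a residual sharp enough to carry only one factor of $L$ is precisely where component convexity is essential, and is what separates this rate from the $\tilde{\gO}(L^2 G_*^2/(\mu^3 K^2))$ guarantee of \citet{mishchenko2020random} for strongly convex components.
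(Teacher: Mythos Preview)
The paper does not give its own proof of this proposition; it is presented purely as a restatement of Theorem~4.6 of \citet{liu2024on} (``we restate it under slightly stronger assumptions''), and no derivation appears in the appendix. So there is no in-paper argument to compare against directly.

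That said, your sketch has a genuine gap at the step you yourself flag as ``the hardest.'' You run a distance-squared recursion to obtain $\norm{\vx_n^K - \vx^*}^2 \lesssim L G_*^2/(\mu^3 K^2)$ and then appeal to ``the sharper function-value-based accounting of \citet{liu2024on}'' to pass to $F(\vx_n^K) - F(\vx^*) \lesssim L G_*^2/(\mu^2 K^2)$. But there is no such post-hoc conversion: from a distance bound the only one-sided inequality available is $F(\vx)-F(\vx^*)\le \tfrac{L}{2}\norm{\vx-\vx^*}^2$, which costs exactly the extra factor of $\kappa$ you are trying to avoid; strong convexity points the wrong way. The $\kappa$-saving in \citet{liu2024on} does not come from converting a distance bound after the fact---their last-iterate analysis tracks function-value quantities throughout (this is why the step-size formula in the statement carries the $(1+\log K)$ factor, a hallmark of last-iterate function-value arguments). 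As written, your plan reproduces the $L^2 G_*^2/(\mu^3 K^2)$ rate (cf.\ \Cref{thm:small-ub-scvx} or \Cref{thm:large-ub-generalizedgrad}), not the claimed $L G_*^2/(\mu^2 K^2)$.

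A secondary point: the per-epoch residual $C\eta^3 n^2 L G_*^2$ you assert is a factor of $n$ smaller than what the decomposition you describe actually delivers for \emph{arbitrary} permutations. The within-epoch drift satisfies $\norm{\vx_{i-1}^k-\vx_0^k}=\Theta(\eta i G_*)$ in the worst case, so the accumulated deviation $\eta\sum_i\norm{\nabla f_{\sigma_k(i)}(\vx_{i-1}^k)-\nabla f_{\sigma_k(i)}(\vx_0^k)}$ is of order $\eta^2 n^2 L G_*$, and the resulting residual in the squared-distance recursion is of order $\eta^3 n^3 L G_*^2$, not $\eta^3 n^2 L G_*^2$. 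Co-coercivity of convex components saves one factor of $L$ relative to a naive Lipschitz bound, but it does not save a factor of $n$ for deterministic orderings; that $n$-saving is specific to random reshuffling. With the corrected residual your distance bound becomes $\norm{\vx_n^K-\vx^*}^2 \lesssim \eta^2 n^2 L G_*^2/\mu \approx L G_*^2/(\mu^3 K^2)$, matching \Cref{thm:small-ub-scvx}, and the conversion problem above remains.
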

We observe that the lower bound in \Cref{thm:large-lb-idhess} and the upper bound in \Cref{thm:large-ub-avg} match exactly, up to polylogarithmic factors.
The component functions for the lower bound satisfy strictly stronger assumptions than those required for the upper bound.
Unlike upper bounds where stronger assumptions may improve the convergence rate, fulfilling stronger assumptions in lower bound analyses rather strengthens the result of the bound.
Thus, \Cref{thm:large-lb-idhess} remains a valid lower bound matching \Cref{thm:large-ub-avg}.

\subsection{Convergence without Component Convexity}
\label{subsec:large-withoutconvex}

In this section, we investigate the case where the assumption of component convexity is removed.
Our next theorem, \Cref{thm:large-lb-concave}, establishes a lower bound for \igd{}, quantifying the degradation in convergence rate when nonconvex components are included in the large epoch setting.

\begin{restatable}{theorem}{thmlargelbconcave}
\label{thm:large-lb-concave}
For any $n \geq 4$, $\kappa \geq n$, and $K \geq \max \bigset{\kappa^3/n^2, \, \kappa^{3/2}}$, there exists a $4$-dimensional function $F$ satisfying \Cref{ass:common,ass:grad-generalized} with $P = \kappa$, along with an initialization point $\vx_0$, such that for any constant step size $\eta$, the final iterate obtained by running \Cref{alg:igd} satisfies
\tightmargin
\begin{align*}
    F(\vx_n^K) - F(\vx^*) \gtrsim \frac{L^2G^2}{\mu^3K^2}.
\end{align*}
\end{restatable}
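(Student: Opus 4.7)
The plan is to construct a $4$-dimensional hard instance that combines the rotational-symmetry idea behind \Cref{thm:small-lb-sc} with the concave-amplification mechanism used in \Cref{thm:small-lb-concave}, and then to analyze the IGD trajectory in the large-$K$ asymptotic regime. I would split $\R^4 = \R^2 \oplus \R^2$, placing each component's gradient-at-optimum contribution in the first block so that the per-epoch cumulative drift $\eta \sum_{i=1}^n \nabla f_i(\vx^*)$ grows linearly before cancelling at the end of an epoch, exactly as in the convex-component lower bound \Cref{thm:large-lb-idhess}. The key new ingredient is a concave quadratic in the second block, added to each $f_i$ with carefully chosen signs so that $\sum_i h_i \equiv 0$, $F$ remains $\mu$-strongly convex, and every $f_i$ stays $L$-smooth, while a single epoch's concave expansion amplifies the sequential drift in the first block by an additional factor of $\sqrt{\kappa}$ before the strongly convex average pulls the iterate back.

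\textbf{Key steps.} First, I would fix each component as $f_i(\vx) = q_i(\vx_{1:2}) + h_i(\vx_{3:4})$ with $q_i$ an $n$-polygon-rotated strongly convex quadratic contributing the $G$-scale drift, and $h_i$ a rank-one concave perturbation of curvature $\Theta(L)$ tuned so \Cref{ass:grad-generalized} holds with $P = \kappa$. Second, I would choose a special initialization that respects rotational symmetry, reducing one epoch of IGD to a linear map whose eigenvalues are approximately $(1 - \Theta(\eta \mu n))$ in the convex directions and $(1 + \Theta(\eta L))^n \approx e^{\Theta(\eta L n)}$ in the concave direction, plus an additive bias of magnitude $\Theta(\eta \sqrt{\kappa}\, n G)$ from the coupling. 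Third, I would unroll $K$ epochs via a geometric series: the exponential decay $e^{-\Theta(\eta \mu n K)}$ erases the initial condition, leaving a steady-state residual $\Vert \vx_n^K - \vx^* \Vert \gtrsim \eta \sqrt{\kappa}\, G / \mu$, which after converting to the function gap yields $F(\vx_n^K) - F(\vx^*) \gtrsim \eta^2 L \kappa G^2 / \mu^2$. Finally, I would optimize over $\eta$: the optimal choice is forced to $\eta = \Theta(1/(\mu n K))$ because anything smaller leaves the exponential term dominant, and anything larger violates the stability range $\eta \lesssim 1/(nL)$ needed to linearize the concave block; substituting this yields the claimed $\Omega(L^2 G^2 / (\mu^3 K^2))$ bound.

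\textbf{Main obstacle.} The hardest part will be engineering the concave block so three constraints hold simultaneously: (a) $F$ is $\mu$-strongly convex and each $f_i$ is $L$-smooth; (b) the per-epoch amplification from the concave direction is exactly $\sqrt{\kappa}$ in the $\Vert \cdot \Vert^2$ sense, rather than larger (which would break smoothness) or smaller (which would collapse to the convex-component rate of \Cref{thm:large-lb-idhess}); and (c) the bounded-gradient-error assumption is met with $P = \kappa$ and the same $G$ that appears in the final bound. The epoch constraints $\kappa \geq n$ and $K \geq \max\{\kappa^3/n^2, \kappa^{3/2}\}$ should emerge naturally as the regime in which the chosen $\eta = \Theta(1/(\mu n K))$ is simultaneously small enough for the epoch-level linearization of the concave quadratic to be accurate, large enough for the exponential transient $(1 - \Theta(\eta \mu n))^K$ to be dominated by the polynomial residual, and consistent with the stability bound $\eta \leq 1/(nL)$; reconciling these three inequalities is what I expect to consume most of the bookkeeping effort, together with verifying that the rotational invariance used to reduce the analysis to a scalar recursion is actually preserved by IGD in the presence of the concave perturbations.
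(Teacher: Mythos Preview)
Your proposal takes a substantially different route from the paper and, as written, has a real gap.

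\textbf{What the paper actually does.} There is no rotational symmetry and no coupling between blocks. The paper aggregates four \emph{independent} $1$-dimensional functions, one per step-size range. The only coordinate that needs nonconvex components is the one handling $\eta\in[\tfrac{1}{\mu nK},\tfrac{1}{nL})$: there $F_2(x)=\tfrac{\mu}{2}x^2$ is assembled from four equal groups of components, applied in the order $Gx,\ \tfrac{L}{2}x^2,\ -Gx,\ -\tfrac{L-4\mu}{2}x^2$. One epoch is thus: drift by $\tfrac{\eta nG}{4}$, contract by $p=(1-\eta L)^{n/4}$, drift back, expand by $q=(1+\eta(L-4\mu))^{n/4}$. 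The point is that the concave block nearly cancels the convex one, so the per-epoch contraction is $pq\approx 1-\Theta(\eta n\mu)$ rather than $1-\Theta(\eta nL)$. The geometric series for the per-epoch bias then settles at $\tfrac{1-p}{1-pq}\cdot\Theta(\eta nG)\gtrsim \tfrac{LG}{\mu^2K}$, and since $F_2$ has curvature $\mu$ this gives $\tfrac{L^2G^2}{\mu^3K^2}$. The constraint $K\ge\kappa^3/n^2$ arises in the subregime $\eta\in[\tfrac{\mu}{L^2},\tfrac{1}{nL})$; the constraint $K\ge\kappa^{3/2}$ is what lets the purely convex coordinate $F_3$ (identical to the one in \Cref{thm:large-lb-idhess}) already meet the target rate for $\eta\in[\tfrac{1}{nL},\tfrac{2}{L})$.

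\textbf{Where your plan breaks.} First, it is internally inconsistent: you write $f_i(\vx)=q_i(\vx_{1:2})+h_i(\vx_{3:4})$, which is block-separable, yet rely on ``coupling'' to transport the $G$-scale drift into the concave direction. With no cross term, concave expansion in $\vx_{3:4}$ cannot amplify a bias living entirely in $\vx_{1:2}$. Second, the arithmetic does not close: if the per-epoch bias is $\Theta(\eta\sqrt{\kappa}\,nG)$ and the ratio is $1-\Theta(\eta\mu n)$, the steady-state distance is $\Theta(\sqrt{\kappa}\,G/\mu)$, independent of $\eta$, contradicting your stated residual $\eta\sqrt{\kappa}\,G/\mu$; neither version, evaluated at $\eta=\Theta(1/(\mu nK))$, produces $L^2G^2/(\mu^3K^2)$ with the right $n$- and $\mu$-dependence. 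Third, the rotational construction from \Cref{thm:small-lb-sc} is tuned for the small-epoch regime; its orbit radius scales like $\min\{1,\kappa/K^2\}\cdot G/\mu$ and is already too small once $K\gtrsim\kappa$. The mechanism you should aim for is the paper's: stay in one dimension and use the concave components not to \emph{amplify} the bias but to \emph{slow} the per-epoch contraction from $\Theta(\eta nL)$ to $\Theta(\eta n\mu)$, which inflates the fixed point by exactly the missing factor of $\kappa$.
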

\tightmargin
The proof of \Cref{thm:large-lb-concave} is presented in \Cref{subsec:large-lb-concave}. 
Additional assumptions on $n$ and $K$ are introduced for technical reasons.
Since the construction in \Cref{thm:large-lb-concave} involves nonconvex components, \Cref{thm:large-ub-avg} is no longer applicable for direct comparison.
\Cref{thm:large-ub-generalizedgrad} addresses this by providing an upper bound allowing nonconvex component functions for arbitrary permutation-based SGD.

\begin{restatable}{theorem}{thmlargeubgeneralizedgrad}
\label{thm:large-ub-generalizedgrad}
Let $n \geq 1$, $K \gtrsim (1+P)\kappa$, and $\vx_0$ be the  initialization point.
Suppose $F$ is a function satisfying \Cref{ass:common,ass:grad-generalized}. 
Then, for any choice of permutation $\sigma_k$ in each epoch, the final iterate $\vx_n^K$ obtained by \Cref{alg:permutation} with a step size $\eta = \frac{2}{\mu n K} \max \bigset{\log\bigopen{\frac{(F(\vx_0) - F(\vx^*))\mu^3K^2}{L^2G^2}}, 1}$ satisfies
\tightmargin
\begin{align*}
    F(\vx_n^K) - F(\vx^*) \lesssim \frac{L^2G^2}{\mu^3K^2}.
\end{align*}
\end{restatable}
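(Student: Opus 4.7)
The strategy is a standard per-epoch contraction plus unrolling scheme, but executed carefully enough that the generalized gradient assumption (with the extra $P\|\nabla F(\vx)\|$ term) still fits. Let $\vx_0^k$ denote the iterate at the start of epoch $k$ and $\vr_k := (\vx_n^k - \vx_0^k) + \eta n\nabla F(\vx_0^k)$ the deviation of one epoch from a single gradient step on $F$ with effective step size $\eta n$. I will (i) bound the within-epoch drift $\|\vx_i^k - \vx_0^k\|$; (ii) use that bound to control $\|\vr_k\|$; (iii) plug into the smoothness inequality for $F$ at $\vx_0^k$ to obtain a per-epoch contraction of the form $F(\vx_n^k) - F(\vx^*) \le (1 - c\eta n\mu)(F(\vx_0^k) - F(\vx^*)) + C\eta^3 n^3 L^2 G^2$; (iv) unroll over $K$ epochs and choose $\eta$ logarithmically in $1/(\mu n K)$ to balance the exponential and polynomial terms.

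\textbf{Per-epoch drift bound.} Writing $\vx_i^k - \vx_0^k = -\eta\sum_{j=1}^{i}\nabla f_{\sigma_k(j)}(\vx_{j-1}^k)$ and using $L$-smoothness of $f_{\sigma_k(j)}$ together with Assumption~\ref{ass:grad-generalized} at $\vx_0^k$, each summand splits as $\|\nabla f_{\sigma_k(j)}(\vx_{j-1}^k)\| \le L\|\vx_{j-1}^k - \vx_0^k\| + (1+P)\|\nabla F(\vx_0^k)\| + G$. A standard discrete Gr\"onwall/induction argument, valid whenever $\eta n L$ is at most a small constant, yields
\[
\|\vx_i^k - \vx_0^k\| \lesssim \eta n\bigl[(1+P)\|\nabla F(\vx_0^k)\| + G\bigr] \quad \text{for all } i \in [n].
\]
Summing the Lipschitz errors $\|\nabla f_{\sigma_k(j)}(\vx_{j-1}^k) - \nabla f_{\sigma_k(j)}(\vx_0^k)\|$ then gives $\|\vr_k\|^2 \lesssim \eta^4 n^4 L^2\bigl[(1+P)^2\|\nabla F(\vx_0^k)\|^2 + G^2\bigr]$.

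\textbf{Per-epoch contraction.} From $L$-smoothness of $F$ at $\vx_0^k$ and the decomposition $\vx_n^k - \vx_0^k = -\eta n\nabla F(\vx_0^k) + \vr_k$, standard Young inequalities produce
\[
F(\vx_n^k) \le F(\vx_0^k) - \tfrac{\eta n}{4}\|\nabla F(\vx_0^k)\|^2 + \bigl(\tfrac{1}{2\eta n} + L\bigr)\|\vr_k\|^2,
\]
provided $\eta n L \lesssim 1$. Substituting the bound on $\|\vr_k\|^2$ produces two error contributions proportional to $\eta^3 n^3 L^2(1+P)^2\|\nabla F(\vx_0^k)\|^2$ and $\eta^3 n^3 L^2 G^2$. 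The first is absorbed into the $\|\nabla F(\vx_0^k)\|^2$ descent term as soon as $\eta n L(1+P) \lesssim 1$, which is exactly what the hypothesis $K \gtrsim (1+P)\kappa$ (up to logs) will guarantee after the step-size choice. Using $\|\nabla F(\vx_0^k)\|^2 \ge 2\mu(F(\vx_0^k) - F(\vx^*))$ from $\mu$-strong convexity yields the one-epoch recursion $F(\vx_n^k) - F(\vx^*) \le (1 - c\eta n\mu)(F(\vx_0^k) - F(\vx^*)) + C\eta^3 n^3 L^2 G^2$.

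\textbf{Unrolling and step size.} Iterating the recursion over $K$ epochs gives
\[
F(\vx_n^K) - F(\vx^*) \lesssim (1 - c\eta n\mu)^K (F(\vx_0) - F(\vx^*)) + \tfrac{\eta^2 n^2 L^2 G^2}{\mu}.
\]
Choosing $\eta = \tfrac{2}{\mu n K}\max\{\log(\cdot), 1\}$ as in the statement reduces the exponential term to at most polylogarithmic factors times $L^2 G^2/(\mu^3 K^2)$, and makes the polynomial term $\lesssim L^2 G^2/(\mu^3 K^2)$, delivering the claimed rate. The smallness conditions $\eta n L \lesssim 1$ and $\eta n L(1+P) \lesssim 1$ translate to $K \gtrsim \kappa\log(\cdot)$ and $K \gtrsim (1+P)\kappa\log(\cdot)$, which is precisely the hypothesis $K \gtrsim (1+P)\kappa$.

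\textbf{Main obstacle.} The delicate step is handling the $(1+P)\|\nabla F(\vx_0^k)\|$ piece of the drift: without absorbing the resulting $\eta^3 n^3 L^2(1+P)^2\|\nabla F(\vx_0^k)\|^2$ error into the descent term, the recursion would fail to contract and nonconvex components could cause blow-up. This is exactly where the $(1+P)$ factor in the epoch requirement comes from; everything else is a careful but routine combination of smoothness, strong convexity, and Young's inequality.
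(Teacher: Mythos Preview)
Your proposal is correct and follows essentially the same approach as the paper: bound the within-epoch drift using smoothness plus Assumption~\ref{ass:grad-generalized}, absorb the $(1+P)^2\|\nabla F(\vx_0^k)\|^2$ portion of the drift error into the descent term via the condition $\eta n L(1+P)\lesssim 1$, apply the PL inequality from strong convexity, and unroll with the logarithmic step-size choice. The paper uses the polarization identity and a squared-sum self-bounding argument where you use Young's inequality and a Gr\"onwall-style induction, but these are interchangeable variants of the same standard mechanism.
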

The proof of \Cref{thm:large-ub-generalizedgrad} is in \Cref{subsec:large-ub-generalizedgrad}.
This upper bound aligns with the lower bound in \Cref{thm:large-lb-concave}, differing only by polylogarithmic factors, when the objective is sufficiently ill-conditioned and the number of epochs $K$ is sufficiently large, specifically $K \gtrsim \max \bigset{\kappa^3/n^2, \kappa^2}$.

Importantly,  the convergence rate in this setting degrades by only a factor of $\kappa$ compared to the convex components case.
These results highlight an \textit{intriguing} behavior of \igd{}: allowing nonconvex components significantly degrades convergence in the small epoch regime; however, this slowdown is much less severe in the large epoch regime.

Similar to the small epoch case, the lower bounds in the large epoch regime can also be expressed in terms of the distance to the optimum.
Specifically, the lower bounds on $\norm{\vx_n^K - \vx^*}$ are $\frac{G}{\mu K}$ and $\frac{LG}{\mu^2 K}$ (for $K \ge \max \{\kappa^3/n^2, \kappa^2\}$) for \Cref{thm:large-lb-idhess,thm:large-lb-concave}, respectively.

\subsection{Comparison with Other Methods}
\label{subsec:large-comparison}

Here, we provide a detailed comparison of the convergence rates across different permutation-based SGD methods.

\tightparagraph{Random Reshuffling.}
In the large epoch regime, \citet{liu2024on} show that \randr{} achieves an upper bound of $\frac{L G_*^2}{\mu^2 n K^2}$, while \citet{cha2023tighter} establish a tight matching lower bound under the same setting.
Both results assume that the component functions are convex.
This implies that in settings where all component functions are convex, \randr{} outperforms \igd{} by a factor of $n$ in terms of convergence rate in the large epoch regime.

\tightparagraph{Optimal Permutation-based SGD.}
\citet{lu2022grab} demonstrate that \grab{} achieves an upper bound of $\frac{H^2 L^2 G^2}{\mu^3 n^2 K^2}$, where $H$ is a constant that scales as $\sqrt{d}$ (\Cref{lem:herd}).
Similarly, \citet{cha2023tighter} establish a lower bound of $\frac{L^2 G^2}{\mu^3 n^2 K^2}$, for any permutation strategy over $K$ epochs, assuming sufficiently ill-conditioned problems and a large number of epochs.
Both results are derived without assuming component convexity.
Together, these results indicate that, when nonconvex components exist and $d$ is fixed, the optimal convergence rate for permutation-based SGD in the large epoch regime is $\frac{L^2 G^2}{\mu^3 n^2 K^2}$.
This implies that in settings where some components are nonconvex, \igd{} converges at a rate slower than optimal permutation-based SGD by a factor of $n^2$.

\section{Conclusion}
We provide a detailed analysis of \igd{} across both small and large epoch regimes, considering various assumptions on the component functions.
Our results show that, unlike in the large epoch regime, even when the component functions are strongly convex, the convergence can be significantly slow. 
Furthermore, the presence of nonconvex components exacerbates this slowdown exponentially.
We also demonstrate the existence of a permutation-based SGD method that allows faster convergence in the small epoch regime.

Finally, we highlight two promising directions for future work.
The first is to establish a tight convergence bound for \randr{} in the small epoch regime, similar to our analysis for \igd{} in \Cref{sec:small-epoch}.
We discuss the current state of research and the key challenges in this direction in \Cref{subsec:appendA_rrstatus}.
The second is to develop an efficient and practical permutation-based SGD method that enjoys provable fast convergence in this regime.

\section*{Acknowledgements}
This work was partly supported by a National Research Foundation of Korea (NRF) grant funded by the Korean government (MSIT) (No.\ RS-2023-00211352) and an Institute for Information \& communications Technology Planning \& Evaluation (IITP) grant funded by the Korean government (MSIT) (No.\ RS-2019-II190075, Artificial Intelligence Graduate School Program (KAIST)). CY acknowledges support from a grant funded by Samsung Electronics Co., Ltd.

\section*{Impact Statement}

This paper aims to advance the theoretical understanding of convex optimization in machine learning. 
While optimization methods have broad applications, we do not foresee any specific ethical concerns or societal implications arising directly from this work.

\bibliography{bibliography}
\bibliographystyle{icml2025}

\newpage
\appendix
\onecolumn
\allowdisplaybreaks
\section{Supplementary Details}
\label{appendix:supplementary}

In this section, we provide additional details omitted from the main text.

\subsection{Visualization of Upper and Lower Bounds}
\label{subsec:appendA_plot}
We begin by presenting a plot that summarizes our theoretical findings.
Theorems~\ref{thm:small-lb-idhess} to \ref{thm:small-lb-concave} (and the result from \citet{mishchenko2020random}) apply to the small epoch regime ($K \lesssim \kappa$), and Theorems~\ref{thm:large-lb-idhess} to \ref{thm:large-ub-generalizedgrad} (and the result from \citet{liu2024on}) apply to the large epoch regime ($K \gtrsim \kappa$).
In the figure, solid lines indicate upper bounds and dash-dot lines represent lower bounds.
Each color represents a pair of upper and lower bounds derived under similar assumptions---what we refer to as \emph{matching bounds}.
The vertical line at $K = \kappa$ marks the transition between the small and large epoch regimes.
Both axes are log-scaled for better visualization of rate differences.

\tightmargin
\tightmargin
\begin{figure}[ht]
    \centering
    \includegraphics[width=\linewidth]{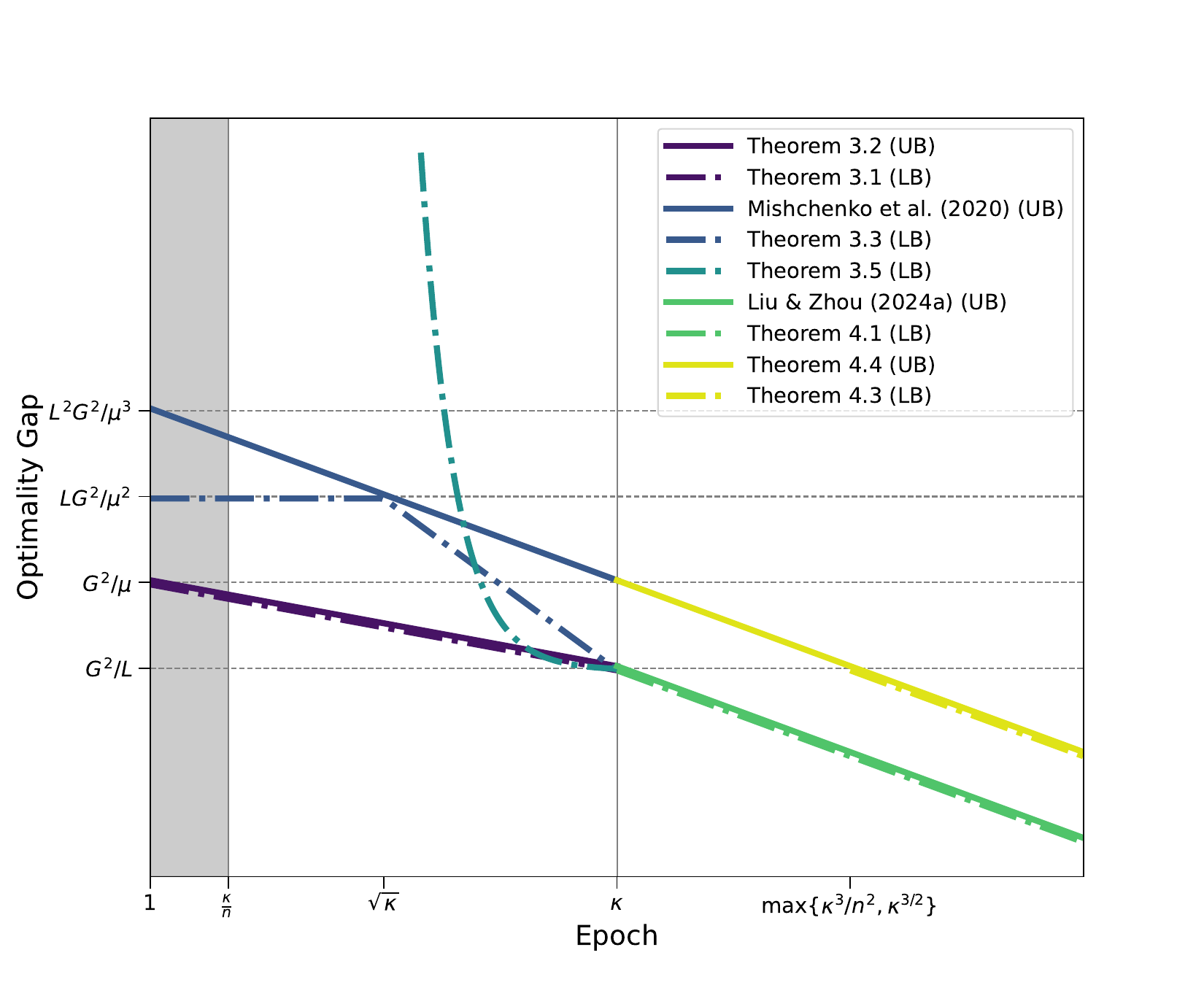}
    \tightmargin
    \tightmargin
    \caption{Visualization of the bounds in \Cref{tab:theorem-comparison}.
    Both axes are log-scaled.
    Upper bounds (UB) are represented using a solid line, and lower bounds (LB) are depicted with a dash-dot line. 
    The small and large epoch results are combined into a single figure with a separation by the vertical line $K = \kappa$. 
    Upper bound results for the small epoch regime only hold under $K \gtrsim \kappa/n$, while lower bound results hold for $K$ greater than some constant.}
    \label{fig:theory_plot}
\end{figure}
\tightmargin
\tightmargin

\subsection{Pseudeocode of \igd{}}
\label{subsec:appendA_pseudeocode}

Next, we provide the pseudocode for \igd{} as \Cref{alg:igd}.
\begin{algorithm}[H]
   \caption{Incremental Gradient Descent}
   \label{alg:igd}
\begin{algorithmic}
   \STATE {\bfseries Input:} Initial point $\vx_0$, Step size $\eta$, Number of epochs $K$
   \STATE Initialize $\vx_0^1 = \vx_0$
   \FOR{$k=1$ to $K$}
   \FOR{$i=1$ {\bfseries to} $n$}
   \STATE $\vx^k_i = \vx^k_{i-1} - \eta \nabla f_i(\vx^k_{i-1})$
   \ENDFOR
   \STATE $\vx^{k+1}_0 = \vx^k_n$
   \ENDFOR
   \STATE {\bfseries Output:} $\vx_n^K$
\end{algorithmic}
\end{algorithm}

\subsection{Connecting \igd{} Lower Bounds with Upper Bounds of General Permutation-Based SGD}
\label{subsec:appendA_minimaxexplanation}

In this paper, we derive upper bound results for \textit{arbitrary} permutation-based SGD, and the lower bound results for \igd{}.
To clarify the connection between these results, we explain why the lower bounds derived for \igd{} are relevant to the upper bounds for arbitrary permutation-based SGD.
Intuitively, deriving the upper bound for arbitrary permutation-based SGD can be viewed as bounding the following inf-sup problem from above:
\begin{align}
    \inf_{\text{step size } \eta} \sup_{\substack{\text{function } F(\vx) \\ \text{permutation } \{\sigma_k\}_{k=1}^K}} F(\vx_n^K) - F(\vx^*). \label{eq:persgdub}
\end{align}
On the other hand, the corresponding lower bound is one that bounds the following sup-inf problem from below:
\begin{align}
    \sup_{\substack{\text{function } F(\vx) \\ \text{permutation } \{\sigma_k\}_{k=1}^K}} \inf_{\text{step size } \eta} F(\vx_n^K) - F(\vx^*). \label{eq:persgdlb}
\end{align}
Notably, in the lower bound formulation, the permutations $\{\sigma_k\}_{k=1}^K$ appear in the supremum term.
This implies that the lower bound for \igd{}, which can be formulated as:
\begin{align}
    \sup_{\text{function } F(\vx)} \inf_{\text{step size } \eta} F(\vx_n^K) - F(\vx^*), \label{eq:lgdlb}
\end{align}
where every $\sigma_k$ is an identity mapping, is at most \cref{eq:persgdlb}.
Therefore, our lower bound results, derived specifically for \igd{}, also provide valid lower bounds for the upper bound results established for any permutation-based SGD.

To further clarify, we compare it with the work of \citet{lu2022grab}.
In \citet{lu2022grab}, the authors introduce a permutation-based SGD algorithm called \grab{} that provably converges faster by carefully selecting permutations at each epoch.
This problem can be formulated as bounding the following inf-sup problem:
\begin{align}
    \inf_{\substack{\text{step size } \eta\\ \text{permutation } \{\sigma_k\}_{k=1}^K}} \sup_{\text{function } F(\vx)} F(\vx_n^K) - F(\vx^*). \label{eq:grabub}
\end{align}

In addition, \citet{cha2023tighter} proves that \grab{} is an optimal permutation-based SGD by providing a lower bound that holds for every possible combination of permutations over $K$ epochs:
\begin{align}
    \sup_{\text{function } F(\vx)} \inf_{\substack{\text{step size } \eta\\ \text{permutation } \{\sigma_k\}_{k=1}^K}}  F(\vx_n^K) - F(\vx^*). \label{eq:grablb}
\end{align}

Clearly, \cref{eq:grabub} and \cref{eq:grablb} are smaller than \cref{eq:persgdub} and \cref{eq:persgdlb}, respectively.

\subsection{Status and Open Challenges in Establishing Tight Bounds for \randr{} in the Small Epoch Regime}
\label{subsec:appendA_rrstatus}

We begin by summarizing the current state of research on \randr{} in the small epoch regime. To the best of our knowledge, there are two noteworthy results (under the assumption that the overall function is strongly convex and each component is smooth):

\begin{enumerate}
    \item \citep{mishchenko2020random}: When all component functions are also strongly convex, an upper bound of $\tilde{\gO}(\frac{L^2}{\mu^3 n K^2})$ is provided.
    \item \citep{safran2021random}: When all component functions are quadratic and their Hessians commute, a tight convergence rate of $\Theta(\frac{1}{\mu n K})$ is established.
\end{enumerate}

Unlike scenario (2) where the authors provide matching UB and LB (up to polylogarithmic factor), the lower bound in scenario (1) is unknown, and it remains open whether the rate $\tilde{\gO}(\frac{L^2}{\mu^3 n K^2})$ can be improved or not.

Given this context, there are two clear directions for future exploration in small epoch \randr{} literature:

\begin{itemize}
    \item \textbf{Upper Bound Direction.} Improve the existing bound of $\tilde{\gO}(\frac{L^2}{\mu^3 n K^2})$ under the strongly convex component assumption, or derive new bounds under weaker assumptions (e.g., convexity, or even without convexity).
    \item \textbf{Lower Bound Direction.} Develop a matching lower bound (under the strongly convex component case) to close the gap with the existing upper bound $\tilde{\gO}(\frac{L^2}{\mu^3 n K^2})$.
\end{itemize}

The primary challenge on the upper bound side is that deriving new upper bounds in the small epoch regime appears to require sophisticated analytical techniques (due to challenges discussed in \Cref{subsec:ubchallenge}). 
As can be found in \citet{safran2021random}, even the proof for 1D quadratic is highly technical. 
One promising technique we explored is from \citet{koloskova2024on}. In contrast to traditional analyses that group updates within a single epoch (i.e., chunks of size $n$), this method groups updates into chunks of size $\tau:=1/\eta L$. 
While this chunk-based approach can be successfully applied to derive upper bounds for \igd{}, it becomes problematic for \randr{}.
Specifically, when the chunk size $\tau$ does not align neatly within epochs, handling the dependencies between iterates becomes extremely difficult.

Regarding the lower bound direction, we believe any progress beyond current results will likely require more complicated constructions that go beyond simple quadratic functions. 
This is because for simple quadratic functions where the Hessians commute with each other (e.g., $f_i(x_1, x_2) = \frac{L}{2}x_1^2 + a_i x_1 + \frac{\mu}{2}x_2^2 + b_i x_2$), the tight rate of $\Theta(\frac{1}{\mu n K})$ is already established by \citet{safran2021random}. Therefore, to surpass the existing LB barrier $\Omega(\frac{1}{\mu n K})$, future constructions must involve quadratic functions with non-commuting Hessians or even non-quadratic functions, necessitating more advanced analytical techniques. 
While our own lower bound construction in \Cref{thm:small-lb-sc} is based on quadratic functions with non-commuting Hessians, it is tailored to \igd{}, and we do not see a clear way to extend this idea to \randr{}.
\newpage
\section{Proofs for Small Epoch Lower Bounds}
\label{appendix:small-epoch-lb}

In this section, we present the detailed proofs for \Cref{thm:small-lb-idhess,thm:small-lb-sc,thm:small-lb-concave} which are the lower bound results in the small epoch regime.
To establish these results, we construct a specific function $F$ that achieves the stated lower bound for each theorem.
We note that constructing a lower-bound function for SGD presents a significant challenge, as it must exhibit poor convergence for any choice of step size $\eta$.
The difficulty lies in the fact that the convergence behavior of SGD is highly sensitive to $\eta$: a small step size leads to slow updates, whereas a large step size can cause divergence. 

To overcome this challenge, we partition the positive real line of possible step sizes into three regimes: small, moderate, and large. 
For each regime, we design a distinct lower-bound function tailored to follow the stated convergence behavior within that range. 
Finally, we combine these functions across dimensions, ensuring that the resulting function satisfies the stated lower bound for any choice of $\eta$.
This ``dimension-aggregating'' technique has been developed in the recent literature (e.g., \citet{safran2021random,yun2022minibatch,cha2023tighter}).

\subsection{Proof of \Cref{thm:small-lb-idhess}}
\label{subsec:small-lb-idhess}
\thmsmalllbidhess*

\begin{proof}
We divide the range of step sizes $\eta > 0$ into three regimes that will be specified subsequently. For each regime, we construct the overall functions $F_1$, $F_2$, and $F_3$ respectively, along with their respective component functions and an initial point.
Each function is $1$-dimensional and satisfies \Cref{ass:common}.
$F_1$ and $F_3$ satisfy
\Cref{ass:grad-generalized} with $G=P=0$, and $F_2$ satisfies with $P=0$.
Also, the component functions within each overall function share the same Hessian. 
Importantly, each function is designed to satisfy the following properties:
\begin{itemize}
    \item (Small step size regime) There exists an initialization point $x_0 = \text{poly}(\mu, L, n, K, G)$ such that for any choice of $\eta \in \bigopen{0, \frac{1}{\mu n K}}$, the final iterate $x_n^K$ obtained by running \Cref{alg:igd} satisfies $F_1(x_n^K) - F_1(x^*) \gtrsim \frac{G^2}{\mu K}$.
    \item (Moderate step size regime) There exists an initialization point $y_0 = \text{poly}(\mu, L, n, K, G)$ such that for any choice of $\eta \in \left[\frac{1}{\mu n K}, \frac{2}{L}\right)$, the final iterate $y_n^K$ obtained by running \Cref{alg:igd} satisfies $F_2(y_n^K) - F_2(y^*) \gtrsim \frac{G^2}{\mu K}$.
    \item (Large step size regime) There exists an initialization point $z_0 = \text{poly}(\mu, L, n, K, G)$ such that for any choice of $\eta \in \left[\frac{2}{L}, \infty\right)$, the final iterate $z_n^K$ obtained by running \Cref{alg:igd} satisfies $F_3(z_n^K) - F_3(z^*) \gtrsim \frac{G^2}{\mu K}$.
\end{itemize}
Here, $x^*$, $y^*$, $z^*$ denote the minimizers of $F_1$, $F_2$, and $F_3$, respectively.
Detailed constructions of $F_1$, $F_2$, and $F_3$, as well as the verification of the assumptions and the stated properties are presented in \Cref{subsubsec:small-lb-idhess-f1,subsubsec:small-lb-idhess-f2,subsubsec:small-lb-idhess-f3}.

We now aggregate these functions across dimensions: $F(\vx) := F(x, y,z) = F_1(x) + F_2(y) + F_3(z)$ and $f_i(\vx) = f_{1i}(x) + f_{2i}(y) + f_{3i}(z)$ for all $i \in [n]$.
Here, $f_i$, $f_{1i}$, $f_{1i}$, $f_{3i}$ denote the $i$-th component function of $F$, $F_1$, $F_2$, and $F_3$, respectively.
Since each dimension is independent, it is obvious that $\vx^* = (x^*, y^*, z^*)$ minimizes $F$.

Finally, by choosing the initialization point as $\vx_0 = (x_0, y_0, z_0)$, the final iterate $\vx_n^K = (x_n^K, y_n^K, z_n^K)$ obtained by running \Cref{alg:igd} on $F$ satisfies
\begin{align*}
    F(\vx_n^K) - F(\vx^*) \gtrsim \frac{G^2}{\mu K},
\end{align*}
regardless of the choice of $\eta > 0$.

Note that $F$ satisfies the stated assumptions as
\begin{align*}
    \mu \mI \preceq \min \{ \nabla^2 F_1(x), \nabla^2 F_2(y), \nabla^2 F_3(z) \} \preceq \nabla^2 F(\vx) \preceq \max \{ \nabla^2 F_1(x), \nabla^2 F_2(y), \nabla^2 F_3(z) \} \preceq L \mI,
\end{align*}
and
\begin{align*}
    \bignorm{\nabla f_i(\vx) - \nabla F(\vx)} \le \bignorm{\nabla f_{1i}(x) - \nabla F_1(x)} + \bignorm{\nabla f_{2i}(y) - \nabla F_2(y)} + \bignorm{\nabla f_{3i}(z) - \nabla F_3(z)} \le 0 + G + 0 = G.
\end{align*}
Moreover, $\nabla^2f_i(\vx) = \mathrm{diag}(\nabla^2 f_{1i}(x), \nabla^2 f_{2i}(y), \nabla^2 f_{3i}(z)) = \mathrm{diag}(\nabla^2 f_i(x), \nabla^2 f_i(y), \nabla^2 f_i(z)) = \nabla^2 F(\vx)$ holds, since the component functions within each overall function share the same diagonal Hessian.
This concludes the proof of \Cref{thm:small-lb-idhess}.
\end{proof}

In the following subsections, we present the specific construction of $F_1$, $F_2$, and $F_3$, and demonstrate that each satisfies the stated lower bound within its corresponding step size regime.
For simplicity of notation, we omit the index of the overall function when referring to its component functions, e.g., we write $f_i(x)$ instead of $f_{1i}(x)$. 
Moreover, we use the common variable notation $x$ while constructing functions for each dimension, though we use different variables in the ``dimension-aggregation'' step.

\subsubsection{Construction of $F_1$}
\label{subsubsec:small-lb-idhess-f1}

Let $F_1(x) = \frac{\mu}{2}x^2$ with component functions $f_i(x) = F_1(x)$ for all $i \in [n]$.
It is clear that $F_1$ satisfies \Cref{ass:common}, \Cref{ass:grad-generalized} with $P=0$, and its component functions share an identical Hessian.
Also, we note that $x^* = 0$ and $F_1(x^*) = 0$.

Let the initialization be $x_0 = \frac{G}{\mu \sqrt{K}}$.
For all $\eta \in \bigopen{0, \frac{1}{\mu n K}}$, the final iterate is given by
\begin{align*}
    x_n^K = (1 - \eta \mu)^{nK} x_0 \geq \bigopen{1 - \frac{1}{nK}}^{nK} x_0 \geq \frac{G}{4 \mu \sqrt{K}},
\end{align*}
where the last inequality uses the fact that $(1 - \frac{1}{m})^m \geq \frac{1}{4}$ for all $m \geq 2$.

Thus, we have
\begin{align*}
    F_1(x_n^K) - F_1(x^*) = \frac{\mu}{2} (x_n^K)^2 \gtrsim \frac{G^2}{\mu K}.
\end{align*}

\subsubsection{Construction of $F_2$}
\label{subsubsec:small-lb-idhess-f2}

We construct the function by dividing the cases by the parity of $n$. We first consider the case where $n$ is even, and address the case where $n$ is odd later in this subsection.
Let $F_2(x) = \frac{\mu K}{2}x^2$ with component functions
\begin{align*}
    f_i(x) = \begin{cases}
        \frac{\mu K}{2}x^2 + Gx & \textrm{ if } \, i \leq n/2,\\
        \frac{\mu K}{2}x^2 - Gx & \textrm{ otherwise.}
    \end{cases}
\end{align*}
It is clear that $f_i$ satisfies \Cref{ass:grad-generalized} with $P=0$ and shares the same Hessian.
From the assumption $K \leq \frac{1}{2}\kappa$, we have $\mu \leq \mu K \leq \frac{L}{2}$.
Hence, each $f_i$ is $L$-smooth and $\mu$-strongly convex.
Also, we note that $x^* = 0$ and $F_2(x^*) = 0$.

By \Cref{lem:quadratic-twotype-IGD-closed}, the final iterate obtained by running \Cref{alg:igd} is given by
\begin{align}
    x_n^K = \bigopen{1 - \eta \mu K}^{nK}x_0 + \frac{G}{\mu K} \cdot \frac{1 - (1-\eta \mu K)^{\frac{n}{2}}}{1 + (1-\eta \mu K)^{\frac{n}{2}}} \bigopen{1 - \bigopen{1 - \eta \mu K}^{nK}}.\label{eq:small-idhess-lb}
\end{align}

For any $\eta \in \left[\frac{1}{\mu n K}, \frac{2}{L}\right)$, it follows that $\frac{1}{n} \le \eta \mu K < \frac{2 \mu K}{L} = \frac{2K}{\kappa} \le 1$.
Then, we have $(1 - \eta \mu K)^{nK} \leq \bigopen{1 - \frac{1}{n}}^{nK} \leq e^{-K} \leq e^{-1}$ which implies $1 - (1 - \eta \mu K)^{nK} \geq 1 - e^{-1}$.
Moreover, we have $\bigopen{1 - \eta \mu K}^{\frac{n}{2}} \leq (1 - \frac{1}{n})^{\frac{n}{2}} \leq e^{-\frac{1}{2}}$ and thus,
\begin{align*}
    \frac{1 - (1-\eta \mu K)^{\frac{n}{2}}}{1 + (1-\eta \mu K)^{\frac{n}{2}}} \geq \frac{1 - e^{-\frac{1}{2}}}{2}.
\end{align*}

Substituting these inequalities into \cref{eq:small-idhess-lb} and setting $x_0 = 0$, we obtain
\begin{align*}
    x_n^K \geq \frac{\bigopen{1 - e^{-1}}\bigopen{1 - e^{-\frac{1}{2}}}G}{2 \mu K},
\end{align*}
and
\begin{align*}
    F_2(x_n^K) - F_2(x^*) = \frac{\mu K}{2} (x_n^K)^2 \gtrsim \frac{G^2}{\mu K}.
\end{align*}

We now consider the case where $n$ is odd.
Let $F_2(x) = \frac{\mu K}{2}x^2$ with component functions
\begin{align*}
    f_i(x) = \begin{cases}
        \frac{\mu K}{2}x^2 & \textrm{ if } \, i=1,\\
        \frac{\mu K}{2}x^2 + Gx & \textrm{ if } \, 2 \leq i \leq (n+1)/2,\\
        \frac{\mu K}{2}x^2 - Gx & \textrm{ if } \, (n+3)/2 \leq i \leq n.
    \end{cases}
\end{align*}

Compared to the case of even $n$, $f_1(x) = \frac{\mu K}{2} x^2$ is introduced newly.
It is clear that $f_i$ satisfies \Cref{ass:grad-generalized} with $P=0$ and shares the same Hessian.
From the assumption $K \leq \frac{1}{2} \kappa$, we have $\mu \leq \mu K \leq \frac{L}{2}$.
Hence, each $f_i$ is $L$-smooth and $\mu$-strongly convex.
Also, we note that $x^* = 0$ and $F_2(x^*) = 0$.

By \Cref{lem:quadratic-threetype-IGD-closed}, the final iterate obtained by running \Cref{alg:igd} is given by
\begin{align}
    x_n^K & = (1 - \eta \mu K)^{nK} x_0 + \frac{G}{\mu K} \cdot \frac{1 - (1-\eta \mu K)^{nK}}{1 - (1-\eta \mu K)^n}\bigopen{1 - \bigopen{1 - \eta \mu K}^{\frac{n-1}{2}}}^2. \label{eq:small-idhess-lb-three}
\end{align}

For any $\eta \in \left[\frac{1}{\mu n K}, \frac{2}{L}\right)$, it follows that $\frac{1}{n} \le \eta \mu K < \frac{2 \mu K}{L} = \frac{2K}{\kappa} \le 1$.
Then, we have $(1 - \eta \mu K)^{nK} \leq \bigopen{1 - \frac{1}{n}}^{nK} \leq e^{-K} \leq e^{-1}$. 
Moreover, $\bigopen{1 - \eta \mu K}^{\frac{n-1}{2}} \leq (1 - \frac{1}{n})^{\frac{n-1}{2}} \leq e^{-\frac{n-1}{2 n}} \leq e^{-\frac{1}{4}}$ holds for $n \geq 2$. 
Substituting these inequalities into \cref{eq:small-idhess-lb-three} and setting $x_0 = 0$, we have
\begin{align*}
    x_n^K \geq \frac{G}{\mu K} \cdot \frac{1 - e^{-1}}{1} (1 - e^{-\frac{1}{4}})^2.
\end{align*}

Thus, we obtain the following optimality gap:
\begin{align*}
    F_2(x_n^K) - F_2(x^*) = \frac{\mu K}{2} (x_n^K)^2 \gtrsim \frac{G^2}{\mu K}.
\end{align*}

\subsubsection{Construction of $F_3$}
\label{subsubsec:small-lb-idhess-f3}

Let $F_3(x) = \frac{L}{2} x^2$ with component functions $f_i(x) = F_3(x)$ for all $i \in [n]$.
It is clear that $F_1$ satisfies \Cref{ass:common}, \Cref{ass:grad-generalized} with $P=0$, and its component functions share an identical Hessian.
Also, we note that $x^* = 0$ and $F_3(x^*) = 0$.

For all $\eta \in \left[\frac{2}{L}, \infty\right)$, the final iterate is given by
\begin{align*}
    x_n^K = \bigopen{1 - \eta L}^{nK} x_0.
\end{align*}

In this regime, the step size is excessively large, resulting in
\begin{align*}
    1 - \eta L \le 1 - \frac{2}{L} \cdot L \le -1,
\end{align*}
which implies $\bigabs{(1 - \eta L)^{nK}} \ge 1$.
Thus, the iterate does not converge and satisfies $\abs{x^K_n} \geq \abs{x_0}$. 

By setting the initialization $x_0 = \frac{G}{\sqrt{\mu L K}}$, we have
\begin{align*}
    F_3(x_n^K) - F_3(x^*) = \frac{L}{2} (x_n^K)^2 \geq \frac{L}{2}(x_0)^2 \gtrsim \frac{G^2}{\mu K}.
\end{align*}

\subsection{Proof of \Cref{thm:small-lb-sc}}
\label{subsec:small-lb-sc}
\thmsmalllbsc*

\begin{proof}
Similar to the approach in \Cref{thm:small-lb-idhess}, we divide the range of step sizes into three regimes.
For each regime, we construct the overall functions $F_1$, $F_2$, and $F_3$ respectively, along with their respective component functions and an initial point.
Finally, we aggregate these functions across different dimensions to derive the stated lower bound.

The functions $F_1$ and $F_3$ are $1$-dimensional, and $F_2$ is a $2$-dimensional function.
Each function is carefully designed to satisfy the following properties:
\begin{itemize}
    \item (Small step size regime) There exists an initialization point $x_0 = \text{poly}(\mu, L, n, K, G)$ such that for any choice of $\eta \in \bigopen{0, \frac{1}{\mu n K}}$, the final iterate $x_n^K$ obtained by running \Cref{alg:igd} satisfies $F_1(x_n^K) - F_1(x^*) \gtrsim \frac{LG^2}{\mu^2}\min\bigset{1, \frac{\kappa^2}{K^4}}$.
    \item (Moderate step size regime) There exists an initialization point $(y_0, z_0) = \text{poly}(\mu, L, n, K, G)$ such that for any choice of $\eta \in \left[\frac{1}{\mu n K}, \frac{2}{L}\right)$, the final iterate $(y_n^K, z_n^K)$ obtained by running \Cref{alg:igd} satisfies $F_2(y_n^K, z_n^K) - F_2(y^*, z^*) \gtrsim \frac{LG^2}{\mu^2}\min\bigset{1, \frac{\kappa^2}{K^4}}$.
    \item (Large step size regime) There exists an initialization point $w_0 = \text{poly}(\mu, L, n, K, G)$ such that for any choice of $\eta \in \left[\frac{2}{L}, \infty\right)$, the final iterate $w_n^K$ obtained by running \Cref{alg:igd} satisfies $F_3(w_n^K) - F_3(w^*) \gtrsim \frac{LG^2}{\mu^2}\min\bigset{1, \frac{\kappa^2}{K^4}}$.
\end{itemize}
Here, $x^*$, $(y^*, z^*)$, and $w^*$ denote the minimizers of $F_1$, $F_2$, and $F_3$, respectively.
All these functions are designed to satisfy \Cref{ass:common}.
$F_1$ and $F_3$ satisfy \Cref{ass:grad-generalized} with $G = P=0$, and $F_2$ satisfies with $P=1$.
Moreover, each component function within each overall function is $\mu$-strongly convex.
Detailed constructions of $F_1$, $F_2$, and $F_3$, as well as the verification of the assumptions and the stated properties are presented in \Cref{subsubsec:small-lb-sc-f1,subsubsec:small-lb-sc-f2,subsubsec:small-lb-sc-f3}.

By following a similar approach to the proof of \Cref{thm:small-lb-idhess}, we can conclude that the aggregated $4$-dimensional function $F(\vx) := F(x, y, z, w) = F_1(x) + F_2(y, z) + F_3(w)$ and its component functions satisfy \Cref{ass:common}.
Additionally,
\begin{align*}
    \bignorm{\nabla f_i(\vx) - \nabla F(\vx)}
    & \le \bignorm{\nabla f_{1i}(x) - \nabla F_1(x)} + \bignorm{\nabla f_{2i}(y) - \nabla F_2(y)} + \bignorm{\nabla f_{3i}(z) - \nabla F_3(z)} \\
    & \le 0 + \bigopen{G + \bignorm{\nabla F_2(y)}} + 0 \le G + \bignorm{\nabla F(\vx)},
\end{align*}
thus satisfying \Cref{ass:grad-generalized} with $P=1$.
Also, since each dimension is independent, it is obvious that $\vx^* = (x^*, y^*, z^*, w^*)$ minimizes $F$.
Moreover, by choosing the initialization point as $\vx_0 = (x_0, y_0, z_0, w_0)$, the final iterate $\vx_n^K = (x_n^K, y_n^K, z_n^K, w_n^K)$ obtained by running \Cref{alg:igd} on $F$ satisfies
\begin{align*}
    F(\vx_n^K) - F(\vx^*) \gtrsim \frac{LG^2}{\mu^2}\min\bigset{1, \, \frac{\kappa^2}{K^4}},
\end{align*}
regardless of the choice of $\eta > 0$.

This concludes the proof of \Cref{thm:small-lb-sc}.
\end{proof}

In the following subsections, we present the specific construction of $F_1$, $F_2$, and $F_3$, and demonstrate that each satisfies the stated lower bound within its corresponding step size regime.
For simplicity of notation, we omit the index of the overall function when referring to its component functions, e.g., we write $f_i(x)$ instead of $f_{1i}(x)$. 
Moreover, we use the common variable notation $x$ (and $y$) while constructing functions for each dimension, though we use different variables in the ``dimension-aggregation'' step.

\subsubsection{Construction of $F_1$}
\label{subsubsec:small-lb-sc-f1}

Let $F_1(x) = \frac{\mu}{2}x^2$ with component functions $f_i(x) = F_1(x)$ for all $i \in [n]$.
It is clear that $F_1$ satisfies \Cref{ass:common}, \Cref{ass:grad-generalized} with $G = P = 0$, and has $\mu$-strongly convex component functions.
Also, we note that $x^* = 0$ and $F_1(x^*) = 0$.

Let the initialization be $x_0 = \sqrt{\kappa} \min \bigset{1, \frac{\kappa}{K^2}} \frac{G}{\mu}$.
For all $\eta \in \bigopen{0, \frac{1}{\mu n K}}$, the final iterate is given by
\begin{align*}
    x_n^K = (1 - \eta \mu)^{nK} x_0 \geq \bigopen{1 - \frac{1}{nK}}^{nK} x_0 \geq \sqrt{\kappa} \min \bigset{1, \, \frac{\kappa}{K^2}} \frac{G}{4\mu},
\end{align*}
where the last inequality uses the fact that $(1 - \frac{1}{m})^m \geq \frac{1}{4}$ for all $m \geq 2$.

Thus, we have
\begin{align*}
    F_1(x_n^K) - F_1(x^*) = \frac{\mu}{2} (x_n^K)^2 \gtrsim \frac{LG^2}{\mu^2}\min\bigset{1, \, \frac{\kappa^2}{K^4}}.
\end{align*}

\subsubsection{Construction of $F_2$}
\label{subsubsec:small-lb-sc-f2}

In this subsection, we let $L'$ denote $L/2$.
We introduce the design of each component function as follows:
\begin{align*}
    f_i(x, y) = \begin{cases}
        \frac{\mu}{2}x^2 + \frac{L'}{2}y^2 - Gx & \text{if } \, i = 1,\\
        \bigopen{f_1 \circ (R_{i-1})^{-1}}(x, y) &\text{if} \, 2 \le i \le n,
    \end{cases}
\end{align*}
where
\begin{align*}
    R_i := \begin{bmatrix}
    \cos \theta_i & -\sin \theta_i\\
    \sin \theta_i & \cos \theta_i
\end{bmatrix}
\end{align*}
is the matrix for the counter-clock wise rotation in $\sR^2$ by an angle $\theta_i = i \delta$ with $\delta := \frac{2\pi}{n}$.

Using these component functions, the overall function $F_2 := \frac{1}{n} \sum_{i=1}^n f_i$ is given by $\frac{\mu + L'}{4}(x^2 + y^2)$.
This result can be verified by expanding the closed form of $f_i$:
\begin{align*}
    f_i(x, y) & = \frac{\mu}{2}(x \cos \theta_{i-1} + y \sin \theta_{i-1})^2 + \frac{L'}{2}(-x \sin \theta_{i-1} + y \cos \theta_{i-1})^2 - G(x \cos \theta_{i-1} + y \sin \theta_{i-1})\\
    & = \frac{1}{2}\bigopen{\mu \cos^2 \theta_{i-1} + L' \sin^2 \theta_{i-1}}x^2 + \frac{1}{2}\bigopen{\mu \sin^2 \theta_{i-1} + L' \cos^2 \theta_{i-1}}y^2\\
    & \quad\quad + (\mu - L')\sin \theta_{i-1} \cos \theta_{i-1}  xy - G(x \cos \theta_{i-1} + y \sin \theta_{i-1}).
\end{align*}
Since $n\ge 3$, we can utilize \Cref{lem:euler,lem:sumsquares}, and obtain
\begin{align*}
    &\frac{1}{n}\sum_{i=1}^n \sin \theta_{i-1} = \frac{1}{n}\sum_{i=1}^n \cos \theta_{i-1} = 0,\\
    &\frac{1}{n}\sum_{i=1}^n \sin^2 \theta_{i-1} = \frac{1}{n}\sum_{i=1}^n \cos^2 \theta_{i-1} = \frac{1}{2},\\
    &\frac{1}{n}\sum_{i=1}^n \sin \theta_{i-1} \cos \theta_{i-1} = \frac{1}{2n} \sum_{i=1}^n \sin \theta_{2(i-1)} = 0.
\end{align*}
Using these results, the overall function is simplified to
\begin{align*}
    F_2(x, y) = \frac{\mu + L'}{4}(x^2 + y^2),
\end{align*}
which has a minimizer $(x^*, y^*) = (0, 0)$.

Note that each component function $f_i$ is obtained by rotating $f_1$, and hence $f_i$ inherits the properties of $f_1$.
We can easily check that $f_1$ is both $\mu$-strongly convex and $L$-smooth.
Also, the gradient difference between the component function $f_1$ and the overall function $F_2$ can be expressed as
\begin{align*}
    \bignorm{\nabla f_1 (x, y) - \nabla F_2(x, y) } &= \norm{\bigopen{\bigopen{\mu x - G} - \frac{\mu + L'}{2}x, Ly - \frac{\mu + L'}{2}y}}\\
    & = \bignorm{\bigopen{\frac{\mu - L'}{2} x - G, \frac{L' - \mu}{2} y}} \\
    & \le G + \bignorm{\bigopen{\frac{\mu - L'}{2} x, \frac{L' - \mu}{2} y}} \\
    & = G + \bignorm{\bigopen{\frac{L' - \mu}{2} x, \frac{L' - \mu}{2} y}} \\
    & \le G + \bignorm{\bigopen{\frac{L' + \mu}{2} x, \frac{L' + \mu}{2} y}} = G + \bignorm{\nabla F_2(x,y)},
\end{align*}
proving that the construction satisfies \Cref{ass:grad-generalized} with $P = 1$. 

Before delving into the detailed proof, we outline the intuition for the construction.
We start by designing a step-size-dependent initialization point $(u_0(\eta), v_0(\eta))$, where $\eta \in \left[\frac{1}{\mu n K}, \frac{2}{L} \right)$.
For $i \in [n]$, we define $(u_i(\eta), v_i(\eta))$ as the result of running a single step of gradient descent on $f_i$ with a step size $\eta$, starting from $(u_{i-1}(\eta), v_{i-1}(\eta))$.

The key idea is to carefully design $(u_0(\eta), v_0(\eta))$ so that each subsequent iterate $(u_i(\eta), v_i(\eta))$ is obtained by rotating $(u_{i-1}(\eta), v_{i-1}(\eta))$ by an angle $\delta = \frac{2\pi}{n}$.
This aligns with our construction of the component functions $f_i$, which are also generated by continually rotating $f_1$ by the same angle $\delta$.
As a result, the relative position between each iterate and the component function used to compute the next iterate is preserved throughout the entire update process.
This symmetry ensures that the trajectory of the iterates $(u_i(\eta), v_i(\eta))$ forms a regular $n$-sided polygon.
Consequently, after running \Cref{alg:igd}, the final iterate and the initialization point $(u_0(\eta), v_0(\eta))$ are identical.

At the last step of the proof, we will show that the choice of the initialization point $(u_0(\frac{1}{\mu n K}), v_0(\frac{1}{\mu n K}))$ can be made in a step-size-independent manner without significantly affecting the final optimality gap, even when the step size $\eta$ is chosen from $\left(\frac{1}{\mu n K}, \frac{2}{L} \right)$ rather than being fixed at $\frac{1}{\mu n K}$.

We now proceed to describe the exact construction of $(u_0(\eta), v_0(\eta))$.
Consider the gradient of the component function $f_1(x, y)$, which is given by:
\begin{align*}
    \nabla_x f_1(x, y) = \mu x - G \, \text{ and} \,\, \nabla_y f_1(x, y) = L' y.
\end{align*}

A single iteration of gradient descent on $f_1$ using the step size $\eta$ yields:
\begin{align}
\label{eq:uvinitial}
\begin{split}
    &u_1(\eta) = u_0(\eta) - \eta (\mu u_0(\eta) - G),\\
    &v_1(\eta) = v_0(\eta) - \eta L' v_0(\eta).
\end{split}
\end{align} 

To maintain the rotational relationship between successive iterates, we require that the updated iterate $(u_1(\eta), v_1(\eta))$ satisfies the following relationship:
\begin{align}
    \begin{bmatrix}
        u_1(\eta) \\ v_1(\eta)
    \end{bmatrix}
    =
    \begin{bmatrix}
        \cos \delta & - \sin \delta \\
        \sin \delta & \cos \delta
    \end{bmatrix}
    \begin{bmatrix}
        u_0(\eta) \\ v_0(\eta)
    \end{bmatrix}. \label{eq:rotationrelationship}
\end{align}
As mentioned earlier, this rotational relationship ensures that the trajectory of the iterates forms a regular $n$-sided polygon.
Recall that the component function $f_i$ is defined as:
\begin{align*}
    f_i(x, y) = \bigopen{f_1 \circ (R_{i-1})^{-1}}(x, y).
\end{align*}
Additionally, let $\Lambda = \begin{bmatrix}
\mu & 0 \\ 0 & L \end{bmatrix}$. Since 
\begin{align*}
    \nabla f_1(x, y) = \Lambda \begin{bmatrix}
        x\\ y
    \end{bmatrix} - G\begin{bmatrix}
        1\\ 0
    \end{bmatrix},
\end{align*}
the gradient of $f_i$ can then be expressed as
\begin{align}
    \nabla f_i(x, y) = R_{i-1} \bigopen{
        \Lambda (R_{i-1})^{-1} \begin{bmatrix}
            x\\ y
        \end{bmatrix} - G\begin{bmatrix}
            1\\ 0
        \end{bmatrix}
    }. \label{eq:fgradient}
\end{align}

If $(u_1(\eta), v_1(\eta))$ satisfies both \cref{eq:uvinitial} and \cref{eq:rotationrelationship}, then the subsequent iterate $(u_2(\eta), v_2(\eta))$ satisfies:
\begin{align*}
    \begin{bmatrix}
        u_2(\eta) \\ v_2(\eta)
    \end{bmatrix}
    &= \bigopen{\mI - \eta R_{1} \Lambda (R_{1})^{-1}} \begin{bmatrix}
        u_1(\eta) \\ v_1(\eta)
    \end{bmatrix} + \eta G R_{1}\begin{bmatrix}
        1 \\ 0
    \end{bmatrix}\\
    &= \bigopen{\mI - \eta R_{1} \Lambda (R_{1})^{-1}} R_1 \begin{bmatrix}
        u_0(\eta) \\ v_0(\eta)
    \end{bmatrix} + \eta G R_{1}\begin{bmatrix}
        1 \\ 0
    \end{bmatrix}\\
    &= R_1 \bigopen{\bigopen{\mI - \eta \Lambda} \begin{bmatrix}
        u_0(\eta) \\ v_0(\eta)
    \end{bmatrix} + \eta G \begin{bmatrix}
        1 \\ 0
    \end{bmatrix}} 
    = R_1 \begin{bmatrix}
        u_1(\eta) \\ v_1(\eta)
    \end{bmatrix} 
    = R_2 \begin{bmatrix}
        u_0(\eta) \\ v_0(\eta)
    \end{bmatrix}.
\end{align*}

Thus, if the initial point $(u_0(\eta), v_0(\eta))$ and its successive iterate $(u_1(\eta), v_1(\eta))$ satisfies the rotational relationship, this relation persists throughout the entire update process.
Consequently, the trajectory of the iterates forms a regular $n$-sided polygon.

To enforce this rotational relationship, we solve the following system of equations:
\begin{align*}
    & u_0(\eta) \cos \delta - v_0(\eta) \sin \delta = (1 - \eta \mu) u_0(\eta) + \eta G,\\
    & u_0(\eta) \sin \delta + v_0(\eta) \cos \delta = (1 - \eta L') v_0(\eta).
\end{align*}

From these, we derive:
\begin{align}
    &u_0(\eta) = \frac{\eta L' - (1 - \cos \delta)}{(1 - \cos \delta)(2 - (\mu + L')\eta) + \eta^2 \mu L'} \cdot \eta G, \label{eq:u0}\\
    &v_0(\eta) = -\frac{\sin \delta}{(1 - \cos \delta)(2 - (\mu + L')\eta) + \eta^2 \mu L'} \cdot \eta G. \label{eq:v0}
\end{align}

Note that the numerator of $u_0(\eta)$ is positive as shown below:
\begin{align}
    \eta L' - (1- \cos \delta)
    &\overset{(a)}{\ge} \frac{\eta L'}{2} + \frac{L'}{2 \mu n K} - \frac{\delta^2}{2} \nonumber \\
    &\overset{(b)}{=} \frac{\eta L'}{2} + \frac{L}{4 \mu n K} - \frac{2\pi^2}{n^2} \nonumber \\
    &= \frac{\eta L'}{2} + \frac{nL - 8 \pi^2 \mu K}{4\mu n^2 K} \nonumber \\
    &\overset{(c)}{>} \frac{\eta L'}{2}, \label{eq:u0numerator}
\end{align}
where we apply $\eta \ge \frac{1}{\mu n K}$ and the inequality $1 - \cos \theta \le \frac{\theta^2}{2}$ at $(a)$, substitute $L'=\frac{L}{2}$ and $\delta = \frac{2\pi}{n}$ at $(b)$, and apply $n \ge 3 > \frac{\pi}{2}$ and $\kappa \ge 16 \pi K$ at $(c)$.
Also, we have $2 - (\mu + L')\eta \ge 0$ for $\eta < \frac{2}{L}$.
Thus, $u_0(\eta)$ is always positive and $v_0(\eta)$ is always negative for $\eta \in \left[\frac{1}{\mu n K}, \frac{2}{L} \right)$.

Let $(u_i^k(\eta), v_i^k(\eta))$ denote the $i$-th iterate at the $k$-th epoch of \Cref{alg:igd} using the step size $\eta$ and the initialization point $(u_0(\eta), v_0(\eta))$.
By definition, $(u_n^K(\eta), v_n^K(\eta))$ represents the final iterate after $K$ epochs.
Due to rotational symmetry, $(u_n^K(\eta), v_n^K(\eta))$ is identical to $(u_0(\eta), v_0(\eta))$.
Thus, the distance between the final iterate and the minimizer of $F_2$ can be lower bounded by the $x$-coordinate of the initialization point:
\begin{align*}
    \bignorm{(u_n^K(\eta), v_n^K(\eta))} = \bignorm{(u_0(\eta), v_0(\eta))} \ge u_0(\eta).
\end{align*}

We now derive a lower bound for $u_0(\eta)$.
We begin by upper bounding its denominator:
\begin{align*}
    (1 - \cos \delta)(2 - (\mu + L')\eta) + \eta^2 \mu L' 
    < \frac{\delta^2}{2} \cdot 2 + \eta^2 \mu L'
    = \frac{4\pi^2}{n^2} + \eta^2 \mu L'.
\end{align*}

Substituting this result into \cref{eq:u0} gives
\begin{align*}
    u_0(\eta) \ge \frac{(\eta L' - (1 - \cos \delta))\eta G}{\frac{4\pi^2}{n^2} + \eta^2 \mu L'} > \frac{ \eta^2 L' G }{\frac{8\pi^2}{n^2} + 2 \eta^2 \mu L'} := \varphi(\eta),
\end{align*}
where we apply \cref{eq:u0numerator} at the second inequality.
We can easily check that $\varphi(\eta)$ is an increasing function of $\eta$ and thus the minimum value is attained at $\eta = \frac{1}{\mu n K}$.
Substituting $\eta = \frac{1}{\mu n K}$, we have
\begin{align*}
    \varphi(\frac{1}{\mu n K})
    &= \frac{L' G }{\frac{8\pi^2}{n^2} \cdot \mu^2 n^2 K^2 + 2 \mu L'}\\
    &= \frac{L G}{16 \pi^2 \mu^2 K^2 + 2 \mu L} \quad (\because L' = \frac{L}{2})\\
    &= \frac{G}{\mu} \cdot \frac{\frac{L}{\mu K^2}}{16\pi^2 + \frac{2L}{\mu K^2}}.
\end{align*}

In summary, the distance between the final iterate and the minimizer of $F_2$ is bounded as:
\begin{align}
    \bignorm{(u_n^K(\eta), v_n^K(\eta))} = \bignorm{(u_0(\eta), v_0(\eta))}
    \ge u_0(\eta) \ge \varphi(\eta) \ge \varphi(\frac{1}{\mu n K}) \overset{(a)}{\geq} \frac{G}{32\pi^2\mu} \min \bigset{1, \, \frac{\kappa}{K^2}}, \label{eq:uvminimum}
\end{align}
where $(a)$ is derived through the following process:
\begin{align*}
    \varphi(\frac{1}{\mu n K})
    = \frac{G}{\mu} \frac{\frac{L}{\mu K^2}}{16\pi^2 + \frac{2L}{\mu K^2}} = \frac{G}{2\mu} \frac{\frac{\kappa}{8\pi^2 K^2}}{1 + \frac{\kappa}{8\pi^2 K^2}} \overset{(b)}{\geq} \frac{G}{4\mu} \min \bigset{1, \frac{\kappa}{8\pi^2 K^2}} \geq \frac{G}{32\pi^2\mu} \min \bigset{1, \, \frac{\kappa}{K^2}}.
\end{align*}
Here, we use the inequality $\frac{u}{1 + u} \geq \frac{1}{2}\min \bigset{1, u}$ for all $u \geq 0$ at $(b)$. 
The function optimality gap can then be bounded as:
\begin{align*}
    F_2(u_n^K(\eta), v_n^K(\eta)) - F_2(x^*, y^*) = \frac{\mu + L'}{4} \bignorm{(u_n^K(\eta), v_n^K(\eta))}^2
    \gtrsim \frac{L G^2}{\mu^2} \min \bigset{1, \, \frac{\kappa^2}{K^4}},
\end{align*}
for $\eta \in \left[\frac{1}{\mu n K}, \frac{2}{L} \right)$.
However, one caveat is that the initialization point $(u_0(\eta), v_0(\eta))$ depends on the choice of $\eta$.
Our goal is to identify a unified, step-size-independent initialization point $(x_0, y_0)$ that achieves the same lower bound (up to a scaling factor).
Specifically, we aim to ensure:
\begin{align*}
    F_2(x_n^K, y_n^K) \gtrsim \frac{L G}{\mu^2} \min \bigset{1, \, \frac{\kappa^2}{K^4}}
\end{align*}
for any choice of $\eta \in \left[\frac{1}{\mu n K}, \frac{2}{L} \right)$.

We claim that this goal can be achieved by setting the initialization point as $(x_0, y_0) = (u_0(\frac{1}{\mu n K}), v_0(\frac{1}{\mu n K}))$.
To prove this claim, consider two sequences of iterates: $\{(x_i^k, y_i^k)\}_{i \in [n], k \in [K]}$ and $\{(u_i^k(\eta), v_i^k(\eta))\}_{i \in [n], k \in [K]}$.
Both sequences are generated using the same permutation $\mathrm{id}_n$ and the same step size $\eta$, but they differ in their initialization points.
Specifically, $(x_i^k, y_i^k)$ starts from the initial point $(u_0(\frac{1}{\mu n K}), v_0(\frac{1}{\mu n K}))$, while $(u_i^k(\eta), v_i^k(\eta))$ starts from the initial point $(u_0(\eta), v_0(\eta))$.

Recall the gradient of $f_i$ from \cref{eq:fgradient}:
\begin{align*}
    \nabla f_i(x, y) = R_{i-1} \bigopen{
        \Lambda (R_{i-1})^{-1} \begin{bmatrix}
            x\\ y
        \end{bmatrix} - G\begin{bmatrix}
            1\\ 0
        \end{bmatrix}
    }.
\end{align*}

The update rule for the iterates generated by \igd{} is:
\begin{align*}
    \begin{bmatrix}
        x_i^k \\ y_i^k
    \end{bmatrix} & = \bigopen{\mI - \eta R_{i-1} \Lambda (R_{i-1})^{-1}} \begin{bmatrix}
        x_{i-1}^k \\ y_{i-1}^k
    \end{bmatrix} + \eta G R_{i-1}\begin{bmatrix}
        1 \\ 0
    \end{bmatrix} \text{ and } \\ 
    \begin{bmatrix}
        u_i^k(\eta) \\ v_i^k(\eta)
    \end{bmatrix} & = \bigopen{\mI - \eta R_{i-1} \Lambda (R_{i-1})^{-1}} \begin{bmatrix}
        u_{i-1}^k(\eta) \\ v_{i-1}^k(\eta)
    \end{bmatrix} + \eta G R_{i-1}\begin{bmatrix}
        1 \\ 0
    \end{bmatrix}.
\end{align*}

Taking the difference between the two sequences of iterates, we have:
\begin{align*}
    \begin{bmatrix}
        x_i^k - u_i^k(\eta) \\ y_i^k - v_i^k(\eta)
    \end{bmatrix}
    = \bigopen{\mI - \eta R_{i-1} \Lambda (R_{i-1})^{-1}} \begin{bmatrix}
        x_{i-1}^k - u_{i-1}^k(\eta) \\ y_{i-1}^k - v_{i-1}^k(\eta)
    \end{bmatrix}.
\end{align*}
Since $R_{i-1}$ is an unitary matrix, it follows that  $R_{i-1}\Lambda (R_{i-1})^{-1} \succeq \mu I$. 
Thus, we obtain the inequality $\mI - \eta R_{i-1}\Lambda R_{i-1}^{-1} \preceq (1 - \eta \mu)I$, which leads to the following bound:
\begin{align*}
    \norm{(x^k_i - u^k_i(\eta), y^k_i - v^k_i(\eta))} \leq (1 - \eta \mu) \norm{(x^k_{i-1} - u^k_{i-1}(\eta), y^k_{i-1} - v^k_{i-1}(\eta))}.
\end{align*}

Based on this inequality, we will demonstrate that $\norm{(x^K_n - u^K_n(\eta), y^K_n - v^K_n(\eta))}$ is not significant.
This can be interpreted as the gap between the initialization points shrinking progressively throughout the optimization process.
Specifically, we have
\begin{align*}
    \norm{(x^K_n - u^K_n(\eta), y^K_n - v^K_n(\eta))} & \leq (1 - \eta \mu)^{nK} \norm{(x_0 - u_0(\eta), y_0 - v_0(\eta))}\\
    & \leq e^{-\eta \mu n K} \norm{(x_0 - u_0(\eta), y_0 - v_0(\eta))}\\
    & \overset{(a)}{\leq} e^{-1} \norm{(u_0(\frac{1}{\mu n K}) - u_0(\eta), v_0(\frac{1}{\mu n K}) - v_0(\eta))}\\
    & \leq e^{-1} \bigopen{\abs{u_0(\frac{1}{\mu n K}) - u_0(\eta)} + \abs{v_0(\frac{1}{\mu n K}) - v_0(\eta)}}\\
    & \overset{(b)}{\leq} e^{-1} \bigopen{u_0(\eta) + \abs{v_0 (\frac{1}{\mu n K})} + \abs{v_0(\eta)}}\\
    & \overset{(c)}{\leq} e^{-1}\bigopen{u_0(\eta) + \frac{8 \pi K}{\kappa}u_0(\frac{1}{\mu n K}) + \frac{8 \pi K}{\kappa}u_0(\eta)}\\
    & \overset{(d)}{\leq} \frac{1 + \frac{16 \pi K}{\kappa}}{e}u_0(\eta)\\
    & \overset{(e)}{\leq} \frac{2}{e}u_0(\eta)\\
    & \overset{(f)}{\leq} \frac{2}{e}\norm{(u^K_n(\eta), v^K_n(\eta))}.
\end{align*}
Here, we apply:
\begin{itemize}
    \item $(a)$: $\eta \ge \frac{1}{\mu n K}$ and $(x_0, y_0) = \bigopen{u_0\bigopen{\frac{1}{\mu n K}}, v_0\bigopen{\frac{1}{\mu n K}}}$.
    \item $(b)$, $(d)$: $u_0(\eta)$ is positive and increasing (shown in \Cref{lem:uincreasing}).
    \item $(c)$: Follows from \Cref{lem:vsmallthanu}.
    \item $(e)$: $K \le \frac{\kappa}{16\pi}$.
    \item $(f)$: $u_0(\eta) = u^K_n(\eta)$.
\end{itemize}

In summary, the distance between $(x_n^K, y_n^K)$ and the minimizer of $F_2$ can be bounded as:
\begin{align*}
    \norm{(x^K_n, y^K_n)} \geq \norm{(u^K_n(\eta), v^K_n(\eta))} - \norm{(x^K_n - u^K_n(\eta), y^K_n - v^K_n(\eta))} \geq \bigopen{1 - \frac{2}{e}}\norm{(u^K_n(\eta), v^K_n(\eta))},
\end{align*}
where we have already derived that $\norm{(u^K_n(\eta), v^K_n(\eta))} \geq \frac{G}{32\pi^2\mu} \min \bigset{1, \, \frac{\kappa}{K^2}}$ in \cref{eq:uvminimum}.
Thus, we conclude
\begin{align*}
    \norm{(x^K_n, y^K_n)} \geq \frac{\bigopen{1 - 2e^{-1}}G}{32\pi^2\mu} \min \bigset{1, \, \frac{\kappa}{K^2}}
\end{align*}
and consequently,
\begin{align*}
    F_2(x^K_n, y^K_n)  - F_2(x^*, y^*) = \frac{L' + \mu}{4}\norm{(x^K_n, y^K_n)}^2 \geq \frac{L}{8}\norm{(x^K_n, y^K_n)}^2 \gtrsim \frac{LG^2}{\mu^2} \min \bigset{1, \, \frac{\kappa^2}{K^4}}.
\end{align*}

\subsubsection{Construction of $F_3$}
\label{subsubsec:small-lb-sc-f3}

Let $F_3(x) = \frac{L}{2} x^2$ with component functions $f_i(x) = F_3(x)$ for all $i \in [n]$.
It is clear that $F_1$ satisfies \Cref{ass:common}, \Cref{ass:grad-generalized} with $G=P=0$ and has $\mu$-strongly convex component functions.
Also, we note that $x^* = 0$ and $F_3(x^*) = 0$.

For all $\eta \in \left[\frac{2}{L}, \infty\right)$, the final iterate is given by
\begin{align*}
    x_n^K = \bigopen{1 - \eta L}^{nK} x_0.
\end{align*}

In this regime, the step size is excessively large, resulting in
\begin{align*}
    1 - \eta L \le 1 - \frac{2}{L} \cdot L \le -1,
\end{align*}
which implies $\bigabs{(1 - \eta L)^{nK}} \ge 1$.
Thus, the iterate does not converge and satisfies $\abs{x^K_n} \geq \abs{x_0}$. 

By setting the initialization $x_0 = \frac{G}{\mu} \min \bigset{1, \, \frac{\kappa}{K^2}}$, we have
\begin{align*}
    F_3(x_n^K) - F_3(x^*) = \frac{L}{2} (x_n^K)^2 \geq \frac{L}{2}(x_0)^2 \gtrsim \frac{LG^2}{\mu^2} \min \bigset{1, \, \frac{\kappa^2}{K^4}}.
\end{align*}

\subsection{Proof of \Cref{thm:small-lb-concave}}
\label{subsec:small-lb-concave}
\thmsmalllbconcave*
\begin{proof}
Similar to the approach in \Cref{thm:small-lb-idhess}, we divide the range of step sizes into two regimes.
For each regime, we construct the overall functions $F_1$ and $F_2$, respectively, along with their respective component functions and an initial point.
Finally, we aggregate these functions across different dimensions to derive the stated lower bound.

Each function is $1$-dimensional and carefully designed to satisfy the following properties:
\begin{itemize}
    \item (Small step size regime) For any choice of the initialization point $x_0 = D$ and step size $\eta \in \bigopen{0, \frac{1}{\mu n K}}$, the final iterate $x_n^K$ obtained by running \Cref{alg:igd} satisfies $F_1(x_n^K) - F_1(x^*) \gtrsim \mu D^2$.
    \item (Moderate \& Large step size regime) There exists an initialization point $y_0 = \text{poly}(\mu, L, n, K, G)$ such that for any choice of $\eta \in \left[\frac{1}{\mu n K}, \infty \right)$, the final iterate $y_n^K$ obtained by running \Cref{alg:igd} satisfies $F_2(y_n^K) - F_2(y^*) \gtrsim \frac{G^2}{L}\bigopen{1 + \frac{L}{2\mu n K}}^{n}$.
\end{itemize}
Here, $x^*$ and $y^*$ denote the minimizer of $F_1$ and $F_2$, respectively.
Both functions are designed to satisfy \Cref{ass:common}. 
$F_1$ satisfies \Cref{ass:grad-generalized} with $G=P=0$, and $F_2$ satisfies with $P=3$.
Detailed constructions of $F_1$ and $F_2$, as well as the verification of the assumptions and the stated properties are presented in \Cref{subsubsec:small-lb-concave-f1,subsubsec:small-lb-concave-f2}.

By following a similar approach to the proof of \Cref{thm:small-lb-idhess,thm:small-lb-sc}, we can conclude that the aggregated $2$-dimensional function $F(\vx) := F(x, y) = F_1(x) + F_2(y)$ and its component functions satisfy the stated assumptions.
Also, since each dimension is independent, it is obvious that $\vx^* = (x^*, y^*)$ minimizes $F$.
Finally, by starting from the initialization point as $\vx_0 = (D, 0)$, the final iterate $\vx_n^K = (x_n^K, y_n^K)$ obtained by running \Cref{alg:igd} on $F$ satisfies
\begin{align*}
    F(\vx_n^K) - F(\vx^*) \gtrsim \min \bigset{ \mu D^2, \, \frac{G^2}{L}\bigopen{1 + \frac{L}{2\mu n K}}^{n} },
\end{align*}
for any choice of $D \in \R$ and $\eta > 0$.

This concludes the proof of \Cref{thm:small-lb-concave}.
\end{proof}

One key distinction of \Cref{thm:small-lb-concave} compared to other lower bound theorems is the explicit inclusion of $D$ in the statement.
While it is possible to express the lower bounds in other theorems with a dependency on $D$, we chose to leave this dependency only for \Cref{thm:small-lb-concave} due to the unique behavior of the term $\frac{G^2}{L}\bigopen{1 + \frac{L}{2\mu n K}}^{n}$.

Unlike typical bounds, this expression cannot be simplified into a clear, closed-form polynomial expression.
Its proportional degree with respect to $\mu$, $L$, $n$, and $K$ varies depending on their values.
In particular, when $K$ is small (e.g., near $\frac{\kappa}{n}$), the term exhibits exponential growth, scaling as $c^n \cdot \frac{G^2}{L}$ where $c$ is a constant greater than $1.1$.

This exponential growth introduces challenges when attempting to express the bound without the ``min'' operator, as in other theorems.
Specifically, the first coordinate of the initialization point, $x_0$, would need to grow to an exponential scale, which is undesirable to when comparing to the upper bound theorems that hide logarithmic dependency.
For these reasons, we leave the dependency on $D$ explicitly in the bound.

In the following subsections, we present the specific construction of $F_1$ and $F_2$, and demonstrate that each satisfies the stated lower bound within its corresponding step size regime.
For simplicity of notation, we omit the index of the overall function when referring to its component functions, e.g., we write $f_i(x)$ instead of $f_{1i}(x)$. 
Moreover, we use the common variable notation $x$ while constructing functions for each dimension, though we use different variables in the ``dimension-aggregation'' step.

\subsubsection{Construction of $F_1$}
\label{subsubsec:small-lb-concave-f1}
Let $F_1(x) = \frac{\mu}{2}x^2$ with component functions $f_i(x) = F_1(x)$ for all $i \in [n]$.
It is clear that $F_1$ satisfies \Cref{ass:common,ass:grad-optimum} with $G=P=0$.
Also, we note that $x^* = 0$ and $F_1(x^*) = 0$.

For all $\eta \in \bigopen{0, \frac{1}{\mu n K}}$, the final iterate is given by
\begin{align*}
    x_n^K = (1 - \eta \mu)^{nK} x_0 \geq \bigopen{1 - \frac{1}{nK}}^{nK} x_0 \geq \frac{x_0}{4},
\end{align*}
where the last inequality uses the fact that $(1 - \frac{1}{m})^m \geq \frac{1}{4}$ for all $m \geq 2$.

Thus, for $x_0 = D$, we have
\begin{align*}
    F_1(x_n^K) - F_1(x^*) = \frac{\mu}{2} (x_n^K)^2 \gtrsim \mu D^2.
\end{align*}

\subsubsection{Construction of $F_2$}
\label{subsubsec:small-lb-concave-f2}

In this section, we focus on the case when $n$ is even.
If $n$ is odd, we set $n-1$ components satisfying the argument, and introduce an additional zero component function.
This adjustment does not affect the final result, but only modifies the parameters $\mu$, $L$, $n$ by at most a constant factor.

Let $F_2(x) = \frac{L}{8}x^2$ with component functions 
\begin{align*}
    f_i(x) = \begin{cases}
        \frac{L}{2}x^2 + Gx & \textrm{ if } i \leq n/2,\\
        -\frac{L}{4}x^2 - Gx &\textrm{ otherwise}.
    \end{cases}
\end{align*}

Note that the first $n/2$ component functions are strongly convex, while the remaining component functions are concave.
The overall function $F_2$ is $\mu$-strongly convex, since $\frac{L}{4} \ge \mu$ holds from the assumption $\kappa \ge 4$, thereby satisfying \Cref{ass:common}.
Also, it is clear that $f_i$ satisfies \Cref{ass:grad-generalized} with $P=3$.
We note that $x^* = 0$ and $F_2(x^*) = 0$.

We now consider the relationship between $x_0^k$ and $x_0^{k+1}$.
Applying \Cref{lem:quadratic-concave-IGD-closed}, we have 
\begin{align}
    x_0^{k+1} = \bigopen{1 + \frac{\eta L}{2}}^{\frac{n}{2}}\bigopen{1 - \eta L}^{\frac{n}{2}} x^k_0 + \frac{G}{L}\bigopen{\bigopen{1 + \frac{\eta L}{2}}^{\frac{n}{2}} \bigopen{1 + (1 - \eta L)^\frac{n}{2}} - 2}.\label{eq:concavesmallepoch}
\end{align}

From $K \leq \frac{\kappa}{4}$, we have $\eta \ge \frac{1}{\mu n K} \ge \frac{4}{nL}$.
We now derive the lower bound for $\bigopen{1 + \frac{\eta L}{2}}^{\frac{n}{2}}$.
To do this, we consider the first three terms of its binomial expansion, which is possible because $\frac{n}{2} \geq 2$:
\begin{align*}
    \bigopen{1 + \frac{\eta L}{2}}^{\frac{n}{2}} 
    \geq \bigopen{1 + \frac{2}{n}}^{\frac{n}{2}}  
    \geq 1 + \frac{2}{n} \cdot \binom{\frac{n}{2}}{1} + \bigopen{\frac{2}{n}}^2  \cdot \binom{\frac{n}{2}}{2} 
    = 1 + 1 + \frac{4}{n^2} \cdot \frac{n(n-2)}{8} = \frac{5}{2} - \frac{1}{n} \geq \frac{9}{4},
\end{align*}
where the last inequality uses $n \ge 4$.
Equivalently, the following inequality holds:
\begin{align*}
 \bigopen{1 + \frac{\eta L}{2}}^{\frac{n}{2}} \ge 2 + \frac{1}{9} \bigopen{1 + \frac{\eta L}{2}}^{\frac{n}{2}}.
\end{align*}

Using this inequality, the numerical term in \cref{eq:concavesmallepoch} becomes
\begin{align*}
    \bigopen{1 + \frac{\eta L}{2}}^{\frac{n}{2}} \bigopen{1 + (1 - \eta L)^\frac{n}{2}} - 2 
    > \bigopen{1 + \frac{\eta L}{2}}^{\frac{n}{2}} - 2 
    \ge \frac{1}{9} \bigopen{1 + \frac{\eta L}{2}}^{\frac{n}{2}}.
\end{align*}

Substituting this back to \cref{eq:concavesmallepoch} yields
\begin{align*}
    x_0^{k+1} 
    \ge \bigopen{1 + \frac{\eta L}{2}}^{\frac{n}{2}}\bigopen{1 - \eta L}^{\frac{n}{2}} x^k_0 + \frac{G}{9L} \bigopen{1 + \frac{\eta L}{2}}^{\frac{n}{2}}.
\end{align*}

Note that if $x_0^k$ is non-negative, we have $x_0^{k+1} \ge \frac{G}{9L} \bigopen{1 + \frac{\eta L}{2}}^{\frac{n}{2}} \geq 0$.
By setting the initialization point $x_0$ as $0$, each $x_0^k$ remains non-negative throughout the process, and therefore the final iterate $x_n^K$ satisfies:
\begin{align*}
    x_n^K \ge \frac{G}{9L} \bigopen{1 + \frac{\eta L}{2}}^{\frac{n}{2}} \ge \frac{G}{9L} \bigopen{1 + \frac{L}{2\mu n K}}^{\frac{n}{2}},
\end{align*}
where we apply $\eta \ge \frac{1}{\mu n K}$ at the last step.
Consequently, the optimality gap is lower bounded as:
\begin{align*}
    F_2(x_n^K) - F_2(x^*) = \frac{L}{8} (x_n^K)^2 \gtrsim \frac{G^2}{L} \bigopen{1 + \frac{L}{2\mu n K}}^{n}.
\end{align*}

\subsection{Technical Lemmas}
\label{appendix:small-lb-techlmm}
\begin{restatable}{lemma}{euler}
\label{lem:euler}
    For any $n \geq 2$, the following holds:
    \begin{align*}
        \sum_{j=0}^{n-1} e^{\frac{2 \pi j}{n} \mathrm{i}} = 0.
    \end{align*}
    where $\mathrm{i}$ denotes the imaginary unit. 
    In particular, the following equations hold:
    \begin{align*}
        \sum_{j=0}^{n-1} \cos \frac{2 \pi j}{n} = 0, \text{ and } \,\, \sum_{j=0}^{n-1} \sin \frac{2 \pi j}{n} = 0.
    \end{align*}
\end{restatable}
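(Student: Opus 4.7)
The plan is to recognize the sum as the sum of all $n$-th roots of unity and to collapse it via the finite geometric series formula. Specifically, I would set $\omega := e^{2\pi \mathrm{i}/n}$, so that the sum in the statement becomes $\sum_{j=0}^{n-1} \omega^j$. Because $n \geq 2$, the primitive root $\omega$ is not equal to $1$, so the geometric series formula applies and yields $\sum_{j=0}^{n-1} \omega^j = \frac{\omega^n - 1}{\omega - 1}$.

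Next, I would observe that $\omega^n = e^{2\pi \mathrm{i}} = 1$, which makes the numerator $\omega^n - 1$ equal to zero while the denominator $\omega - 1$ is nonzero. This immediately gives $\sum_{j=0}^{n-1} \omega^j = 0$, establishing the first claim. The trigonometric identities then follow by splitting $e^{2\pi j \mathrm{i}/n} = \cos(2\pi j/n) + \mathrm{i} \sin(2\pi j/n)$ and taking real and imaginary parts separately; since the complex sum vanishes, both the sum of cosines and the sum of sines must vanish.

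There is essentially no obstacle here, as this is a textbook identity about $n$-th roots of unity summing to zero; the only subtlety worth noting is that the case $n = 1$ is correctly excluded by the hypothesis $n \geq 2$ so that division by $\omega - 1$ is legitimate. A one-line alternative proof would multiply both sides by $(\omega - 1)$ and observe that the sum telescopes, which avoids even mentioning the geometric series formula explicitly, but the geometric series derivation is the cleanest presentation.
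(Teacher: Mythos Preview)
Your proposal is correct and essentially identical to the paper's proof: the paper sets $\zeta = e^{2\pi\mathrm{i}/n}$, factors $\zeta^n - 1 = \bigl(\sum_{j=0}^{n-1}\zeta^j\bigr)(\zeta-1) = 0$, and concludes from $\zeta \neq 1$ that the sum vanishes, then reads off the real and imaginary parts. Your geometric-series phrasing and the paper's factorization are the same argument, and you even flag the telescoping variant explicitly.
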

\begin{proof}
Let $\zeta = e^{\frac{2 \pi}{n} \mathrm{i}}$. 
Then, $\zeta^n - 1 = e^{2 \pi \mathrm{i}} - 1 = 0$ holds. 
Moreover, we have
\begin{align*}
    \zeta^n - 1 = \bigopen{\sum_{j=0}^{n-1} \zeta^j}(\zeta - 1) = 0.
\end{align*}
Since $\zeta - 1 \neq 0$, it follows that $\sum_{j=0}^{n-1} \zeta^j =0$.
This leads to the results $\sum_{j=0}^{n-1} \cos \frac{2 \pi j}{n} = 0$ and $\sum_{j=0}^{n-1} \sin \frac{2 \pi j}{n} = 0$.
\end{proof}

\begin{restatable}{lemma}{sumsquares}
\label{lem:sumsquares}
For any $n \geq 3$, the following equations hold:
    \begin{align*}
        \frac{1}{n} \sum_{j=0}^{n-1} \cos^2 \frac{2 \pi j}{n} = \frac{1}{2}, \,\, \frac{1}{n} \sum_{j=0}^{n-1} \sin^2 \frac{2 \pi j}{n} = \frac{1}{2}, \text{ and } \,\, \frac{1}{n} \sum_{j=0}^{n-1} \sin \frac{4 \pi j}{n} = 0.
    \end{align*}
\end{restatable}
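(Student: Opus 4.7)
The plan is to reduce all three identities to sums of the form $\sum_{j=0}^{n-1} e^{\frac{4\pi j}{n}\mathrm{i}}$ via the standard double-angle identities
\[
\cos^2\theta = \tfrac{1+\cos 2\theta}{2}, \qquad \sin^2\theta = \tfrac{1-\cos 2\theta}{2},
\]
and then invoke a direct analogue of \Cref{lem:euler}. Specifically, I would first verify the auxiliary claim that $\sum_{j=0}^{n-1} e^{\frac{4\pi j}{n}\mathrm{i}} = 0$ for all $n \geq 3$. Setting $w = e^{\frac{4\pi}{n}\mathrm{i}}$, the hypothesis $n \geq 3$ ensures $\frac{4\pi}{n}$ is not an integer multiple of $2\pi$, so $w \neq 1$. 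Since $w^n = e^{4\pi \mathrm{i}} = 1$, the geometric series gives $\sum_{j=0}^{n-1} w^j = (w^n - 1)/(w - 1) = 0$. Taking real and imaginary parts yields $\sum_{j=0}^{n-1} \cos\frac{4\pi j}{n} = 0$ and $\sum_{j=0}^{n-1} \sin\frac{4\pi j}{n} = 0$; the latter is exactly the third identity in the lemma.

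For the first identity, I would substitute $\theta = \frac{2\pi j}{n}$ into the double-angle formula to write
\[
\frac{1}{n}\sum_{j=0}^{n-1} \cos^2 \frac{2\pi j}{n} = \frac{1}{2} + \frac{1}{2n}\sum_{j=0}^{n-1} \cos\frac{4\pi j}{n} = \frac{1}{2},
\]
using the vanishing of the cosine sum just established. The second identity follows identically, using $\sin^2\theta = \frac{1-\cos 2\theta}{2}$ in place of the cosine expansion. Alternatively, one could note that $\sin^2 + \cos^2 \equiv 1$ forces the sine sum to equal $1 - \frac{1}{2} = \frac{1}{2}$ once the cosine case is handled.

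There is no serious obstacle here; the only mild subtlety is ensuring the auxiliary geometric sum vanishes, which requires $n \geq 3$ (not merely $n \geq 2$ as in \Cref{lem:euler}), since $n = 2$ would give $w = e^{2\pi\mathrm{i}} = 1$. This is exactly why the hypothesis of the lemma is strengthened to $n \geq 3$. The rest is a routine application of the double-angle identity.
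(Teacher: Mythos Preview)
Your proposal is correct and follows essentially the same argument as the paper: both reduce to showing $\sum_{j=0}^{n-1} e^{4\pi j \mathrm{i}/n} = 0$ via the double-angle identities, then use the geometric-series factorization together with the observation that $e^{4\pi\mathrm{i}/n} \neq 1$ precisely when $n \geq 3$.
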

\begin{proof}
First, notice that
\begin{align*}
    \cos^2 \frac{2 \pi j}{n} &= \frac{1}{2}(1 + \cos \frac{4 \pi j}{n}).\\
    \sin^2 \frac{2 \pi j}{n} &= \frac{1}{2}(1 - \cos \frac{4 \pi j}{n}).
\end{align*}
Hence, it suffices to prove $\sum_{j=0}^{n-1} \cos \frac{4 \pi j}{n} = 0$ and $\sum_{j=0}^{n-1} \sin \frac{4 \pi j}{n} = 0$. 
Let $\zeta = e^{\frac{4\pi}{n} \mathrm{i}}$ where $\mathrm{i}$ denote the imaginary number. 
Then, $\zeta^n-1 = e^{4\pi \mathrm{i}} - 1 = 0$ holds. 
Moreover, we have
\begin{align*}
    \zeta^n - 1 = \bigopen{\sum_{j=0}^{n-1} \zeta^j}(\zeta - 1) = 0.
\end{align*}
Since $\zeta \neq 1$ for $n \geq 3$, it follows that $\sum_{j=0}^{n-1}\zeta^j = 0$. 
This leads to the results $\sum_{j=0}^{n-1} \cos \frac{4 \pi j}{n} = 0$ and $\sum_{j=0}^{n-1} \sin \frac{4 \pi j}{n} = 0$.
\end{proof}

\begin{restatable}{lemma}{phiincresasing}
\label{lem:uincreasing}
    For $\eta \in \left[ \frac{1}{\mu n K}, \frac{2}{L}\right)$, $u_0 (\eta)$ is an increasing function of $\eta$.
\end{restatable}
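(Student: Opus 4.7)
The plan is to show $u_0'(\eta) > 0$ on $\left[\frac{1}{\mu n K}, \frac{2}{L}\right)$ by differentiating the explicit formula \eqref{eq:u0} directly. Write $c := 1 - \cos \delta$ so that
\begin{align*}
    u_0(\eta) = \frac{G\, N(\eta)}{D(\eta)}, \qquad N(\eta) := L'\eta^2 - c\eta, \qquad D(\eta) := \mu L' \eta^2 - c(\mu + L') \eta + 2c.
\end{align*}
I have already argued in \cref{subsubsec:small-lb-sc-f2} that $D(\eta) > 0$ on the given interval (using $\kappa \geq 2$ to ensure $(\mu + L')\eta < 2$), and that $N(\eta) > 0$ there (since $\eta L' - c \geq \eta L'/2$ by \cref{eq:u0numerator}). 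Therefore $u_0(\eta) > 0$ and the sign of $u_0'(\eta)$ is determined by the sign of the quotient-rule numerator $N'D - ND'$.

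The next step is the algebraic simplification of $N'D - ND'$. With $N'(\eta) = 2L'\eta - c$ and $D'(\eta) = 2\mu L'\eta - c(\mu + L')$, direct expansion shows that the $\eta^3$ terms cancel, as do the $\eta^1$ and $\eta^0$ cross-contributions proportional to $c^2(\mu+L')$, leaving
\begin{align*}
    N'(\eta) D(\eta) - N(\eta) D'(\eta) = -c\bigl(L'^2 \eta^2 - 4 L' \eta + 2c\bigr).
\end{align*}
Since $c > 0$ for $n \geq 3$ and $D(\eta)^2 > 0$, it suffices to prove the quadratic $Q(\eta) := L'^2\eta^2 - 4L'\eta + 2c$ is strictly negative on $\left[\frac{1}{\mu n K}, \frac{2}{L}\right)$.

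To prove the sign of $Q$, I would substitute $x := L'\eta$ and show $x(4-x) > 2c$ for every admissible $x$. Note that the moderate-regime interval is nonempty only when $\frac{1}{\mu n K} < \frac{2}{L}$, equivalently $n > 8\pi$ (using the hypothesis $K \leq \frac{\kappa}{16\pi}$); otherwise the lemma is vacuous. When it is nonempty, $x$ lies in $[x_{\min}, 1)$ where $x_{\min} = \frac{\kappa}{2nK} \geq \frac{8\pi}{n}$, and $x(4-x)$ is strictly increasing on $(0,2) \supset [x_{\min}, 1)$. Therefore
\begin{align*}
    x(4 - x) \;\geq\; \frac{8\pi}{n}\Bigl(4 - \frac{8\pi}{n}\Bigr) \;=\; \frac{32\pi}{n} - \frac{64\pi^2}{n^2},
\end{align*}
while $2c \leq \delta^2 = \frac{4\pi^2}{n^2}$. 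A straightforward comparison of these two quantities yields $x(4-x) > 2c$ for $n > 8\pi$, closing the argument.

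The main obstacle is ensuring the telescoping cancellation in $N'D - ND'$ is carried out accurately; the expression collapses from a cubic to a simple quadratic in $\eta$, and miscounting any cross-term would spoil the sign analysis. A secondary subtlety is the careful bookkeeping of the endpoint $\eta = \frac{2}{L}$, where $Q$ approaches $-3 + 2c$ and can be nonnegative for very small $n$; the hypothesis $K \leq \frac{\kappa}{16\pi}$ is precisely what forces $n > 8\pi$ in any regime where the lemma has content, rendering this degenerate case vacuous.
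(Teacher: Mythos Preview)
Your approach is correct and essentially identical to the paper's: both compute the quotient-rule numerator and reduce it (after cancellation) to $-c(L'^2\eta^2 - 4L'\eta + 2c)$, then argue the parenthesized quadratic is negative. The only difference is the final bounding step. You substitute $x = L'\eta$, bound $x(4-x)$ from below via $x \geq 8\pi/n$, and compare against $2c \leq 4\pi^2/n^2$, invoking the $n > 8\pi$ consequence of the nonempty-interval condition. The paper instead reuses the inequality $c \leq \frac{\eta L'}{2}$ already established in \cref{eq:u0numerator}, together with $\eta^2 L'^2 \leq \eta L'$ (since $\eta L' < 1$), to get $4\eta L' - \eta^2 L'^2 - 2c \geq 4\eta L' - \eta L' - \eta L' = 2\eta L' > 0$ directly. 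This is a bit cleaner and avoids the case analysis on whether the interval is nonempty, but both routes are valid.
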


\begin{proof}
Recall the expression for $u_0(\eta)$ given in \cref{eq:u0}:
\begin{align*}
    u_0(\eta) = \frac{\eta L' - (1 - \cos \delta)}{(1 - \cos \delta)(2 - (\mu + L')\eta) + \eta^2 \mu L'} \cdot \eta G.
\end{align*}

For simplicity of the notation, let $a = 1 - \cos \delta$, $b(\eta) = (\eta L' - a)\eta$, and $c(\eta) = a(2 - \eta(\mu + L')) + \eta^2 \mu L'$. 
Then, $u_0(\eta)$ can be expressed as $\frac{b(\eta)}{c(\eta)}G$ and $u_0'(\eta)$ becomes $\bigopen{b'(\eta)c(\eta) - b(\eta)c'(\eta)}G / c(\eta)^2$. It suffices to prove the numerator $b'(\eta)c(\eta) - b(\eta)c'(\eta)$ is non-negative.

Expanding the numerator, we obtain
\begin{align}
    b'(\eta)c(\eta) - b(\eta)c'(\eta) 
    & = (2 \eta L' - a)(\eta^2 \mu L'  - \eta a(\mu + L') + 2a) - (\eta^2 L' - \eta a)(2\eta \mu L' - a(\mu + L'))\nonumber\\
    & = (2\eta^3 \mu L'^2 - \eta^2 a L' (3\mu + 2 L') + \eta a (4L' + a(\mu + L')) - 2a^2)\nonumber\\
    & \phantom{=} - (2\eta^3 \mu L'^2 - \eta^2 aL'(3\mu + L') + \eta a^2(\mu + L'))\nonumber\\
    & = -\eta^2 a L'^2 + 4\eta a L' - 2a^2\nonumber\\
    & = a(4\eta L' - \eta^2L'^2 - 2a). \label{eq:uderivative}
\end{align}
Since $\eta L' = \frac{\eta L}{2} \leq 1$, we have $\eta^2 L'^2 \leq \eta L'$. Moreover, we have $a = 1 - \cos \delta \leq \frac{\eta L'}{2}$ from \cref{eq:u0numerator}.

Substituting these results into \cref{eq:uderivative}, we have
\begin{align*}
    b'(\eta)c(\eta) - b(\eta)c'(\eta) = a(4\eta L' - \eta^2 L'^2 - 2a) \geq a(4 \eta L' - \eta L' - \eta L') = 2 \eta a L' \geq 0.
\end{align*}
Therefore, we conclude that $u_0(\eta)$ is an increasing function with respect to $\eta$.
\end{proof}

\begin{lemma}
\label{lem:vsmallthanu}
For $\eta \in \left[ \frac{1}{\mu n K}, \frac{2}{L}\right)$, the absolute value of $v_0(\eta)$ is bounded by $u_0(\eta)$ as follows:
\begin{align*}
    \abs{v_0(\eta)} \leq \frac{8\pi K}{\kappa} u_0(\eta).
\end{align*}
\end{lemma}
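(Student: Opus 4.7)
The plan is to directly compute the ratio $\abs{v_0(\eta)}/u_0(\eta)$ from the closed forms in \eqref{eq:u0} and \eqref{eq:v0}, then bound each factor. Since both expressions share the common denominator $(1-\cos\delta)(2-(\mu+L')\eta) + \eta^2\mu L'$ and the common factor $\eta G$, the ratio simplifies cleanly to
\begin{align*}
    \frac{\abs{v_0(\eta)}}{u_0(\eta)} = \frac{\sin\delta}{\eta L' - (1-\cos\delta)}.
\end{align*}
Note we need to verify the denominator is positive; this was already done in \eqref{eq:u0numerator}, which moreover gives the stronger bound $\eta L' - (1-\cos\delta) > \eta L'/2$.

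Next I would bound the numerator and denominator separately. For the numerator, use $\sin\delta \le \delta = 2\pi/n$. For the denominator, combine $\eta L' - (1-\cos\delta) > \eta L'/2$ (from \eqref{eq:u0numerator}) with the hypothesis $\eta \ge 1/(\mu n K)$ and $L' = L/2$ to obtain
\begin{align*}
    \eta L' - (1-\cos\delta) > \frac{\eta L'}{2} \ge \frac{L'}{2\mu nK} = \frac{L}{4\mu nK} = \frac{\kappa}{4nK}.
\end{align*}
Putting these together yields
\begin{align*}
    \frac{\abs{v_0(\eta)}}{u_0(\eta)} < \frac{2\pi/n}{\kappa/(4nK)} = \frac{8\pi K}{\kappa},
\end{align*}
which is exactly the claim.

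I don't foresee any real obstacle here: the lemma is essentially a direct algebraic consequence of the formulas \eqref{eq:u0}, \eqref{eq:v0} and the positivity estimate \eqref{eq:u0numerator} already established. The only thing to double-check is that the denominator bound from \eqref{eq:u0numerator} is applicable uniformly over the entire range $\eta \in [1/(\mu n K), 2/L)$, which it is, since \eqref{eq:u0numerator} was derived under precisely these assumptions (using $\eta \ge 1/(\mu n K)$, $n \ge 3$, and $\kappa \ge 16\pi K$).
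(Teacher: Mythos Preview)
Your proposal is correct and follows essentially the same approach as the paper's proof: both compute the ratio $\abs{v_0(\eta)}/u_0(\eta) = \sin\delta/(\eta L' - (1-\cos\delta))$, bound $\sin\delta \le 2\pi/n$, and use the estimate $\eta L' - (1-\cos\delta) > \eta L'/2$ from \eqref{eq:u0numerator} together with $\eta \ge 1/(\mu n K)$ and $L' = L/2$. The only cosmetic difference is that the paper first simplifies to $8\pi/(\eta n L)$ and then applies $\eta \ge 1/(\mu n K)$, whereas you apply the step-size lower bound one line earlier.
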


\begin{proof}
Starting from \cref{eq:u0,eq:v0}, we have
\begin{align*}
    \abs{v_0(\eta)} 
    & = \frac{\sin \delta}{\eta L' - (1 - \cos \delta)} u_0(\eta)\\
    & \leq \frac{2\pi}{n} \cdot \frac{2}{\eta L'}u_0(\eta)\\
    & = \frac{4 \pi}{\eta n L'}u_0(\eta)\\
    & = \frac{8 \pi}{\eta n L} u_0(\eta),
\end{align*}
where we employ $\sin \delta \leq \delta = \frac{2\pi}{n}$, $1 - \cos \delta \leq \frac{\eta L'}{2}$ from \cref{eq:u0numerator}, and $L' = \frac{L}{2}$. 
Finally, applying the condition $\eta \geq \frac{1}{\mu n K}$ completes the proof of desired inequality.
\end{proof}
\newpage
\section{Proofs for Small Epoch Upper Bounds}
\label{appendix:small-epoch-ub}

In this section, we provide detailed proofs for \Cref{thm:small-ub-idhess}, \Cref{thm:small-ub-scvx}, and \Cref{thm:herding-at-opt} which correspond to upper bound results in the small epoch regime.

\subsection{Proof of \Cref{thm:small-ub-idhess}}
\label{subsec:small-ub-idhess}
\thmsmallubidhess*

\begin{proof}
    Since each $f_i$ has the identical Hessian, we have $\nabla^2 f_i (x) = \nabla^2 F(x)$ for every $x \in \R$. 
    Consequently, for all $i \in [n]$, we can express the gradient difference as follows:
    \begin{align*}
        \nabla f_i (x) - \nabla F(x) = \nabla f_i(x^*) - \nabla F(x^*) + \int_{x^*}^x \bigopen{\nabla^2 f_i (\alpha) - \nabla^2 F(\alpha)} \, \mathrm{d}\alpha = \nabla f_i(x^*).
    \end{align*}

    For simplicity, let $a_i = - \nabla f_i(x^*)$ for $i \in [n]$.
    Then, from the definition of $F(x) = \frac{1}{n} \sum_{i=1}^n f_i(x)$, we have $\sum_{i=1}^n a_i = 0$.
    Furthermore, it follows from \Cref{ass:grad-optimum} that $|a_i| \le G_*$.
    To further classify the indices, we define 
    \begin{align*}
        I_+ = \bigset{i \in [n] \left| \, a_i \ge 0\right.} \,\, \text{and} \,\, I_- = \bigset{i \in [n] \left| \, a_i < 0\right.}.
    \end{align*}
    Here, $I_+$ represents the collection of component functions whose minima are greater than or equal to $x^*$, while $I_-$ consists of the remaining functions.        

    We begin by presenting the following lemma:
    \begin{restatable}{lemma}{lemmapq}
        \label{lemma:pq}
        Let $p, q \in \R$ with $p < q$, and let $p^\prime$ and $q^\prime$ denote the results of performing a single step of gradient descent on a $\mu$-strongly convex and $L$-smooth 1-dimensional function $f$, starting from $p$ and $q$, respectively, with a step size $\eta < \frac{1}{L}$.
        Then, it holds that $0 < q^\prime - p^\prime \le (1 - \eta \mu) (q - p)$.
    \end{restatable}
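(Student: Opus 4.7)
The plan is to analyze the difference $q' - p'$ directly using the update rule of gradient descent, and then bound the gap $f'(q) - f'(p)$ using the smoothness and strong convexity assumptions.

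First, I would write out the one-step update explicitly: $p' = p - \eta f'(p)$ and $q' = q - \eta f'(q)$, so that
\begin{align*}
    q' - p' = (q - p) - \eta \bigopen{f'(q) - f'(p)}.
\end{align*}
The task then reduces to sandwiching $f'(q) - f'(p)$ between two multiples of $q - p$.

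Next, I would invoke the standard co-coercivity-style consequences of the assumptions. Since $f$ is $\mu$-strongly convex and $L$-smooth, the first-derivative map satisfies
\begin{align*}
    \mu (q - p)^2 \le \bigopen{f'(q) - f'(p)}(q - p) \le L (q - p)^2,
\end{align*}
and dividing by the positive quantity $q - p$ yields $\mu (q - p) \le f'(q) - f'(p) \le L (q - p)$. Plugging these into the expression for $q' - p'$ gives
\begin{align*}
    (1 - \eta L)(q - p) \le q' - p' \le (1 - \eta \mu)(q - p).
\end{align*}
The upper bound $q' - p' \le (1 - \eta \mu)(q - p)$ is immediate, and the lower bound is strictly positive because $\eta < 1/L$ forces $1 - \eta L > 0$ and $q - p > 0$.

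There is no real obstacle here; the only small point to be careful about is that the lemma does not assume $f \in C^2$, so I would avoid appealing to the mean value theorem on $f''$ and instead use the gradient monotonicity inequalities above, which follow directly from $\mu$-strong convexity and $L$-smoothness of a differentiable function. This keeps the proof short and fully consistent with the paper's \Cref{def:smooth} and \Cref{def:sc}.
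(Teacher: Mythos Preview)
Your proposal is correct and follows essentially the same route as the paper: both write $q'-p' = (q-p) - \eta\bigl(f'(q)-f'(p)\bigr)$ and then sandwich $f'(q)-f'(p)$ between $\mu(q-p)$ and $L(q-p)$ to conclude $(1-\eta L)(q-p) \le q'-p' \le (1-\eta\mu)(q-p)$. The only difference is in how that sandwich is justified: the paper writes $f'(q)-f'(p) = \int_p^q \nabla^2 f(u)\,\mathrm{d}u$ and bounds the integrand pointwise by $\mu$ and $L$, whereas you appeal directly to the first-order monotonicity inequalities implied by \Cref{def:smooth} and \Cref{def:sc}. Your version is marginally cleaner in that it does not require $f$ to be twice differentiable, which the paper's definitions do not actually assume; the paper's integral argument tacitly uses $C^2$.
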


    The proof for \Cref{lemma:pq} is presented in \Cref{appendix:small-ub-techlmm}.
    Now, let $z_0 = x^*$, initialized at the minima of the overall function $F$, and define $z_i^k$ as the $i$-th iterate of the $k$-th epoch, using the same permutations employed for $x_i^k$ but instead starting from the initial point $z_0$.
    Since the distance between $x_i^k$ and $z_i^k$ decreases by at least a factor of $(1 - \eta \mu)$ at each iteration (\Cref{lemma:pq}), we have
    \begin{align}
        \bigabs{x_n^K - z_n^K} \le (1 - \eta \mu)^{nK} \bigabs{x_0 - z_0^1} \le e^{-\eta \mu n K} \bigabs{x_0 - x^*} 
        \le \frac{G_*}{L}, \label{eq:zxgap}
    \end{align}
    where we substitute $\eta = \frac{1}{\mu n K} \max \bigset{\log \bigopen{\frac{L \bigabs{x_0 - x^*}}{G_*}}, 1}$ in the last step.
    This demonstrates that $x_n^K$ and $z_n^K$ remain sufficiently close.
    For the rest of the analysis, we mainly focus on how far $z_n^K$ can deviate from $x^*$.
    The bound for $F(x_n^K)$ will later be controlled by leveraging $L$-smoothness between $x_n^K$ and $z_n^K$.

    In the special case where $I_- = \emptyset$, all $a_i$ are equal to $0$ since $\sum_{i=1}^n a_i = 0$.
    In this scenario, $z_n^K$ remains at $x^*$ because $\nabla f_i (z_0^1) = 0$ holds for all $i \in [n]$, resulting in $z_n^K = x^*$.
    Using this, we have
    \begin{align*}
        F(x_n^K)
        &\le F(z_n^K) + \inner{\nabla F(z_n^K),\, x_n^K - z_n^K} + \frac{L}{2} \bigabs{x_n^K - z_n^K}^2 \quad (\because \, L\text{-smoothness})\\
        &= F(x^*) + \frac{L}{2} \bigopen{\frac{G_*}{L}}^2 =  F(x^*) + \frac{G_*^2}{2L} \le F(x^*) + \frac{G_*^2}{2 \mu K},
    \end{align*}
    where we apply $K \le \kappa$ in the last inequality.
    This concludes the proof for this special case.
    For the remainder of the proof, we assume $I_- \neq \emptyset$.

    For each $k \in [K]$, define $z_+^k$ as the maximum possible final iterate obtained after running \Cref{alg:permutation} starting from $x^*$, i.e., the largest value among the $(n!)^k$ possible options.
    Similarly, for each $k \in [K]$, let $z_-^k$ denote the minimum among the $(n!)^k$ options, also starting from $x^*$.
    Consequently, by convexity of $F$, $F(z_n^K)$ can naturally be upper bounded by
    \begin{align*}
        \max \bigset{F(z_+^K), F(z_-^K)}.
    \end{align*}

    The following lemma helps us to establish upper bounds for $z_+^K - x^*$ and $-(z_-^K - x^*)$.
    \begin{restatable}{lemma}{lemmazproperty}
        \label{lemma:z+-property}
        Let $\{\sigma_{k}^+\}_{k=1}^K$ denote the sequence of permutations applied over $K$ epochs to generate $z_+^K$.
        These permutations and the corresponding $z_+^K$ satisfy the following properties:
        \vspace{-2mm}
        \begin{itemize}[leftmargin=1em, itemsep=0mm]
            \item The permutations $\{\sigma_k^+\}_{k=1}^{K-1}$, applied during the first $K-1$ epochs, produce $z_+^{K-1}$.
            \item For any $k \in [K]$, all indices in $I_+$ appear before all indices in $I_-$ in the permutation $\sigma_k^+$.
            \item For any $k \in [K]$, let $z_+^{k,I_-}$ denote the $\abs{I_-}$-th iterate in the $k$-th epoch, i.e., obtained after processing all indices in $I_-$.
            Then, the inequality $z_+^{k,I_-} \le x^* \le z_+^k$ holds.
        \end{itemize}
        For $z_-^K$ and its corresponding permutations $\{\sigma_k^-\}_{k=1}^k$, the following properties hold:
        \vspace{-2mm}
        \begin{itemize}[leftmargin=1em, itemsep=0mm]
            \item The permutations $\{\sigma_k^-\}_{k=1}^{K-1}$, applied during the first $K-1$ epochs, produce $z_-^{K-1}$.
            \item For any $k \in [K]$, all indices in $I_-$ appear before all indices in $I_+$ in the permutation $\sigma_k^-$.
            \item For any $k \in [K]$, let $z_-^{k,I_+}$ denote the $\abs{I_+}$-th iterate in the $k$-th epoch, i.e., obtained after processing all indices in $I_+$.
            Then, the inequality $z_-^{k,I_+} \ge x^* \ge z_-^k$ holds.
        \end{itemize}
    \end{restatable}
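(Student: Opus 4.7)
The plan is to prove the three properties for $\sigma_k^+$ and $z_+^k$ using the monotonicity of a single gradient step (Lemma \ref{lemma:pq}) and an adjacent-transposition exchange argument, and then to obtain the statements for $\sigma_k^-$ and $z_-^k$ by reflecting the problem about $x^*$. Throughout, the identical-Hessian assumption makes $\nabla f_i(x) - \nabla F(x) = -a_i$ a constant, so one update reads $w \mapsto \phi(w) + \eta a_i$, where $\phi(w) := w - \eta \nabla F(w)$ is strictly increasing (the theorem's step-size choice guarantees $\eta < 1/L$) and is a $(1-\eta\mu)$-contraction toward $x^*$.

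Composing $n$ such updates, any one-epoch map is strictly monotone in its input. The first bullet then follows by backward induction: given any sequence $\sigma_1,\ldots,\sigma_K$ attaining $z_+^K$, monotonicity of the epoch-$K$ map forces its input (the end of epoch $K-1$) to be maximal among all achievable epoch-$(K-1)$ outputs, which by definition is $z_+^{K-1}$; iterating the argument across epochs yields the claim.

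For the intra-epoch ordering I perform an adjacent-swap argument. Comparing the two-step outputs of $(i,j)$ versus $(j,i)$ at a common $w$ and using the mean-value bound $\phi'(u) \le 1-\eta\mu$ gives
\begin{align*}
\bigl[\phi_j \circ \phi_i - \phi_i \circ \phi_j\bigr](w) \;\le\; -\eta^2\mu\,(a_i - a_j),
\end{align*}
so whenever $a_i > a_j$, placing the smaller-$a$ index first strictly increases the final iterate of the epoch. Iterating such swaps forces any maximizing $\sigma_k^+$ into the canonical form ``all $I_-$ indices first, then all $I_+$ indices'', which is the second bullet.

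The third bullet is proved by induction on $k$ with the strengthened hypothesis $x^* \le z_+^k \le \hat y$, where $\hat y$ is the unique fixed point of the epoch map $g_{\sigma_k^+}$ (well defined because this map is a $(1-\eta\mu)^n$-contraction). Monotonicity of $g_{\sigma_k^+}$ together with the inequality $g_{\sigma_k^+}(x^*) \ge x^*$---which follows from $\sum_i a_i = 0$ combined with the ``$I_-$-first'' optimality---yields $z_+^k \ge x^*$, while $z_+^{k-1} \le \hat y$ upgrades to $z_+^k \le \hat y$ by monotonicity. For $z_+^{k,I_-} \le x^*$, I expand the fixed-point identity $g_{\sigma_k^+}(\hat y) = \hat y$ into a balance between the $(1-\eta\mu)^n$-contraction of $\hat y - x^*$ and the full cumulative shifts from one epoch; since the $I_-$ portion alone contributes all downward shifts, a short algebraic comparison shows that from any input in $[x^*,\hat y]$ the iterate is pulled below $x^*$ after $|I_-|$ updates. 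This algebraic comparison is the main obstacle, and it is precisely where the cancellation $\sum_i a_i = 0$ is used. The statements for $\sigma_k^-$ and $z_-^k$ are then obtained by applying the entire argument to the reflected function $\tilde F(x) := F(2x^* - x)$, under which $a_i \mapsto -a_i$, so that the roles of $I_+$ and $I_-$, and of maximization and minimization, are interchanged.
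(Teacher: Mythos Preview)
Your treatment of the first two properties matches the paper's: monotonicity of single steps (Lemma~\ref{lemma:pq}) for the backward induction on epochs, and an adjacent-swap comparison for the block structure. (Your conclusion ``$I_-$ first, then $I_+$'' for $\sigma_k^+$ is what the proof actually establishes; the lemma's second bullet has the two blocks written in the wrong order, but its third bullet and the paper's own proof agree with you.)

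For the third property your fixed-point route differs from the paper. The paper never introduces $\hat y$; it works with the explicit interval $[x^*,\,x^*+\eta M]$, where $M:=\sum_{i\in I_+}a_i=-\sum_{i\in I_-}a_i$, and shows by direct telescoping (using only that $\phi$ moves points toward $x^*$) that the $I_-$ block sends this interval into $[x^*-\eta M,\,x^*]$ and the $I_+$ block sends it back, closing the induction. Your plan has two loose ends. First, $\hat y$ depends on the particular $I_-$-first permutation and can differ across epochs, since the swap argument pins down only the block structure, not the internal order; the step ``$z_+^{k-1}\le\hat y\Rightarrow z_+^k\le\hat y$'' therefore needs a uniform choice of $\hat y$. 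Second, neither $g_{\sigma_k^+}(x^*)\ge x^*$ nor the ``algebraic comparison'' yielding $h_-(\hat y)\le x^*$ follows from optimality plus $\sum_i a_i=0$ alone; when you actually carry them out (via the case split ``all iterates stay on one side of $x^*$'' versus ``some iterate crosses''), the telescoping that emerges is exactly the paper's $M$-estimate. So your approach is sound once these are filled in, but the fixed-point layer does not bypass the paper's core computation, and the direct $[x^*,\,x^*+\eta M]$ invariant is both shorter and free of the per-epoch ambiguity.
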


    The proof for \Cref{lemma:z+-property} is presented in \Cref{appendix:small-ub-techlmm}.
    Define $\pi_+: [|I_+|] \rightarrow [n]$ as the ordering of $I_+$ used during the $K$-th epoch to generate $z_+^K$ from $z_+^{K, I_-}$.
    We then define the sequence of iterates $u_0, u_1, \dots u_{|I_+|}$ where $u_0 = x^*$ and each subsequent $u_i$ is obtained by applying a gradient update using the component function $f_{\pi_+(i)}$ to $u_{i-1}$.
    We emphasize the following two key points:
    \vspace{-2mm}
    \begin{enumerate}[itemsep=-1mm]
        \item $z_+^{K, I_-} \le x^* = u_0$.
        \item The sequences of iterates $z_+^{K, I_-}, \dots, z_+^K$ and $u_0, \dots u_{|I_+|}$ are generated by the same component function ordering.
    \end{enumerate}
    \vspace{-2mm}
    From these observations, we conclude that $z_+^K \le u_{|I_+|}$ as \Cref{lemma:pq} ensures that the relationship $p \le q$ is preserved under gradient descent (i.e., if $p \le q$, then $p' \le q'$ after each update).
    Together with $x^* \le z_+^K$ from \Cref{lemma:z+-property}, we obtain $0 \le z_+^K - x^* \le u_{|I_+|} - x^*$.
    
    Similarly, define $\pi_-: [|I_-|] \rightarrow [n]$ as the ordering of $I_-$ used during the $K$-th epoch to generate $z_-^K$ from $z_-^{K, I_+}$.
    Also, define the sequence of iterates $v_0, v_1, \dots, v_{|I_-|}$ where $v_0 = x^*$ and each subsequent $v_i$ is obtained by applying a gradient update using the component function $f_{\pi_-(i)}$ to $v_{i-1}$.
    Then, we have $x^* \ge z_-^K \ge v_{|I_-|}$, leading to $0 \le -(z_-^K - x^*) \le -(v_{|I_-|} - x^*)$.

    To summarize the process so far, we aim to upper bound $\bigabs{z_n^K - x^*}$ where $z_n^K$ is the final iterate obtained using the same permutations as $x_n^K$ but starting from $z_0 = x^*$ instead of $x_0 = x_0$.
    Since $z_+^K$ and $z_-^K$ represent the maximum and minimum possible final iterate of $z_n^K$, respectively, the followings hold:
    \begin{align*}
        \bigabs{z_n^K - x^*} \le \max \bigset{z_+^K - x^*, -(z_-^K - x^*)} \le \max \bigset{u_{|I_+|} - x^*, -(v_{|I_-|} - x^*)}
    \end{align*}
    and therefore, by convexity of $F$,
    \begin{align}
        F(z_n^K) \le \max \bigset{F(u_{|I_+|}), F(v_{|I_-|})}. \label{eq:znkbound}
    \end{align}

    We now focus on providing the upper bound for $\max \bigset{F(u_{|I_+|}), F(v_{|I_-|})}$.
    To this end, we introduce the following lemma:
    \begin{restatable}{lemma}{lemmauvbound}
        \label{lemma:uvbound}
        With a step size $\eta < \frac{1}{L}$, $0 \le \nabla F(u_i) \le 2G_*$ holds for all $i \in \{0\} \cup [|I_+|]$ and $0 \ge \nabla F(v_j) \ge -2G_*$ holds for all $j \in \{0\} \cup [|I_-|]$.
    \end{restatable}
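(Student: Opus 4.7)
The plan is to exploit the identical-Hessian assumption, which in one dimension means that $f_i'(x) - F'(x)$ is a constant function of $x$. Evaluating at $x = x^*$ and recalling $a_i = -\nabla f_i(x^*)$ with $\nabla F(x^*) = 0$ gives $f_i'(x) = F'(x) - a_i$ for every $x \in \R$ and every $i \in [n]$. Consequently the update rule defining $u_i$ can be rewritten purely in terms of $F'$:
\begin{align*}
u_i = u_{i-1} - \eta \bigl(F'(u_{i-1}) - a_{\pi_+(i)}\bigr),
\end{align*}
and analogously $v_j = v_{j-1} - \eta\bigl(F'(v_{j-1}) - a_{\pi_-(j)}\bigr)$.

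Next I would track how $F'$ evolves along this trajectory. By the mean value theorem applied to $F'$, there exists $\xi_i$ between $u_{i-1}$ and $u_i$ such that $F'(u_i) - F'(u_{i-1}) = F''(\xi_i)(u_i - u_{i-1})$, with $\mu \le F''(\xi_i) \le L$ from \Cref{ass:common} (and the identical-Hessian assumption ensuring $F$ itself is $L$-smooth and $\mu$-strongly convex). Substituting the update rule produces the clean recursion
\begin{align*}
F'(u_i) = \bigl(1 - \eta F''(\xi_i)\bigr) F'(u_{i-1}) + \eta F''(\xi_i)\, a_{\pi_+(i)}.
\end{align*}
Since $\eta < 1/L$, the coefficient $\eta F''(\xi_i)$ lies in $(0, 1)$, making $F'(u_i)$ a convex combination of $F'(u_{i-1})$ and $a_{\pi_+(i)}$.

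With this recursion in hand, a one-line induction finishes the $u$-side: $F'(u_0) = F'(x^*) = 0 \in [0, G_*]$, and since $i \in I_+$ implies $a_{\pi_+(i)} \in [0, G_*]$ (by definition of $I_+$ and \Cref{ass:grad-optimum}), the convex combination keeps $F'(u_i) \in [0, G_*] \subseteq [0, 2G_*]$. The identical argument applied to $v_j$, using $a_{\pi_-(j)} \in [-G_*, 0)$ for $j \in I_-$, gives $F'(v_j) \in [-G_*, 0] \subseteq [-2G_*, 0]$, which is exactly the claim.

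There is no real obstacle here; the only subtle point is realizing that the identical-Hessian hypothesis is what allows us to close the recursion for $F'$ along the iterates (even though each step uses the gradient of a different component), and that this hypothesis is precisely what turns $F'(u_i)$ into a convex combination whose endpoints are both bounded by $G_*$ in absolute value. The factor of $2$ in the stated bound is therefore slack; the proof actually yields the sharper $|F'(u_i)|, |F'(v_j)| \le G_*$, but $2G_*$ is kept because that is what the downstream arguments in the proof of \Cref{thm:small-ub-idhess} will use.
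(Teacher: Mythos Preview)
Your proof is correct and takes a genuinely different, cleaner route than the paper. The paper proceeds by case analysis on whether $\nabla F(u_{j-1}) \le a_{\pi_+(j)}$ or not: in the first case the iterate moves right and the upper bound comes from the smoothness estimate $|\nabla F(u_j) - \nabla F(u_{j-1})| \le \eta L |a_{\pi_+(j)} - \nabla F(u_{j-1})| < G_*$, yielding $\nabla F(u_j) \le a_{\pi_+(j)} + G_* \le 2G_*$; in the second case the iterate moves left and one separately checks $u_j \ge x^*$ via smoothness. Your mean-value-theorem trick collapses both cases into the single observation that $F'(u_i)$ is a genuine convex combination of $F'(u_{i-1})$ and $a_{\pi_+(i)}$, so induction keeps the iterate gradient inside $[0,G_*]$ without ever splitting on the sign of the update. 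This is shorter, avoids the case split, and as you note actually delivers the sharper constant $G_*$ rather than $2G_*$; the paper's approach only needs $2G_*$ because its crude smoothness increment bound does not see the contraction. One trivial slip: where you write ``$i \in I_+$ implies \ldots'' and ``for $j \in I_-$'' you mean $\pi_+(i) \in I_+$ and $\pi_-(j) \in I_-$, but this does not affect the argument.
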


    The proof for \Cref{lemma:uvbound} is presented in \Cref{appendix:small-ub-techlmm}.
     Using \Cref{lemma:uvbound}, we can upper bound the increments in the per-iteration function evaluation as follows:
    \begin{align}
        F(u_i) &= F(u_{i-1}) + \int_{u_{i-1}}^{u_i} \nabla F(\alpha) \, \mathrm{d}\alpha \nonumber \\
        &\le F(u_{i-1}) + \bigabs{u_i - u_{i-1}} \cdot \bigabs{\nabla F(u_{i-1} + c_i \cdot (u_i - u_{i-1}))} \nonumber \\
        &= F(u_{i-1}) + \eta \bigabs{\nabla f_{\pi_+(i)}(u_{i-1})} \cdot \bigabs{\nabla F(u_{i-1} + c_i \cdot (u_i - u_{i-1}))} \nonumber \\
        &= F(u_{i-1}) + \eta \bigabs{\nabla F(u_{i-1}) - a_{\pi_+(i)}} \cdot \bigabs{\nabla F(u_{i-1} + c_i \cdot (u_i - u_{i-1}))} \nonumber \\
        &\le F(u_{i-1}) + \eta \cdot 4G_*^2, \label{eq:periterationbound}
    \end{align}
    where $0 \le c_i \le 1$ by Mean Value Theorem.
    The last inequality follows from the bounds $0 \le \nabla F(u_{i-1}) \le 2G_*$ and $0 \le a_{\pi_+(i)} \le G_*$ (since $\pi_+(i) \in I_+$).
    Additionally, $\min \bigset{\nabla F(u_{i-1}), \nabla F(u_i)} \le \nabla F(u_{i-1} + c_i \cdot (u_i - u_{i-1})) \le \max \bigset{\nabla F(u_{i-1}), \nabla F(u_i)}$ holds as $\nabla F$ is a strictly increasing function.
    
    Unrolling \cref{eq:periterationbound} for $i = 1, 2, \dots, |I_+|$, we obtain:
    \begin{align*}
        F(u_{|I_+|}) \le F(x^*) + 4\eta n G_*^2.
    \end{align*}

    By applying a similar argument, we can derive a corresponding bound for $v_{|I_-|}$:
    \begin{align*}
        F(v_{|I_-|}) \le F(x^*) + 4\eta n G_*^2.
    \end{align*}

    Therefore, \cref{eq:znkbound} becomes
    \begin{align*}
        F(z_n^K) \le F(x^*) + 4 \eta n G_*^2.
    \end{align*}

    We now proceed to derive the upper bound for $F(x_n^K)$.
    We already established in \cref{eq:zxgap} that $\bigabs{x_n^K - z_n^K} \le G_*/L$.
    Consequently, by applying $L$-smoothness,
    \begin{align*}
        F(x_n^K) &\le F(z_n^K) + \inner{\nabla F(z_n^K), \, x_n^K - z_n^K} + \frac{L}{2} \bigabs{x_n^K - z_n^K}^2\\
        &\le \bigopen{F(x^*) + 4 \eta n G_*^2} + 2G_* \cdot \frac{G_*}{L} + \frac{L}{2} \bigopen{\frac{G_*}{L}}^2\\
        &= F(x^*) + \frac{4G_*^2}{\mu K} \max \bigset{\log \bigopen{\frac{L \bigabs{x_0 - x^*}}{G_*}}, 1} + \frac{5G_*^2}{2L},
    \end{align*}
    where we used the fact that $\bigabs{\nabla F(z_n^K)} \le \max \bigset{\nabla F(z_+^K), -\nabla F(z_-^K)} < \max \bigset{\nabla F(u_{|I_+|}), -\nabla F(v_{|I_-|})} \le 2G_*$ and $\eta = \frac{1}{\mu n K} \max \bigset{\log \bigopen{\frac{L \bigabs{x_0 - x^*}}{G_*}}, 1}$.
    Since $K \le \kappa$, we have $L \ge \mu K$, and therefore,
    \begin{align*}
        F(x_n^K) - F(x^*) \lesssim \frac{G_*^2}{\mu K}.
    \end{align*}
    This concludes the proof of \Cref{thm:small-ub-idhess}.
\end{proof}

\subsection{Proof of \Cref{thm:small-ub-scvx}}
\label{subsec:small-ub-scvx}
\thmsmallubscvx*

\begin{proof}
The original statement by Theorem~5 in (the appendix of) \citet{mishchenko2020random} holds only for \igd. 
We here extend the theorem to hold for arbitrary permutation-based SGD, and reorganize some terms to facilitate clear comparison to the proof of \Cref{thm:herding-at-opt}. 

We begin by noting the specific epoch condition stated as $K \gtrsim \frac{\kappa}{n}$ in the theorem statement:
\begin{align*}
    K \geq \frac{2\kappa}{n} \max \bigset{\log \bigopen{\frac{\norm{\vx_0 - \vx^*}\mu K}{\sqrt{\kappa} G_*}}, 1}.
\end{align*}
Under this condition, the specified step size $\eta = \frac{2}{\mu n K} \max \bigset{\log \bigopen{\frac{\norm{\vx_0 - \vx^*} \mu K}{\sqrt{\kappa} G_*}}, 1}$ satisfies $\eta \leq \frac{1}{L}$.

For each $k \in [K]$, we use the permutation $\sigma_k$ to define a sequence of iterates $\{\vx_{k, i}^*\}_{i=0}^n$ as follows:
\begin{align*}
    \vx_{k, 0}^* & = \vx^*,\\
    \vx_{k, i}^* & = \vx_{k, i-1}^* - \eta \nabla f_{\sigma_k(i)}(\vx^*).
\end{align*}
The sequence $\vx_{k, i}^*$ can be interpreted as the sequence starting from $\vx^*$ obtained by using the component gradients in the same order as $\vx^k_i$, but the gradients are being evaluated at $\vx^*$ instead of $\vx^k_{i-1}$. 
From $\sum_{i=1}^n \nabla f_{\sigma_k(i)}(\vx^*) = n \nabla F(\vx^*) = 0$, we can easily deduce that $\vx_{k, n}^* = \vx^* = \vx_{k+1, 0}^*$. 

We analyze the square norm distance $\norm{\vx_i^k - \vx_{k, i}^*}^2$ using an iteration-wise recursive inequality:
\begin{align}
    \norm{\vx_i^k - \vx_{k, i}^*}^2 & = \norm{\vx_{i-1}^k - \eta \nabla f_{\sigma_k(i)}(\vx_{i-1}^k) - \bigopen{\vx_{k, i-1}^* - \eta \nabla f_{\sigma_k(i)}(\vx^*)}}^2 \nonumber\\
    & = \norm{\vx_{i-1}^k - \vx_{k, i-1}^*}^2 - 2\eta \inner{\vx_{i-1}^k - \vx_{k, i-1}^*, \nabla f_{\sigma_k(i)}(\vx_{i-1}^k) - \nabla f_{\sigma_k(i)}(\vx^*)} \nonumber\\
    & \quad + \eta^2\norm{\nabla f_{\sigma_k(i)}(\vx_{i-1}^k) - \nabla f_{\sigma_k(i)}(\vx^*)}^2 \nonumber\\
    & \overset{(a)}{=} \norm{\vx_{i-1}^k - \vx_{k, i-1}^*}^2 - 2\eta \bigopen{D_{f_{\sigma_k(i)}}(\vx_{i-1}^k, \vx^*) + D_{f_{\sigma_k(i)}}(\vx_{k, i-1}^*,\vx_{i-1}^k) - D_{f_{\sigma_k(i)}}(\vx_{k, i-1}^*,\vx^*)} \nonumber\\
    & \quad + \eta^2\norm{\nabla f_{\sigma_k(i)}(\vx_{i-1}^k) - \nabla f_{\sigma_k(i)}(\vx^*)}^2. \label{eq:small-scvx-igd-iteration}
\end{align}
Here, $D_f(\vx, \vy) := f(\vx) - f(\vy) - \inner{\nabla f(\vy), \, \vx - \vy}$ denotes the Bregman divergence of $f$ between $\vx$ and $\vy$.
At $(a)$, we apply the three-point identity of the Bregman divergence.

The term $D_{f_{\sigma_k(i)}}(\vx_{k, i-1}^*,\vx_{i-1}^k)$ in \cref{eq:small-scvx-igd-iteration} can be bounded as follows:
\begin{align*}
    D_{f_{\sigma_k(i)}}(\vx_{k, i-1}^*,\vx_{i-1}^k) & \geq \frac{\mu}{2}\norm{\vx_{k, i-1}^* - \vx_{i-1}^k}^2,
\end{align*}
by the $\mu$-strong convexity of the component function.
Moreover, from Lemma~2.29 of \citet{garrigos2023handbook}, we have
\begin{align*}
    \norm{\nabla f_{\sigma_k(i)}(\vx_{i-1}^k) - \nabla f_{\sigma_k(i)}(\vx^*)}^2 & \leq 2L D_{f_{\sigma_k(i)}}(\vx^k_{i-1}, \vx^*).
\end{align*}

Substituting these inequalities into \cref{eq:small-scvx-igd-iteration}, we derive
\begin{align}
    \norm{\vx_i^k - \vx_{k, i}^*}^2 
    & \leq \norm{\vx_{i-1}^k - \vx_{k, i-1}^*}^2 - 2\eta(1 - \eta L) D_{f_{\sigma_k(i)}}(\vx_{i-1}^k, \vx^*) - \eta \mu \norm{\vx_{i-1}^k - \vx_{k, i-1}^*}^2 + 2\eta D_{f_{\sigma_k(i)}}(\vx_{k, i-1}^*,\vx^*) \nonumber \\
    & \overset{(a)}{\leq} (1-\eta \mu)\norm{\vx_{i-1}^k - \vx_{k, i-1}^*}^2 + 2\eta D_{f_{\sigma_k(i)}}(\vx_{k, i-1}^*,\vx^*) \nonumber \\
    & \overset{(b)}{\leq} (1-\eta \mu)\norm{\vx_{i-1}^k - \vx_{k, i-1}^*}^2 + \eta^3Ln^2G_*^2, \label{eq:small-scvx-igd-iteration2}
\end{align}
where we apply $1 - \eta L \geq 0$ and $D_{f_{\sigma_k(i)}}(\vx^k_{i-1}, \vx^*) \geq 0$ at $(a)$.
At $(b)$, we utilize the $L$-smoothness of the component function and the triangle inequality:
\begin{align*}
    D_{f_{\sigma_k(i)}}(\vx_{k, i-1}^*,\vx^*) \le \frac{L}{2} \norm{\vx_{k, i-1}^* - \vx^*}^2 = \frac{\eta^2 L}{2} \bignorm{ \, \sum_{j=1}^{i-1} \nabla f_{\sigma_k(j)}(\vx^*) \,}^2 \le \frac{\eta^2 L}{2} \cdot (nG_*)^2.
\end{align*}

Thus, by unrolling \cref{eq:small-scvx-igd-iteration2} over all $k \in [K]$ and all $i \in [n]$, and noting that $\vx_{k, n}^* = \vx^* = \vx_{k+1, 0}^*$, we obtain
\begin{align*}
    \norm{\vx_n^K - \vx^*}^2 & \leq (1 - \eta \mu)^{nK} \norm{\vx_0^1 - \vx^*}^2 + \eta^3Ln^2G_*^2 \sum_{t=1}^{nK}\bigopen{1 - \eta \mu}^{t-1}\\
    & = (1 - \eta \mu)^{nK} \norm{\vx_0 - \vx^*}^2 + \eta^3Ln^2G_*^2 \frac{1 - \bigopen{1 - \eta \mu}^{nK}}{\eta \mu}\\
    & \leq e^{-\eta \mu n K} \norm{\vx_0 - \vx^*}^2 + \frac{\eta^2Ln^2G_*^2}{\mu}.
\end{align*}

We now substitute $\eta = \frac{2}{\mu n K} \max \bigset{\log \bigopen{\frac{\norm{\vx_0 - \vx^*} \mu K}{\sqrt{\kappa} G_*}}, 1}$.
When $\norm{\vx_0 - \vx^*}$ is sufficiently large, the above inequality simplifies to
\begin{align*}
    \norm{\vx_n^K - \vx^*}^2 \leq \frac{LG_*^2}{\mu^3 K^2} \bigopen{1 + 4\log^2 \bigopen{\frac{\norm{\vx_0 - \vx^*} \mu K}{\sqrt{\kappa} G_*}}} \lesssim \frac{LG_*^2}{\mu^3 K^2}.
\end{align*}

On the other hand, when $\norm{\vx_0 - \vx^*}$ is small so that $1$ is chosen after the max operation, the above inequality simplifies to
\begin{align*}
    \norm{\vx_n^K - \vx^*}^2 
    \le \frac{1}{e^2} \cdot e^2 \frac{\kappa G_*^2}{\mu^2 K^2} + \frac{4 L G_*^2}{\mu^3 K^2} \lesssim \frac{LG_*^2}{\mu^3 K^2},
\end{align*}
where we use $\norm{\vx_0 - \vx^*} \leq e \cdot \frac{\sqrt{\kappa} G_*}{\mu K}$.

In particular, using the $L$-smoothness of $F$, the function optimality gap can be bounded as:
\begin{align*}
    F(\vx^K_n) - F(\vx^*) \le \frac{L}{2} \bignorm{\vx_n^K - \vx^*}^2 
    \lesssim \frac{L^2G_*^2}{\mu^3 K^2}.
\end{align*}
This ends the proof of \Cref{thm:small-ub-scvx}.
\end{proof}

\subsection{Proof of \Cref{thm:herding-at-opt}}
\label{subsec:herding-at-opt}
\thmherdingatopt*

\begin{proof}
We begin by noting the specific epoch condition stated as $K \gtrsim \frac{\kappa}{n}$ in the theorem statement:
\begin{align*}
    K \geq \frac{2\kappa}{n} \max \bigset{\log \bigopen{\frac{\norm{\vx_0 - \vx^*}\mu n K}{\sqrt{\kappa} H G_*}}, 1}.
\end{align*}
Under this condition, the specified step size $\eta = \frac{2}{\mu n K} \max \bigset{\log \bigopen{\frac{\norm{\vx_0 - \vx^*} \mu n K}{\sqrt{\kappa} H G_*}}, 1}$ satisfies $\eta \leq \frac{1}{L}$.

Next, we consider the scaled gradient of each component function at $\vx^*$:
\begin{align*}
    \bigset{ \frac{\nabla f_1(\vx^*)}{G_*}, \frac{\nabla f_2(\vx^*)}{G_*}, \dots, \frac{\nabla f_n(\vx^*)}{G_*} }.
\end{align*}
From \Cref{ass:grad-optimum}, we have $\norm{\nabla f_i(\vx^*)} \le G_*$ for all $i \in [n]$.
Thus, the norm of each element is bounded by $1$.
Also, since $\sum_{i=1}^n \nabla f_i(\vx^*) = n \nabla F(\vx^*) = 0$, it follows that these elements sum to $0$.
Therefore, we can apply the \herding{} algorithm, as stated in \Cref{lem:herd}, to obtain a permutation $\sigma^*: [n] \rightarrow [n]$ satisfying
\begin{align}
    \max_{i \in [n]} \bignorm{ \, \sum_{j=1}^i \nabla f_{\sigma^*(j)} (\vx^*) \, } \leq H G_*, \label{eq:herdingproperty}
\end{align}
where $H = \tilde{\gO}(\sqrt{d})$.
We will demonstrate that this permutation $\sigma^*$ is the desired one: the final iterate $\vx_n^K$ obtained by running \Cref{alg:permutation} for $K$ epochs of $\sigma^*$ satisfies the desired upper bound.

Using $\sigma^*$, we define a sequence of iterates $\{\vx_i^*\}_{i=0}^n$ as follows:
\begin{align*}
    \vx_0^* & = \vx^*,\\
    \vx_i^* & = \vx_{i-1}^* - \eta \nabla f_{\sigma^*(i)}(\vx^*).
\end{align*}
Note that the sequence is obtained by using the component gradients at the minimizer $\vx^*$.
From $\sum_{i=1}^n \nabla f_{\sigma^*(i)}(\vx^*) = n \nabla F(\vx^*) = 0$, we can easily deduce that $\vx_n^* = \vx_0^* = \vx^*$.

The proof follows the approach used in Theorem~1 in \citet{mishchenko2020random} with several modifications using the property from the \herding{} algorithm.
We analyze the square norm distance $\norm{\vx_i^k - \vx_i^*}^2$ using an iteration-wise recursive inequality:
\begin{align}
    \norm{\vx_i^k - \vx_i^*}^2 & = \norm{\vx_{i-1}^k - \eta \nabla f_{\sigma^*(i)}(\vx_{i-1}^k) - \bigopen{\vx_{i-1}^* - \eta \nabla f_{\sigma^*(i)}(\vx^*)}}^2 \nonumber\\
    & = \norm{\vx_{i-1}^k - \vx_{i-1}^*}^2 - 2\eta \inner{\vx_{i-1}^k - \vx_{i-1}^*, \nabla f_{\sigma^*(i)}(\vx_{i-1}^k) - \nabla f_{\sigma^*(i)}(\vx^*)} \nonumber\\
    & \quad + \eta^2\norm{\nabla f_{\sigma^*(i)}(\vx_{i-1}^k) - \nabla f_{\sigma^*(i)}(\vx^*)}^2 \nonumber\\
    & \overset{(a)}{=} \norm{\vx_{i-1}^k - \vx_{i-1}^*}^2 - 2\eta \bigopen{D_{f_{\sigma^*(i)}}(\vx_{i-1}^k, \vx^*) + D_{f_{\sigma^*(i)}}(\vx_{i-1}^*,\vx_{i-1}^k) - D_{f_{\sigma^*(i)}}(\vx_{i-1}^*,\vx^*)} \nonumber\\
    & \quad + \eta^2\norm{\nabla f_{\sigma^*(i)}(\vx_{i-1}^k) - \nabla f_{\sigma^*(i)}(\vx^*)}^2. \label{eq:herdatoptiteration}
\end{align}
Here, $D_f(\vx, \vy) := f(\vx) - f(\vy) - \inner{\nabla f(\vy), \, \vx - \vy}$ denotes the Bregman divergence of $f$ between $\vx$ and $\vy$.
At $(a)$, we apply the three-point identity of the Bregman divergence.

The term $D_{f_{\sigma^*(i)}}(\vx_{i-1}^*,\vx_{i-1}^k)$ in \cref{eq:herdatoptiteration} can be bounded as follows:
\begin{align*}
    D_{f_{\sigma^*(i)}}(\vx_{i-1}^*,\vx_{i-1}^k) & \geq \frac{\mu}{2}\norm{\vx_{i-1}^* - \vx_{i-1}^k}^2,
\end{align*}
by the $\mu$-strong convexity of the component function.
Moreover, from Lemma~2.29 of \citet{garrigos2023handbook}, we have
\begin{align*}
    \norm{\nabla f_{\sigma^*(i)}(\vx_{i-1}^k) - \nabla f_{\sigma^*(i)}(\vx^*)}^2 & \leq 2L D_{f_{\sigma^*(i)}}(\vx^k_{i-1}, \vx^*).
\end{align*}

Substituting these inequalities into \cref{eq:herdatoptiteration}, we derive
\begin{align}
    \norm{\vx_i^k - \vx_i^*}^2 
    & \leq \norm{\vx_{i-1}^k - \vx_{i-1}^*}^2 - 2\eta(1 - \eta L) D_{f_{\sigma^*(i)}}(\vx_{i-1}^k, \vx^*) - \eta \mu \norm{\vx_{i-1}^k - \vx_{i-1}^*}^2 + 2\eta D_{f_{\sigma^*(i)}}(\vx_{i-1}^*,\vx^*) \nonumber \\
    & \overset{(a)}{\leq} (1-\eta \mu)\norm{\vx_{i-1}^k - \vx_{i-1}^*}^2 + 2\eta D_{f_{\sigma^*(i)}}(\vx_{i-1}^*,\vx^*) \nonumber \\
    & \overset{(b)}{\leq} (1-\eta \mu)\norm{\vx_{i-1}^k - \vx_{i-1}^*}^2 + H^2\eta^3LG_*^2, \label{eq:herdatoptiteration2}
\end{align}
where we apply $1 - \eta L \geq 0$ and $D_{f_{\sigma^*(i)}}(\vx^k_{i-1}, \vx^*) \geq 0$ at $(a)$.
At $(b)$, we utilize the $L$-smoothness of the component function and the property of the \herding{} algorithm, given in \cref{eq:herdingproperty}:
\begin{align*}
    D_{f_{\sigma^*(i)}}(\vx_{i-1}^*,\vx^*) \le \frac{L}{2} \norm{\vx_{i-1}^* - \vx^*}^2 = \frac{\eta^2 L}{2} \bignorm{ \, \sum_{j=1}^{i-1} \nabla f_{\sigma^*(j)}(\vx^*) \,}^2 \le \frac{\eta^2 L}{2} \cdot (HG_*)^2.
\end{align*}

Thus, by unrolling \cref{eq:herdatoptiteration2} over all $k \in [K]$ and all $i \in [n]$, and noting that $\vx_n^* = \vx_0^* = \vx^*$, we obtain
\begin{align*}
    \norm{\vx_n^K - \vx^*}^2 & \leq (1 - \eta \mu)^{nK} \norm{\vx_0^1 - \vx^*}^2 + H^2\eta^3LG_*^2 \sum_{t=1}^{nK}\bigopen{1 - \eta \mu}^{t-1}\\
    & = (1 - \eta \mu)^{nK} \norm{\vx_0 - \vx^*}^2 + H^2\eta^3LG_*^2 \frac{1 - \bigopen{1 - \eta \mu}^{nK}}{\eta \mu}\\
    & \leq e^{-\eta \mu n K} \norm{\vx_0 - \vx^*}^2 + \frac{H^2\eta^2LG_*^2}{\mu}.
\end{align*}

We now substitute $\eta = \frac{2}{\mu n K} \max \bigset{\log \bigopen{\frac{\norm{\vx_0 - \vx^*} \mu n K}{\sqrt{\kappa} H G_*}}, 1}$.
When $\norm{\vx_0 - \vx^*}$ is sufficiently large, the above inequality simplifies to
\begin{align*}
    \norm{\vx_n^K - \vx^*}^2 \leq \frac{H^2LG_*^2}{\mu^3 n^2 K^2} \bigopen{1 + 4\log^2 \bigopen{\frac{\norm{\vx_0 - \vx^*} \mu n K}{\sqrt{\kappa} H G_*}}} \lesssim \frac{H^2LG_*^2}{\mu^3 n^2 K^2}.
\end{align*}

On the other hand, when $\norm{\vx_0 - \vx^*}$ is small so that $1$ is chosen after the max operation, the above inequality simplifies to
\begin{align*}
    \norm{\vx_n^K - \vx^*}^2 
    \le \frac{1}{e^2} \cdot e^2 \frac{\kappa H^2 G_*^2}{\mu^2 n^2 K^2} + \frac{4 H^2 L G_*^2}{\mu^3 n^2 K^2} \lesssim \frac{H^2LG_*^2}{\mu^3 n^2 K^2},
\end{align*}
where we use $\norm{\vx_0 - \vx^*} \leq e \cdot \frac{\sqrt{\kappa} H G_*}{\mu n K}$.

In particular, using the $L$-smoothness of $F$, the function optimality gap can be bounded as:
\begin{align*}
    F(\vx^K_n) - F(\vx^*) \le \frac{L}{2} \bignorm{\vx_n^K - \vx^*}^2 \lesssim \frac{H^2L^2G_*^2}{\mu^3 n^2 K^2}.
\end{align*}
This ends the proof of \Cref{thm:herding-at-opt}.
\end{proof}

\subsection{Technical Lemmas}
\label{appendix:small-ub-techlmm}

\lemmapq*

\begin{proof}
    Using the gradient descent update rule, we obtain:
    \begin{align*}
        &p^\prime = p - \eta \nabla f(p),\\
        &q^\prime = q - \eta \nabla f(q).
    \end{align*}
    
    The difference between $q^\prime$ and $p^\prime$ can then be written as:
    \begin{align}
        q^\prime - p^\prime &= (q - p) - \eta \bigopen{\nabla f(q) - \nabla f(p)} \nonumber \\
        &= (q - p) - \eta \int_p^q \nabla^2 f(u) \, \mathrm{d}u. \label{eq:pqgap}
    \end{align}
    Since $\nabla^2 f (u)$ satisfies $\mu \le \nabla^2 f (u) \le L$, we have $\mu (q - p) \le \int_p^q \nabla^2 f(u) \, \mathrm{d}u \le L (q - p)$.
    Substituting this inequality to \cref{eq:pqgap} yields
    \begin{align*}
        0 < (1 - \eta L) (q - p) \le q^\prime - p^\prime \le (1 - \eta \mu)(q - p),
    \end{align*}
    where the first inequality holds due to $\eta < \frac{1}{L}$.
\end{proof}

\lemmazproperty*

\begin{proof}
    We provide the proof for $z_+^K$ and its corresponding permutations $\{\sigma_k^+\}_{k=1}^K$.
    The proof for $z_-^K$ and $\{\sigma_k^-\}_{k=1}^K$ is analogous, as flipping the sign of $a$'s leads to identical circumstances.

    \paragraph{Step 1: The First Property.}
    Let $w_+^{K-1}$ denote the iterate obtained by running \Cref{alg:permutation} with the sequence of permutations $\{\sigma_k^+\}_{k=1}^{K-1}$, starting from $x^*$ with a step size $\eta$.
    Since $z_+^{K-1}$ is defined as the maximum possible iterate after running \Cref{alg:permutation} with $K-1$ epochs, it follows that $w_+^{K-1} \le z_+^{K-1}$.
    
    Assume for contradiction that $w_+^{K-1} < z_+^{K-1}$.
    By \Cref{lemma:pq}, the iterate obtained by applying $\sigma_+^K$ starting from $z_+^{K-1}$ exceeds $z_+^K$.
    This contradicts the definition of $z_+^K$, which is the maximum possible final iterate after $K$ epochs.
    Therefore, we conclude that $w_+^{k-1} = z_+^{k-1}$.
    
    By recursively applying this reasoning, we deduce that for all $l \in [K]$, running \Cref{alg:permutation} with permutations $\{\sigma_k^+\}_{k=1}^l$ generates $z_+^l$.

    \paragraph{Step 2: The Second Property.}
    We now prove the following claim:
    
    \tightparagraph{Claim.}
    \textit{Consider two steps of gradient updates using two component functions $f_i(x)$ and $f_j(x)$ with $a_i < a_j$, starting from the initialization $u$.
    Then, regardless of the choice of the step size $\eta$, applying $f_i$ first, followed by $f_j$, results in a larger iterate than applying $f_j$ first, followed by $f_i$.}
    
    \textit{Proof of the claim.}
        The update equations are:
        \begin{align*}
            \begin{aligned}[c]
                &u_i = u - \eta \bigopen{\nabla F(u) - a_i}, \\
                &u_{ij} = u_i - \eta \bigopen{\nabla F(u_i) - a_{j}},
            \end{aligned}
            \qquad \qquad
            \begin{aligned}[c]
                &u_j = u - \eta \bigopen{\nabla F(u) - a_j}, \\
                &u_{ji} = u_j - \eta \bigopen{\nabla F(u_j) - a_{i}}.
            \end{aligned}
        \end{align*}
        Since $a_i < a_j$, we have $u_i < u_j$.
        Also, because $\nabla F$ is a monotonically increasing function, it follows that $\nabla F(u_i) < \nabla F(u_j)$.
        Now, we can check that subtracting $u_{ji}$ from $u_{ij}$ yields positive difference:
        \begin{align*}
            u_{ij} - u_{ji}
            &= \bigopen{ u_i - \eta \bigopen{\nabla F(u_i) - a_{j}}} - \bigopen{u_j - \eta \bigopen{\nabla F(u_j) - a_{i}}}\\
            &= \eta \bigopen{\nabla F(u_j) - \nabla F(u_i)} + \underbrace{(u_i + \eta a_j) - (u_j + \eta a_i)}_{=0}\\
            &= \eta \bigopen{\nabla F(u_j) - \nabla F(u_i)} > 0.
        \end{align*}
        Thus, $u_{ji} > u_{ij}$ holds, completing the proof of the claim.
    \qed

    From the claim, we conclude in $\sigma_k^+$, all indices in $I_-$ (indices corresponding to negative $a$ values) must appear before indices in $I_+$ (indices corresponding to positive $a$ values).
    Otherwise, if there exists an index in $I_-$ that immediately follows an index in $I_+$, switching these two indices would result in a larger final iterate (due to \Cref{lemma:pq}), contradicting the optimality of $\sigma_k^+$.
    This concludes the proof of the second property.

    \paragraph{Step 3: The Third Property.}
    Define $M := \sum_{i \in I_+} a_i = -\sum_{i \in I_-} a_i$.
    We claim that:
    
    \tightparagraph{Claim.} \textit{If $0 \le z_+^{k} - x^* \le \eta M$, then $-\eta M \le z_+^{k+1, I_-} - x^* \le 0$ holds.}

    \textit{Proof of the claim.}
        Note that the iterate $z_+^{k+1, I_-}$ is obtained by applying gradient update starting from $z_+^k$ using the first $I_-$ component functions of the permutation $\sigma_k$.
        Let $\sigma_k^f$ denote the first $I_-$ parts of the permutation $\sigma_k$.
        We verify the bound as follows:

        \textbf{Lower Bound: $-\eta M \le z_+^{k+1, I_-} - x^*$.}
        
        By \Cref{lemma:pq}, the iterate $z_+^{k+1, I_-}$ is at least as large as the iterate obtained by applying gradient updates following $\sigma_k^f$, starting from $x^*$.
        
        Also, if $p < x^*$ holds, then
        \begin{align*}
            - \nabla F(p) = \nabla F(x^*) - \nabla F(p) = \int_{p}^{x^*} \nabla^2 F(\alpha) \, \mathrm{d}\alpha \le L(x^* - p).
        \end{align*}
        Hence, $p - \eta \nabla F(p) \le x^*$ holds.
        Thus, if the iterate falls below $x^*$, the next iterate obtained by applying the gradient update from the component in $I_-$ will also remain below $x^*$.

        This property guarantees that when the gradient update starts $x^*$ and follows $\sigma_k^f$, every iterate remains below $x^*$.
        Moreover, the total contribution of the gradient updates towards the negative direction by indices in $I_-$ when starting from $x^*$ is at most $- \eta \sum_{i \in I_-} a_i = \eta M$.
        Hence, $z_+^{k+1, I_-} - x^* \ge - \eta M$ holds.

        \textbf{Upper Bound: $z_+^{k+1, I_-} - x^* \le 0$.}

        Again, by \Cref{lemma:pq}, the iterate $z_+^{k+1, I_-}$ is at most the iterate obtained by applying gradient updates following $\sigma_k^f$, starting from $x^* + \eta M$.

        Assume by contradiction that $z_+^{k+1, I_-} > x^*$ holds.
        This means that the iterate obtained by following $\sigma_k^f$ starting from $x^* + \eta M$ is also greater than $x^*$.
        Due to the property stated in the proof of lower bounding $z_+^{k+1, I_-}$, all intermediate iterates should be greater than $x^*$ as well.
        This leads to a contradiction, as the total contribution of the gradient updates towards the negative direction by indices in $I_-$ when starting from $x^* + \eta M$ will exceed $\eta M$, leading $z_+^{k+1, I_-}$ to fall below $x^*$.
        Hence, $z_+^{k+1, I_-} - x^* \le 0$ holds.
        
        Combining these two bounds, we obtain
        \begin{align*}
            -\eta M \le z_+^{k+1, I_-} - x^* \le 0,
        \end{align*}
        and this ends the proof of the claim.
    \qed

    The claim shows that if $0 \le z_+^{k} - x^* \le \eta M$, then $-\eta M \le z_+^{k+1, I_-} - x^* \le 0$ holds.
    By analogous reasoning, if $-\eta M \le z_+^{k+1, I_-} - x^* \le 0$, then $0 \le z_+^{k+1} - x^* \le \eta M$ holds.
    Combining these two statements, we have: \textit{if $0 \le z_+^{k} - x^* \le \eta M$, then $0 \le z_+^{k+1} - x^* \le \eta M$ and $z_+^{k+1, I_-} \le x^* \le z_+^{k+1}$ hold.}

    Using these, we now proceed by induction to prove the third property.
    For the base case, the initialization point is $z_0 = x^*$, satisfying the initial condition by $z_0 - x^* = 0$.
    By induction, it follows that
    \begin{align*}
        z_+^{k, I_-} \le x^* \le z_+^k.
    \end{align*}
    for all $k \in [K]$.
    This concludes the proof of the third property. 
\end{proof}

\lemmauvbound*

\begin{proof}
    Recall that the sequence of iterate $\{u_i\}_{i=0}^{|I_+|}$ is defined as $u_0 = x^*$ and each subsequent $u_i$ is obtained by applying a gradient update using the component function $f_{\pi_+(i)}$ to $u_{i-1}$.
    Specifically, we have
    \begin{align*}
        u_i 
        &= u_{i-1} - \eta \nabla f_{\pi_+(i)} (u_{i-1})\\
        &= u_{i-1} - \eta \bigopen{\nabla F (u_{i-1}) - a_{\pi_+(i)}},
    \end{align*}
    for $i \in [|I_+|]$.
    
    Now, we will prove by induction that $0 \le \nabla F(u_i) \le 2G_*$ holds for all $i \in [|I_+|]$.
    Initially, we have $u_0 = x^*$ and thus $\nabla F(u_0) = 0$.
    Now, assume that $0 \le \nabla F(u_{j-1}) \le 2G_*$.
    We divide the proof into two cases based on the value of $\nabla F(u_{j-1})$.

    \textbf{Case 1.} $\nabla F(u_{j-1}) \le a_{\pi_+(j)}$.

    In this case, the update equation becomes:
    \begin{align*}
        u_j = u_{j-1} - \eta \bigopen{\nabla F (u_{j-1}) - a_{\pi_+(j)}} \ge u_{j-1},
    \end{align*}
    meaning that the iterate increases.
    Since $\nabla F$ is an increasing function, we have $\nabla F(u_j) \ge \nabla F(u_{j-1}) \ge 0$.

    Also, using the fact that all $\bigabs{a_i}$ is bounded by $G_*$, we can bound the difference of the gradient between successive iterates via the $L$-smoothness of $F$:
    \begin{align*}
        \bigabs{\nabla F(u_j) - \nabla F(u_{j-1})} \le L \bigabs{u_j - u_{j-1}} \le \eta L G_* < G_*.
    \end{align*}
    
    Thus, the deviation of $\nabla F(u_j)$ from $\nabla F(u_{j-1})$ is at most $G_*$, leading to the following inequality:
    \begin{align*}
        \nabla F(u_j) \le \nabla F(u_{j-1}) + G_* \le a_{\pi_+(j)} + G_* \le 2G_*.
    \end{align*}

    \textbf{Case 2.} $\nabla F(u_{j-1}) > a_{\pi_+(j)}$.

    In this case, the update equation becomes:
    \begin{align*}
        u_j = u_{j-1} - \eta \bigopen{\nabla F (u_{j-1}) - a_{\pi_+(j)}} \le u_{j-1},
    \end{align*}
    meaning that the iterate decreases.
    Since $\nabla F$ is an increasing function, we have $\nabla F(u_j) \le \nabla F(u_{j-1}) \le 2G_*$.

    Furthermore, by $L$-smoothness of $F$, we have $\nabla F(u_{j-1}) = \nabla F(u_{j-1}) - \nabla F(x^*) \le L \bigopen{u_{j-1} - x^*}$.
    Then, we can ensure that $u_j$ is greater than or equal to $x^*$ as follows:
    \begin{align*}
        u_j
        &= u_{j-1} - \eta \nabla \bigopen{F(u_{j-1}) - a_{\pi_+(j)}}\\
        &\ge u_{j-1} - \eta \nabla F(u_{j-1})\\
        &\ge u_{j-1} - \frac{1}{L} \cdot L \bigopen{u_{j-1} - x^*} = x^*.
    \end{align*}

    For both cases, we have shown that $0 \le \nabla F(u_j) \le 2G_*$.
    
    We can apply the same approach for $\{v_i\}_{i=1}^{|I_-|}$.
    The key difference is that the sign of $a_{\pi_{-}(j)}$ is negative.
    This leads to the result $0 \ge \nabla F(v_j) \ge -2G_*$ for all $j \in [|I_-|]$.
    This concludes the proof of \Cref{lemma:uvbound}.
\end{proof}
\newpage
\section{Proofs for Large Epoch Lower Bounds}
\label{appendix:large-epoch-lb}

\subsection{Proof of \Cref{thm:large-lb-idhess}}
\label{subsec:large-lb-idhess}
\thmlargelbidhess*

\begin{proof}
Similar to the approach in \Cref{thm:small-lb-idhess}, we divide the range of step size into three regimes. 
For each regime, we construct the overall function $F_1, F_2$, and $F_3$, respectively, along with their respective component functions and an initial point. Finally, we aggregate these functions across different dimensions to derive the stated lower bound. 

Each overall function is $1$-dimensional, and carefully designed to satisfy the following properties:
\begin{itemize}
    \item (Small step size regime) There exists an initialization point $x_0 = \mathrm{poly}(\mu, L, n, K, G)$ such that for any choice of $\eta \in \bigopen{0, \frac{1}{\mu n K}}$, the final iterate $x^K_n$ obtained by running \Cref{alg:igd} satisfies 
    $F_1(x_n^K) - F_1(x^*) \gtrsim \frac{L G^2}{\mu^2 K^2}$.
    \item (Moderate step size regime) There exists an initialization point $y_0 = \mathrm{poly}(\mu, L, n, K, G)$ such that for any choice of $\eta \in \left[\frac{1}{\mu n K}, \frac{2}{L}\right) $, the final iterate $y^K_n$ obtained by running \Cref{alg:igd} satisfies $F_2(y_n^K) - F_2(y^*) \gtrsim \frac{LG^2}{\mu^2K^2}$.
    \item (Large step size regime) There exists an initialization point $z_0 = \mathrm{poly}(\mu, L, n, K, G)$ such that for any choice of $\eta \in \left[\frac{2}{L}, \infty\right)$, the final iterate $z^K_n$ obtained by running \Cref{alg:igd} satisfies $F_3(z_n^K) - F_3(z^*) \gtrsim \frac{LG^2}{\mu^2K^2}$.
\end{itemize}
Here, $x^*$, $y^*$, $z^*$ denote the minimizers of $F_1$, $F_2$, and $F_3$, respectively. 
All these functions are designed to satisfy \Cref{ass:common}.
$F_1$ and $F_3$ satisfy
\Cref{ass:grad-generalized} with $G=P=0$, and $F_2$ satisfies with $P=0$. 
Moreover, each component function within each overall function shares the same Hessian. 
Detailed constructions of $F_1$, $F_2$, and $F_3$, as well as the verification of the assumptions and the stated properties are presented in \Cref{subsubsec:large-lb-idhess-f1,subsubsec:large-lb-idhess-f2,subsubsec:large-lb-idhess-f3}.

By following a similar approach to the proof of \Cref{thm:small-lb-idhess,thm:small-lb-sc}, we can conclude that the aggregated 3-dimensional function $F(\vx) := F(x, y, z) = F_1(x) + F_2(y) + F_3(z)$ and its component functions satisfy the stated assumptions. 
Also, since each dimension is independent, it is obvious that $\vx^* = (x^*, y^*, z^*)$ minimizes $F$.
Finally, by choosing the initialization point as $\vx_0 = (x_0, y_0, z_0)$, the final iterate $\vx^K_n = (x^K_n, y^K_n, z^K_n)$ obtained by running \Cref{alg:igd} on $F$ satisfies
\begin{align*}
    F(\vx_n^K) - F(\vx^*) \gtrsim \frac{LG^2}{\mu^2 K^2},
\end{align*}
regardless of the choice of $\eta > 0$.

This concludes the proof of \Cref{thm:large-lb-idhess}.
\end{proof}

In the following subsections, we present the specific construction of $F_1$, $F_2$, and $F_3$, and demonstrate that each satisfies the stated lower bound within its corresponding step size regime.
For simplicity of notation, we omit the index of the overall function when referring to its component functions, e.g., we write $f_i(x)$ instead of $f_{1i}(x)$. 
Moreover, we use the common variable notation $x$ while constructing functions for each dimension, though we use different variables in the ``dimension-aggregation'' step.

\subsubsection{Construction of $F_1$}
\label{subsubsec:large-lb-idhess-f1}
Let $F_1(x) = \frac{\mu}{2}x^2$ with component functions $f_i(x) = F_1(x)$ for all $i \in [n]$.
It is clear that $F_1$ satisfies \Cref{ass:common} and \Cref{ass:grad-generalized} with $G = P = 0$, and its component functions share an identical Hessian.
Also, we note that $x^* = 0$ and $F_1(x^*) = 0$.

Let the initialization be $x_0 = \sqrt{\kappa} \frac{G}{\mu K}$.
For all $\eta \in \bigopen{0, \frac{1}{\mu n K}}$, the final iterate is given by
\begin{align*}
    x_n^K = (1 - \eta \mu)^{nK} x_0 \geq \bigopen{1 - \frac{1}{nK}}^{nK} x_0 \geq \frac{\sqrt{\kappa}G}{4 \mu K},
\end{align*}
where the last inequality uses the fact that $(1 - \frac{1}{m})^m \geq \frac{1}{4}$ for all $m \geq 2$.

Thus, we have
\begin{align*}
    F_1(x_n^K) - F_1(x^*) = \frac{\mu}{2} (x_n^K)^2 \gtrsim \frac{LG^2}{\mu^2 K^2}.
\end{align*}

\subsubsection{Construction of $F_2$}
\label{subsubsec:large-lb-idhess-f2}
In this subsection, we let $L'$ denote $L/2$.
We construct the function by dividing the cases by the parity of $n$.
We first consider the case where $n$ is even, and address the case where $n$ is odd later in this subsection.
Let $F_2(x) = \frac{L'}{2}x^2$ with component functions
\begin{align*}
    f_i(x) = \begin{cases}
        \frac{L'}{2}x^2 + Gx & \textrm{ if } \, i \leq n/2,\\
        \frac{L'}{2}x^2 - Gx & \textrm{ otherwise}.
    \end{cases}
\end{align*}

Since $\kappa \ge 2$, we have $L' = \frac{L}{2} \ge \mu$.
Thus, it is clear that $f_i$ satisfies \Cref{ass:common,ass:grad-generalized} with $P=0$, and shares the same Hessian.
By \Cref{lem:quadratic-twotype-IGD-closed}, the final iterate obtained by running \Cref{alg:igd} is given by
\begin{align*}
    x_n^K = (1 - \eta L')^{nK}x_0 + \frac{G}{L'}\cdot \frac{1 - (1 - \eta L')^{\frac{n}{2}}}{1 + (1 - \eta L')^{\frac{n}{2}}}\bigopen{1 - (1 - \eta L')^{nK}}.
\end{align*}

By applying $\eta \ge \frac{1}{\mu n K}$ and setting $x_0 = 0$, we derive
\begin{align}
    x_n^K\
    &= \frac{G}{L'}\cdot \frac{1 - (1 - \eta L')^{\frac{n}{2}}}{1 + (1 - \eta L')^{\frac{n}{2}}}\bigopen{1 - (1 - \eta L')^{nK}} \nonumber \\
    & \geq \frac{G}{2L'}\bigopen{1 - (1 - \eta L')^{\frac{n}{2}}}\bigopen{1 - (1 - \eta \mu)^{nK}} \nonumber \\
    & \geq \frac{G}{2L'}\bigopen{1 - (1 - \eta L')^{\frac{n}{2}}}\bigopen{1 - \bigopen{1 - \frac{1}{nK}}^{nK}} \nonumber \\
    & \geq \frac{G}{2L'}\bigopen{1 - (1 - \eta L')^{\frac{n}{2}}}\bigopen{1 - e^{-1}}. \label{eq:large-idhesslb-x}
\end{align}

We analyze \cref{eq:large-idhesslb-x} by dividing the range of $\eta$ into two regimes.

\textbf{Regime 1.} $\eta \in \left[\frac{1}{\mu n K}, \frac{1}{nL'}\right)$.

In this regime, we can bound $1 - (1 - \eta L')^{\frac{n}{2}}$ as:
\begin{align*}
    1 - (1 - \eta L')^{\frac{n}{2}} \geq 1 - e^{-\frac{\eta n L'}{2}} \geq 1 - \bigopen{1 - \frac{\eta n L'}{4}} = \frac{\eta n L'}{4} \geq \frac{L'}{4 \mu K},
\end{align*}
where the second inequality uses $e^{-u} \leq 1 - \frac{u}{2}$ for all $u \in [0, 1]$. 
Substituting this inequality into \cref{eq:large-idhesslb-x} gives
\begin{align*}
    x_n^K \geq \frac{(1 - e^{-1})G}{8 \mu K}.
\end{align*}

Consequently, the function optimality gap satisfies
\begin{align*}
    F_2(x_n^K) - F_2(x^*) = \frac{L'}{2} (x_n^K)^2 \gtrsim \frac{L G^2}{\mu^2 K^2}.
\end{align*}

\textbf{Regime 2.} $\eta \in \left[\frac{1}{nL'}, \frac{1}{L'}\right)$.

In this regime, we can bound $1 - (1 - \eta L')^{\frac{n}{2}}$ as:
\begin{align*}
    1 - (1 - \eta L')^{\frac{n}{2}} \geq 1 - \bigopen{1 - \frac{1}{n}}^{\frac{n}{2}} \geq 1 - e^{-\frac{1}{2}}.
\end{align*}

Substituting this inequality into \cref{eq:large-idhesslb-x} gives
\begin{align*}
    x_n^K \geq \frac{\bigopen{1 - e^{-1}}\bigopen{1 - e^{-\frac{1}{2}}}G}{2L'}.
\end{align*}

Since $K \geq \kappa$, we have $\frac{1}{L'} = \frac{2}{L} \geq \frac{2}{\mu K}$.
Therefore, the final iterate $x_n^K$ can be bounded as:
\begin{align*}
    x_n^K \geq \frac{\bigopen{1 - e^{-1}}\bigopen{1 - e^{-\frac{1}{2}}}G}{\mu K}.
\end{align*}

Consequently, the function optimality gap satisfies
\begin{align*}
    F_2(x_n^K) - F_2(x^*) = \frac{L'}{2} (x_n^K)^2 \gtrsim \frac{L G^2}{\mu^2 K^2}.
\end{align*}

We now focus on the case where $n$ is odd.
Let $F_2(x) = \frac{L'}{2}x^2$ with component functions
\begin{align*}
    f_i(x) = 
    \begin{cases}
        \frac{L'}{2}x^2 & \textrm{ if } \, i=1,\\
        \frac{L'}{2}x^2 + Gx & \textrm{ if } \, 2 \leq i \leq (n+1)/2,\\
        \frac{L'}{2}x^2 - Gx & \textrm{ if } \, (n+3)/2 \leq i \leq n.
    \end{cases}
\end{align*}
Compared to the case of even $n$, $f_1(x) = \frac{L'}{2} x^2$ is introduced newly.
By \Cref{lem:quadratic-threetype-IGD-closed}, the final iterate $x_n^K$ obtained by running \Cref{alg:igd} satisfies the following equation:
\begin{align*}
    x_n^K = (1 - \eta L')^{nK} x_0 + \frac{G}{L'} \cdot \frac{1 - (1 - \eta L')^{nK}}{1 - (1 - \eta L')^n}\bigopen{1 - (1 - \eta L')^{\frac{n-1}{2}}}^2.
\end{align*}

By applying $\eta \ge \frac{1}{\mu n K}$ and setting $x_0  = 0$, we have
\begin{align*}
    x^K_n & = \frac{G}{L'} \cdot \frac{1 - (1 - \eta L')^{nK}}{1 - (1 - \eta L')^n}\bigopen{1 - (1 - \eta L')^{\frac{n-1}{2}}}^2 \\
    &  = \frac{G}{L'}\bigopen{1 - (1 - \eta L')^{nK}}\frac{1 - (1 - \eta L')^{n-1}}{1 - (1 - \eta L')^n}\frac{\bigopen{1 - (1 - \eta L')^{\frac{n-1}{2}}}^2}{1 - (1 - \eta L')^{n-1}}\\
    & \ge \frac{G}{L'}\bigopen{1 - (1 - \eta \mu)^{nK}} \frac{1 - (1 - \eta L')^{n-1}}{1 - (1 - \eta L')^n}\frac{1 - (1 - \eta L')^{\frac{n-1}{2}}}{1  + (1 - \eta L')^{\frac{n-1}{2}}}\\
    & \geq \frac{G}{2L'} \bigopen{1 - e^{-\eta \mu n K}} \frac{1 - (1 - \eta L')^{n-1}}{1 - (1 - \eta L')^n}\bigopen{1 - (1 - \eta L')^{\frac{n-1}{2}}}\\
    & \geq \frac{G}{2L'} \bigopen{1 - e^{-1}}\frac{1 - (1 - \eta L')^{n-1}}{1 - (1 - \eta L')^n}\bigopen{1 - (1 - \eta L')^{\frac{n-1}{2}}}.
\end{align*}

Note that the inequality
\begin{align*}
    \frac{1 - (1 - \eta L')^{n-1}}{1 - (1 - \eta L')^n} \geq \frac{1}{2}
\end{align*}
holds for $n \ge 2$ since
\begin{align*}
    2 - 2(1 - \eta L')^{n - 1} \geq 1 - (1 - \eta L')^n & \Leftrightarrow 1 \geq 2(1 - \eta L')^{n-1} - (1 - \eta L')^n\\
    & \Leftrightarrow 1 \geq (1 - \eta L')^{n-1}(2 - (1 - \eta L')) \\
    & \Leftrightarrow 1 \ge (1 - \eta L')^{n-2} (1 - \eta^2 L'^2).
\end{align*}

Hence, we deduce that
\begin{align}
    x^K_n \geq \frac{G}{4L'}\bigopen{1 - e^{-1}}\bigopen{1 - (1 - \eta L')^{\frac{n-1}{2}}}.\label{eq:large-idhesslb-x2}
\end{align}

We again analyze \cref{eq:large-idhesslb-x2} by dividing the range of $\eta$ into two regimes.

\textbf{Regime 1.} $\eta \in \left[\frac{1}{\mu n K}, \frac{1}{n L'}\right)$.

In this regime, we can bound $1 - (1 - \eta L')^{\frac{n-1}{2}}$ as:
\begin{align*}
    1 - (1 - \eta L')^{\frac{n-1}{2}} \geq 1 - e^{-\frac{\eta (n-1) L'}{2}} \geq 1 - \bigopen{1 - \frac{\eta (n-1) L'}{4}} = \frac{\eta (n-1) L'}{4} \geq \frac{\eta n L'}{8} \geq \frac{L'}{8 \mu K},
\end{align*}
where the second inequality uses $e^{-u} \leq 1 - \frac{u}{2}$ for all $u \in [0, 1]$.
Substituting this inequality into \cref{eq:large-idhesslb-x2} gives
\begin{align*}
    x^K_n \geq \frac{\bigopen{1 - e^{-1}}G}{32 \mu K}.
\end{align*}

Consequently, the function optimality gap satisfies
\begin{align*}
    F_2(x_n^K) - F_2(x^*) = \frac{L'}{2} (x_n^K)^2 \gtrsim \frac{L G^2}{\mu^2 K^2}.
\end{align*}

\textbf{Regime 2.} $\eta \in \left[\frac{1}{nL'}, \frac{1}{L'}\right)$.

In this regime, we can bound $1 - (1 - \eta L')^{\frac{n-1}{2}}$ as:
\begin{align*}
    1 - (1 - \eta L')^{\frac{n-1}{2}} \geq 1 - \bigopen{1 - \frac{1}{n}}^{\frac{n-1}{2}} \geq 1 - e^{-\frac{n-1}{2n}} \geq 1 - e^{-\frac{1}{4}}.
\end{align*}

Substituting this inequality into \cref{eq:large-idhesslb-x2} gives
\begin{align*}
    x^K_n \geq \frac{\bigopen{1 - e^{-1}}\bigopen{1 - e^{-\frac{1}{4}}}G}{4L'}.
\end{align*}

Since $K \geq \kappa$, we have $\frac{1}{L'} = \frac{2}{L} \geq \frac{2}{\mu K}$. 
Therefore, the final iterate $x_n^K$ can be bounded as:
\begin{align*}
    x^K_n \geq \frac{\bigopen{1 - e^{-1}}\bigopen{1 - e^{-\frac{1}{4}}}G}{2\mu K}.
\end{align*}

Consequently, the function optimality gap satisfies
\begin{align*}
    F_2(x_n^K) - F_2(x^*) = \frac{L'}{2} (x_n^K)^2 \gtrsim \frac{L G^2}{\mu^2 K^2}.
\end{align*}

\subsubsection{Construction of $F_3$}
\label{subsubsec:large-lb-idhess-f3}
Let $F_3(x) = \frac{L}{2} x^2$ with component functions $f_i(x) = F_3(x)$ for all $i \in [n]$.
It is clear that $F_1$ satisfies \Cref{ass:common}, \Cref{ass:grad-generalized} with $G=P=0$ and its component functions share an identical Hessian.
Also, we note that $x^* = 0$ and $F_3(x^*) = 0$.

For all $\eta \in \left[\frac{2}{L}, \infty\right)$, the final iterate is given by
\begin{align*}
    x_n^K = \bigopen{1 - \eta L}^{nK} x_0.
\end{align*}

In this regime, the step size is excessively large, resulting in
\begin{align*}
    1 - \eta L \le 1 - \frac{2}{L} \cdot L \le -1,
\end{align*}
which implies $\bigabs{(1 - \eta L)^{nK}} \ge 1$.
Thus, the iterate does not converge and satisfies $\abs{x^K_n} \geq \abs{x_0}$. 

By setting the initialization $x_0 = \frac{G}{\mu K}$, we have
\begin{align*}
    F_3(x_n^K) - F_3(x^*) = \frac{L}{2} (x_n^K)^2 \geq \frac{L}{2}(x_0)^2 \gtrsim \frac{L G^2}{\mu^2 K^2}.
\end{align*}

\subsection{Proof of \Cref{thm:large-lb-concave}}
\label{subsec:large-lb-concave}
\thmlargelbconcave*

\begin{proof}
Similar to the approach in \Cref{thm:small-lb-idhess}, we divide the range of step sizes. 
However, unlike the previous theorems where the range is divided into three regimes, we divide the range into four regimes in this case.
For each regime, we construct the overall functions $F_1, F_2, F_3$, and $F_4$, along with their respective component functions and an initial point. 
Finally, we aggregate these functions across different dimensions to derive the stated lower bound. 

Each function is 1-dimensional, and carefully designed to satisfy the following properties:
\begin{itemize}
    \item (Small step size regime) There exists an initial point $x_0 = \text{poly}(\mu, L, n, K, G)$ such that for any choice of $\eta \in \bigopen{0, \frac{1}{\mu n K}}$, the final iterate $x^K_n$ obtained by running \Cref{alg:igd} satisfies $F_1(x^K_n) - F_1(x^*) \gtrsim \frac{L^2G^2}{\mu^3K^2}$,
    \item (Moderate step size regime 1) There exists an initial point $y_0 = \text{poly}(\mu, L, n, K, G)$ such that for any choice of $\eta \in \left[\frac{1}{\mu n K}, \frac{1}{nL}\right)$, the final iterate $y^K_n$ obtained by running \Cref{alg:igd} satisfies $F_2(y^K_n) - F_2(y^*) \gtrsim \frac{L^2G^2}{\mu^3K^2}$.
    \item (Moderate step size regime 2) There exists an initial point $z_0 = \text{poly}(\mu, L, n, K, G)$ such that for any choice of $\eta \in \left[\frac{1}{nL}, \frac{2}{L}\right)$, the final iterate $z^K_n$ obtained by running \Cref{alg:igd} satisfies $F_3(z^K_n) - F_3(z^*) \gtrsim \frac{L^2G^2}{\mu^3K^2}$.
    \item (Large step size regime) There exists an initial point $w_0 = \text{poly}(\mu, L, n, K, G)$ such that for any choice of $\eta \in \left[\frac{1}{L}, \infty\right)$, the final iterate $w^K_n$ obtained by running \Cref{alg:igd} satisfies $F_4(w^K_n) - F_4(w^*) \gtrsim \frac{L^2G^2}{\mu^3K^2}$.
\end{itemize}
Here, $x^*$, $y^*$, $z^*$, and $w^*$ denote the minimizers of $F_1$, $F_2$, $F_3$, and $F_4$, respectively.
All these functions are designed to satisfy \Cref{ass:common}. 
$F_1$ and $F_4$ satisfy \Cref{ass:grad-generalized} with $G=P=0$, $F_3$ satisfies with $P=0$, and $F_2$ satisfies with $P=\kappa$.
Detailed constructions for $F_1$ through $F_4$, as well as the verification of the assumptions and the stated properties are presented in \Cref{subsubsec:large-lb-concave-f1,subsubsec:large-lb-concave-f2,subsubsec:large-lb-concave-f3,subsubsec:large-lb-concave-f4}.

By following a similar approach to the proof of \Cref{thm:small-lb-idhess,thm:small-lb-sc}, we can conclude that the aggregated 4-dimensional function $F(\vx) := F_1(x) + F_2(y) + F_3(z) + F_4(w)$ satisfy the stated assumptions (additional scalar in $G$ can be absorbed by rescaling $G$ in each overall function). 
Also, since each dimension is independent, it is obvious that $\vx^* = (x^*, y^*, z^*, w^*)$ minimizes $F$.
Finally, by choosing the initial point as $\vx_0 = (x_0, y_0, z_0, w_0)$, the final iterate $\vx^K_n = (x^K_n, y^K_n, z^K_n, w^K_n)$ obtained by running \Cref{alg:igd} on $F$ satisfies
\begin{align*}
    F(\vx^K_n) - F(\vx^*) \gtrsim \frac{L^2G^2}{\mu^3K^2},
\end{align*}
regardless of the choice of $\eta > 0$.

This concludes the proof of \Cref{thm:large-lb-concave}. 
\end{proof}

In the following subsections, we present the specific construction of $F_1$, $F_2$, $F_3$, and $F_4$, and demonstrate that each satisfies the stated lower bound within its corresponding step size regime.
For simplicity of notation, we omit the index of the overall function when referring to its component functions, e.g., we write $f_i(x)$ instead of $f_{1i}(x)$. 
Moreover, we use the common variable notation $x$ while constructing functions for each dimension, though we use different variables in the ``dimension-aggregation'' step.

\subsubsection{Construction of $F_1$}
\label{subsubsec:large-lb-concave-f1}
Let $F_1(x) = \frac{\mu}{2}x^2$ with component functions $f_i(x) = F_1(x)$ for all $i \in [n]$.
It is clear that $F_1$ satisfies \Cref{ass:common} and \Cref{ass:grad-generalized} with $G = P = 0$.
Also, we note that $x^* = 0$ and $F_1(x^*) = 0$.

Let the initialization be $x_0 = \frac{LG}{\mu^2K}$.
For all $\eta \in \bigopen{0, \frac{1}{\mu n K}}$, the final iterate is given by
\begin{align*}
    x_n^K = (1 - \eta \mu)^{nK} x_0 \geq \bigopen{1 - \frac{1}{nK}}^{nK} x_0 \geq \frac{LG}{4\mu^2K},
\end{align*}
where the last inequality uses the fact that $(1 - \frac{1}{m})^m \geq \frac{1}{4}$ for all $m \geq 2$.

Thus, we have
\begin{align*}
    F_1(x_n^K) - F_1(x^*) = \frac{\mu}{2} (x_n^K)^2 \gtrsim \frac{L^2 G^2}{\mu^3 K^2}.
\end{align*}

\subsubsection{Construction of $F_2$}
\label{subsubsec:large-lb-concave-f2}

In this section, we focus on the case when $n$ is a multiple of $4$.
Otherwise, we set $4\floor{\frac{n}{4}}$ components satisfying the argument, and introduce at most three zero component functions.
This adjustment does not affect the final result, but only modifies the parameters $\mu$ and $L$ by at most a constant factor.

Let $F_2(x) = \frac{\mu}{2}x^2$ with component functions
\begin{align*}
    f_i(x) = \begin{cases}
        Gx & \textrm{ if } \, 1 \leq i \leq n / 4, \\
        \frac{L}{2}x^2 & \textrm{ if } \, n/4 + 1 \leq i \leq n / 2, \\
        -Gx & \textrm{ if } \, n/2 + 1 \leq i \leq 3n / 4, \\
        -\frac{L - 4 \mu}{2}x^2 & \textrm{ if } \, 3n / 4 + 1 \leq i \leq n.
    \end{cases}
\end{align*}
For simplicity of the notation, let $a$ denote $L - 4\mu$. 
Since $\kappa \geq 4$, we have $0 \le a < L$.
Thus, each $f_i$ is $L$-smooth, ensuring that the construction satisfies \Cref{ass:common}.
The gradient difference between the component function $f_i$ and the overall function $F_2$ is bounded as
\begin{align*}
    \bignorm{\nabla f_i (x) - \nabla F_2(x)}
    \le \begin{cases}
         \bignorm{\mu x} + G & \textrm{ if } \, 1 \leq i \leq n / 4 \, \textrm{ or } \, n/2 + 1 \leq i \leq 3n / 4,\\
        \bignorm{(L - \mu) x} & \textrm{ if } \,n/4 + 1 \leq i \leq n / 2 \, \textrm{ or } \, 3n / 4 + 1 \leq i \leq n.
    \end{cases}
\end{align*}
Since $\nabla F_2(x) = \mu x$, it follows that $\bignorm{(L - \mu) x} < \kappa \bignorm{\nabla F_2(x)}$.
Therefore, the construction satisfies \Cref{ass:grad-generalized} with $P = \kappa$.
Additionally, we note that $x^* = 0$ and $F_2(x^*) = 0$.
Using these component functions, we first derive the closed-form expression for the iterates obtained by running \Cref{alg:igd}:
\begin{align*}
    x^k_{n/4} & = x^k_0 - \frac{\eta n G}{4},\\
    x^k_{n/2} & = \bigopen{1 - \eta L}^{\frac{n}{4}}x^k_{n/4} = \bigopen{1 - \eta L}^{\frac{n}{4}}x^k_0 - \bigopen{1 - \eta L}^{\frac{n}{4}}\frac{\eta n G}{4},\\
    x^k_{3n/4} & = x^k_{n/2} + \frac{\eta n G}{4} = \bigopen{1 - \eta L}^{\frac{n}{4}}x^k_0 + \bigopen{1 - \bigopen{1 - \eta L}^{\frac{n}{4}}}\frac{\eta n G}{4},\\
    x^k_n & = \bigopen{1 + \eta a}^{\frac{n}{4}}x^k_{3n/4} = \bigopen{1 + \eta a}^{\frac{n}{4}} \bigopen{1 - \eta L}^{\frac{n}{4}}x^k_0 + \bigopen{1 + \eta a}^{\frac{n}{4}} \bigopen{1 - (1 - \eta L)^{\frac{n}{4}}}\frac{\eta n G}{4}.
\end{align*}

Let $p := (1 - \eta L)^{\frac{n}{4}}$ and $q := (1 + \eta a)^{\frac{n}{4}}$.
Using these definitions, the epoch-wise recursion equation can be expressed as:
\begin{align*}
    x^{k+1}_0 = pqx^k_0 + q(1 - p) \frac{\eta n G}{4}.
\end{align*}

By unrolling the above equation over $k \in [K]$, we obtain the final iterate $x_n^K$:
\begin{align}
    x^K_n = (pq)^K x_0 + \frac{1 - (pq)^K}{1 - pq} \cdot q(1 - p) \frac{\eta n G}{4}.\label{eq:large-lb-concave-x}
\end{align}

We now state key inequalities regarding $p$ and $q$:
\begin{restatable}{lemma}{lempqinequality}
    \label{lemma:lempqinequality}
    Under the conditions $K \ge \kappa \ge n \ge 3$, the following inequalities hold for $\eta \in \left[\frac{1}{\mu n K}, \frac{1}{n L}\right)$:
    \vspace{-2mm}
    \begin{enumerate}
        \item $1 - p \geq \begin{cases}
            \frac{L}{8 \mu K} & \textrm{ if } \, \eta \in \left[\frac{1}{\mu n K}, \frac{\mu}{L^2}\right),\\
             \frac{n\mu}{8L} & \textrm{ if } \, \eta \in \left[\frac{\mu}{L^2}, \frac{1}{nL}\right).
        \end{cases}$
        \item $1 - (pq)^K \geq 1 - e^{-1}$.
        \item $\frac{1}{1 - pq} \geq \begin{cases}
            \frac{4}{5\eta n \mu} & \textrm{ if } \, \eta \in \left[\frac{1}{\mu n K}, \frac{\mu}{L^2}\right),\\ \frac{4}{5 \eta^2 n L^2} & \textrm{ if } \, \eta \in \left[\frac{\mu}{L^2}, \frac{1}{nL}\right). \end{cases}$
    \end{enumerate}
\end{restatable}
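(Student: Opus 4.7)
\textbf{Proof plan for Lemma \ref{lemma:lempqinequality}.}
The plan is to handle the three inequalities in sequence, each via elementary bounds on $(1-\eta L)^{n/4}$ and $(1+\eta a)^{n/4}$, where $a = L - 4\mu \in [0,L)$.

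For inequality (1), I would bound $p$ from above via $p = (1-\eta L)^{n/4} \le e^{-\eta L n/4}$. Since $\eta < \frac{1}{nL}$ throughout, $u := \eta L n/4 \in (0, 1/4]$, so the elementary estimate $1 - e^{-u} \ge u/2$ on $[0,1]$ gives the uniform bound $1 - p \ge \eta L n/8$. Plugging in $\eta \ge \frac{1}{\mu n K}$ yields $1-p \ge \frac{L}{8\mu K}$, which handles the first sub-case; plugging in $\eta \ge \frac{\mu}{L^2}$ yields $1-p \ge \frac{n\mu}{8L}$, which handles the second. Both sub-cases follow from the single intermediate estimate.

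For inequality (2), the idea is to show $(pq)^K \le e^{-1}$ by producing a clean lower bound on $\ln(1/(pq))$. Using $-\ln(1-\eta L) \ge \eta L$ and $-\ln(1+\eta a) \ge -\eta a$, I would add to get
\begin{align*}
    \ln\!\tfrac{1}{pq} \;=\; \tfrac{n}{4}\bigl(-\ln(1-\eta L) - \ln(1+\eta a)\bigr) \;\ge\; \tfrac{n}{4}\cdot \eta(L-a) \;=\; n\eta\mu.
\end{align*}
Combining with $\eta \ge \frac{1}{\mu n K}$ yields $K\ln(1/(pq)) \ge 1$, i.e.\ $(pq)^K \le e^{-1}$, hence $1-(pq)^K \ge 1 - e^{-1}$.

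For inequality (3), I would write $pq = r^{n/4}$ with $r = (1-\eta L)(1+\eta a) = 1 - 4\eta\mu - \eta^2 aL$, so that $0 \le r \le 1$ and $1-r = 4\eta\mu + \eta^2 a L$. Then the telescoping identity $1 - r^m = (1-r)(1 + r + \cdots + r^{m-1}) \le m(1-r)$ gives
\begin{align*}
    1 - pq \;\le\; \tfrac{n}{4}(1-r) \;\le\; n\eta\mu + \tfrac{n \eta^2 L^2}{4},
\end{align*}
using $a \le L$. A case split on $\eta$ finishes: when $\eta < \mu/L^2$, the quadratic term is dominated by the linear term and $1-pq \le \tfrac{5}{4}n\eta\mu$; when $\eta \ge \mu/L^2$, the linear term is dominated by the quadratic one and $1-pq \le \tfrac{5}{4}n\eta^2 L^2$. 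Inverting gives exactly the two bounds in the lemma.

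The mildly delicate step is (3), where one must track constants carefully during the case split so that the factor $\tfrac{5}{4}$ emerges and the resulting $\tfrac{4}{5}$ matches the statement; everything else is a direct application of standard logarithm and exponential inequalities, so no serious obstacle is expected.
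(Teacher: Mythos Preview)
Your proposal is correct and follows essentially the same approach as the paper: elementary exponential bounds for (1) and (2), and for (3) writing $pq = r^{n/4}$ with $r = 1 - 4\eta\mu - \eta^2 a L$, then bounding $1 - pq \le \frac{n}{4}(1-r)$ and case-splitting on $\eta$ to absorb the smaller of $4\eta\mu$ and $\eta^2 L^2$ into the other. The only small point you gloss over is the verification that $r \ge 0$ (needed for $pq \in [0,1]$ and hence for the inequality $1 - r^{n/4} \le \frac{n}{4}(1-r)$ via Bernoulli); the paper checks this explicitly using $\eta \le \frac{1}{nL}$ and $\kappa \ge n \ge 3$, and you should too.
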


The proof of \Cref{lemma:lempqinequality} is presented in \Cref{appendix:large-lb-techlmm}.
Setting the initialization point $x_0 = 0$, \cref{eq:large-lb-concave-x} simplifies to
\begin{align}
    x_n^K 
    = \frac{1 - (pq)^K}{1 - pq} \cdot q(1 - p) \frac{\eta n G}{4} 
    \ge \frac{1 - e^{-1}}{1 - pq} \cdot 1 \cdot (1 - p) \cdot \frac{\eta n G}{4} 
    = \frac{1 - e^{-1}}{4} \cdot \frac{1 - p}{1 - pq} \cdot \eta n G. \label{eq:large-lb-concave-x2}
\end{align}

Now, we divide the range of step size into two regimes: $\left[\frac{1}{\mu n K}, \frac{\mu}{L^2}\right)$ and $\left[\frac{\mu}{L^2}, \frac{1}{nL}\right)$.

\textbf{Regime 1.} $\eta \in \left[\frac{1}{\mu n K}, \frac{\mu}{L^2}\right)$.

In this regime, we have $1 - p \ge \frac{L}{8 \mu K}$ and $\frac{1}{1 - pq} \ge \frac{4}{5 \eta n \mu}$.
Substituting these inequalities to \cref{eq:large-lb-concave-x2} results
\begin{align*}
    x_n^K \ge \frac{\bigopen{1-e^{-1}} L G}{40 \mu^2 K}.
\end{align*}

\textbf{Regime 2.} $\eta \in \left[\frac{\mu}{L^2}, \frac{1}{nL}\right)$.

In this regime, we have $1 - p \ge \frac{n \mu}{8L}$ and $\frac{1}{1 - pq} \ge \frac{4}{5 \eta^2 n L^2}$.
Substituting these inequalities to \cref{eq:large-lb-concave-x2} results
\begin{align*}
    x_n^K \ge \frac{1 - e^{-1}}{40} \cdot \frac{n \mu G}{\eta L^3} \ge \frac{1 - e^{-1}}{40} \cdot \frac{n^2 \mu G}{L^2}.
\end{align*}

Using the assumption $K \ge \frac{\kappa^3}{n^2}$, it follows that $n^2 \ge \frac{\kappa^3}{K}$, resulting
\begin{align*}
    x^K_n \geq \frac{\bigopen{1 - e^{-1}} L G}{40 \mu^2 K}.
\end{align*}
Combining the results for the two subdivided step size regimes, we have
\begin{align*}
    x^K_n \geq \frac{\bigopen{1 - e^{-1}} L G}{40 \mu^2 K}.
\end{align*}
for all $\eta \in \left[\frac{1}{\mu n K}, \frac{1}{n L}\right)$.

Finally, the function optimality gap is
\begin{align*}
    F_2(x^K_n) - F_2(x^*) = \frac{\mu}{2}\bigopen{x^K_n}^2 \gtrsim \frac{L^2G^2}{\mu^3 K^2}.
\end{align*}

\subsubsection{Construction of $F_3$}
\label{subsubsec:large-lb-concave-f3}

We focus on the case where $n$ is even.
If $n$ is odd, we introduce an additional zero component function.
This does not affect the final result but only modifies each parameter at most by a constant factor.

In this subsection, we let $L'$ denote $L/2$.
Let $F_3 = \frac{L'}{2}x^2$ with component functions
\begin{align*}
    f_i(x) = \begin{cases}
        \frac{L'}{2}x^2 + Gx & \textrm{ if } \, i \leq n/2,\\
        \frac{L'}{2}x^2 - Gx & \textrm{ otherwise}.
    \end{cases}
\end{align*}

It is clear that each $f_i$ is $L$-smooth.
Since $\kappa \ge 2$, we have $L' = \frac{L}{2} \ge \mu$.
Thus, $F_3$ is $\mu$-strongly convex, satisfying \Cref{ass:common}.
Also, we can easily verify that the construction satisfies \Cref{ass:grad-generalized} with $P = 0$.
We note that $x^*=0$ and $F_3(x^*) = 0$.

By \Cref{lem:quadratic-twotype-IGD-closed}, the final iterate obtained by running \Cref{alg:igd} is given by
\begin{align*}
    x^K_n = \bigopen{1 - \eta L'}^{nK}x_0 + \frac{G}{L'} \cdot \frac{1 - (1-\eta L')^{\frac{n}{2}}}{1 + (1-\eta L')^{\frac{n}{2}}} \bigopen{1 - \bigopen{1 - \eta L'}^{nK}}.
\end{align*}

Recall that $\frac{1}{nL} = \frac{1}{2nL'}$ and $\frac{2}{L} = \frac{1}{L'}$.
Since $\eta \in \left[\frac{1}{2nL'}, \frac{1}{L'}\right)$, it follows that
\begin{align*}
    (1 - \eta L')^{\frac{n}{2}} \leq \bigopen{1 - \frac{1}{2n}}^{\frac{n}{2}} \leq e^{-\frac{1}{4}}, \text{ and } \, (1 - \eta L')^{nK} \le e^{-\frac{K}{4}}.
\end{align*}
   
Using these inequalities and setting the initialization as $x_0 = 0$, the final iterate $x_n^K$ is expressed as:
\begin{align*}
    x^K_n = \frac{G}{L'} \cdot \frac{1 - (1-\eta L')^{\frac{n}{2}}}{1 + (1-\eta L')^{\frac{n}{2}}} \bigopen{1 - \bigopen{1 - \eta L'}^{nK}} \geq \frac{G}{L'}\frac{1 - e^{-\frac{1}{4}}}{2}\bigopen{1 - e^{-\frac{K}{4}}} \geq \frac{G}{L'}\frac{\bigopen{1 - e^{-\frac{1}{4}}}^2}{2}.
\end{align*}

Finally, the function optimality gap becomes
\begin{align*}
    F_3(x^K_n) - F_3(x^*) = \frac{L'}{2}\bigopen{x^K_n}^2 \gtrsim \frac{G^2}{L} \geq \frac{L^2G^2}{\mu^3 K^2},
\end{align*}
where the last inequality holds since $K \geq \kappa^{3/2}$.

\subsubsection{Construction of $F_4$}
\label{subsubsec:large-lb-concave-f4}
Let $F_4(x) = \frac{L}{2} x^2$ with component functions $f_i(x) = F_4(x)$ for all $i \in [n]$.
It is clear that $F_4$ satisfies \Cref{ass:common}, \Cref{ass:grad-generalized} with $G=P = 0$.
Also, we note that $x^* = 0$ and $F_4(x^*) = 0$.

For all $\eta \in \left[\frac{2}{L}, \infty\right)$, the final iterate is given by
\begin{align*}
    x_n^K = \bigopen{1 - \eta L}^{nK} x_0.
\end{align*}

In this regime, the step size is excessively large, resulting in
\begin{align*}
    1 - \eta L \le 1 - \frac{2}{L} \cdot L \le -1,
\end{align*}
which implies $\bigabs{(1 - \eta L)^{nK}} \ge 1$.
Thus, the iterate does not converge and satisfies $\abs{x^K_n} \geq \abs{x_0}$. 

By setting the initialization $x_0 = \sqrt{\kappa} \frac{G}{\mu K}$, we have
\begin{align*}
    F_4(x_n^K) - F_4(x^*) = \frac{L}{2} (x_n^K)^2 \geq \frac{L}{2}(x_0)^2 \gtrsim \frac{L^2G^2}{\mu^3 K^2}.
\end{align*}

\subsection{Technical Lemmas}
\label{appendix:large-lb-techlmm}

\lempqinequality*

\begin{proof}
    Recall the definitions of $p$ and $q$:
    \begin{align*}
        &p = (1 - \eta L)^{\frac{n}{4}},\\
        &q = (1 + \eta a)^{\frac{n}{4}} = (1 + \eta (L - 4 \mu))^{\frac{n}{4}}.
    \end{align*}

    To prove the first inequality, we divide the range of step size into two regimes: $\left[\frac{1}{\mu n K}, \frac{\mu}{L^2}\right)$ and $\left[\frac{\mu}{L^2}, \frac{1}{nL}\right)$.
    Note that the first regime may be empty depending on the condition on $K$, but remains valid (i.e. $\frac{1}{\mu n K} \leq \frac{\mu}{L^2}$) under the condition $K \geq \kappa^2/n$ in the current theorem.
    
    \textbf{Regime 1.} $\eta \in \left[\frac{1}{\mu n K}, \frac{\mu}{L^2}\right)$.

    In this regime, we can bound $p$ as:
    \begin{align*}
        p = (1 - \eta L)^{\frac{n}{4}} 
        \leq \bigopen{1 - \frac{L}{\mu n K}}^{\frac{n}{4}} 
        \leq e^{-\frac{L}{4 \mu K}} 
        \leq 1 - \frac{L}{8 \mu K}.
    \end{align*}

    Here, the first step holds because $\eta \ge \frac{1}{\mu n K}$.
    In the final step, we utilize the inequalities $\frac{L}{4 \mu K} < 1$ and $e^{-u} \le 1 - \frac{1}{2} u$ for all $u \in [0, 1]$.
    Hence, we can obtain $1- p \ge \frac{L}{8 \mu K}$.

    \textbf{Regime 2.} $\eta \in \left[\frac{\mu}{L^2}, \frac{1}{nL}\right)$.
 
    In this regime, we can bound $p$ as:
    \begin{align*}
        p = (1 - \eta L)^{\frac{n}{4}} 
        \leq \bigopen{1 - \frac{\mu}{L}}^{\frac{n}{4}} 
        \leq e^{-\frac{n \mu}{4 L}} 
        \leq 1 - \frac{n \mu}{8 L}.
    \end{align*}
    
    Here, the first step holds because $\eta \ge \frac{\mu}{L^2}$.
    At the final step, we utilize the inequalities $\frac{n \mu}{4L} < 1$ and $e^{-u} \le 1 - \frac{1}{2} u$ for all $u \in [0, 1]$.
    Hence, we can obtain $1- p \ge \frac{n \mu}{8 L}$.

    To bound $1 - (pq)^K$, we first establish bounds for $pq$:
    \begin{align*}
            pq 
            = (1 - \eta L)^{\frac{n}{4}}(1 + \eta a)^{\frac{n}{4}}
            \leq e^{-\frac{\eta n L}{4}} \cdot e^{\frac{\eta n a}{4}} 
            = e^{-\frac{\eta n (L-a)}{4}} 
            = e^{-\eta n \mu} \leq e^{-\frac{1}{K}},
    \end{align*}
    where we apply $\eta \ge \frac{1}{\mu n K}$ at the last step.
    Therefore, we can obtain
    \begin{align*}
        1- (pq)^K \ge 1 - \bigopen{e^{-\frac{1}{K}}}^K = 1 - e^{-1}.
    \end{align*}

    The last inequality requires more careful analysis.
    We further refine the bounds for $pq$.
    Using $a = L - 4\mu < L$, it follows that
    \begin{align*}
        1 - \eta (L-a) - \eta^2 a L \geq 1 - 4 \eta \mu  - \eta^2 L^2 \geq 1 - \frac{4}{n \kappa} - \frac{1}{n^2} \geq 0,
    \end{align*}
    where the second step is due to $\eta \leq \frac{1}{nL}$ and last step holds by the condition $\kappa \geq n \geq 3$. Hence,
    \begin{align}
        pq 
        = (1 - \eta L)^{\frac{n}{4}} (1 + \eta a)^{\frac{n}{4}}
        = (1 - \eta (L-a) - \eta^2 a L)^{\frac{n}{4}}
        \ge (1 - 4 \eta \mu - \eta^2 L^2)^{\frac{n}{4}}. \label{eq:pqbound}
    \end{align}
    
    We again divide the range of step size into two regimes: $\left[\frac{1}{\mu n K}, \frac{\mu}{L^2}\right)$ and $\left[\frac{\mu}{L^2}, \frac{1}{nL}\right)$.

    \textbf{Regime 1.} $\eta \in \left[\frac{1}{\mu n K}, \frac{\mu}{L^2}\right)$.

    In this regime, we have $\eta^2 L^2 \le \eta \mu$.
    Hence, \cref{eq:pqbound} becomes
    \begin{align*}
        pq \ge (1 - 4 \eta \mu - \eta^2 L^2)^{\frac{n}{4}} \ge (1 - 5 \eta \mu)^{\frac{n}{4}} \ge 1 - \frac{5}{4} \eta n \mu,
    \end{align*}
    since $5 \eta \mu \le \frac{5 \mu^2}{L^2} < 1$ (assuming $\kappa \ge n \ge 3$).
    Therefore, we obtain the following inequality:
    \begin{align*}
        \frac{1}{1 - pq} \ge \frac{4}{5 \eta n \mu}.
    \end{align*}

    \textbf{Regime 2.} $\eta \in \left[\frac{\mu}{L^2}, \frac{1}{nL}\right)$.

    In this regime, we have $\eta^2 L^2 \ge \eta \mu$.
    Hence, \cref{eq:pqbound} becomes
    \begin{align*}
        pq \ge (1 - 4 \eta \mu - \eta^2 L^2)^{\frac{n}{4}} \ge (1 - 5 \eta^2 L^2)^{\frac{n}{4}} \ge 1 - \frac{5}{4} \eta^2 n L^2,
    \end{align*}
    since $5 \eta^2 L^2 < \frac{5}{n^2} < 1$ (assuming $n \ge 3$).
    Therefore, we obtain the following inequality:
    \begin{align*}
        \frac{1}{1 - pq} \ge \frac{4}{5 \eta^2 n L^2}.
    \end{align*}

    This concludes the proof of the lemma.
\end{proof}
\newpage
\section{Proofs for Large Epoch Upper Bounds}
\label{appendix:large-epoch-ub}

In this section, we provide detailed proof for \Cref{thm:large-ub-generalizedgrad}.

\subsection{Proof of \Cref{thm:large-ub-generalizedgrad}}
\label{subsec:large-ub-generalizedgrad}
\thmlargeubgeneralizedgrad*
\begin{proof}
We begin by noting the specific epoch condition used to prove the statement:
\begin{align*}
    K \geq 8\kappa \max\bigset{1, P} \max \bigset{\log \bigopen{\frac{(F(\vx_0) - F(\vx^*))\mu^3K^2}{L^2G^2}}, 1}.
\end{align*}

Given this epoch condition and the choice of step size $\eta$ specified in the theorem statement, we have $\eta n L \leq \frac{1}{4}\min\bigset{1,\frac{1}{P}}$, which will be repeatedly utilized throughout the proof.

Consider the following epoch-wise recursive inequality for the objective function:
\begin{align}
    F \bigopen{\vx_0^{k+1}}
    & \le F \bigopen{\vx_0^k} + \inner{\nabla F \bigopen{\vx_0^k}, \vx_0^{k+1} - \vx_0^k} + \frac{L}{2} \bignorm{\vx_0^{k+1} - \vx_0^k}^2 \nonumber \\
    & = F \bigopen{\vx_0^k} - \eta n \inner{\nabla F \bigopen{\vx_0^k}, \frac{1}{n} \sum_{i=1}^n \nabla f_{\sigma_k \bigopen{i}} \bigopen{\vx_{i-1}^k}} + \frac{\eta^2 n^2 L}{2} \bignorm{\frac{1}{n} \sum_{i=1}^n \nabla f_{\sigma_k \bigopen{i}} \bigopen{\vx_{i-1}^k}}^2 \nonumber\\
    & = F \bigopen{\vx_0^k} - \frac{\eta n}{2} \bignorm{\nabla F \bigopen{\vx_0^k}}^2 - \frac{\eta n}{2} \bignorm{\frac{1}{n} \sum_{i=1}^n \nabla f_{\sigma_k \bigopen{i}} \bigopen{\vx_{i-1}^k}}^2 \nonumber \\ 
    & \quad + \frac{\eta n}{2} \bignorm{\nabla F \bigopen{\vx_0^k} \nonumber - \frac{1}{n} \sum_{i=1}^n \nabla f_{\sigma_k \bigopen{i}} \bigopen{\vx_{i-1}^k}}^2 + \frac{\eta^2 n^2 L}{2} \bignorm{\frac{1}{n} \sum_{i=1}^n \nabla f_{\sigma_k \bigopen{i}} \bigopen{\vx_{i-1}^k}}^2 \nonumber \\
    & \overset{(a)}{\le} F \bigopen{\vx_0^k} - \frac{\eta n}{2} \bignorm{\nabla F \bigopen{\vx_0^k}}^2 + \frac{\eta n}{2} \bignorm{\nabla F \bigopen{\vx_0^k} - \frac{1}{n} \sum_{i=1}^n \nabla f_{\sigma_k \bigopen{i}} \bigopen{\vx_{i-1}^k}}^2 \nonumber \\
    & \overset{(b)}{\le} F \bigopen{\vx_0^k} - \frac{\eta n}{2} \bignorm{\nabla F \bigopen{\vx_0^k}}^2 + \frac{\eta L^2}{2} \sum_{i=1}^n \bignorm{\vx^k_0 - \vx^k_{i-1}}^2, \label{eq:largeubgenepoch}
\end{align}
where (a) holds due to $\eta n L \le \frac{1}{4} < 1$ and (b) follows from the inequality:
\begin{align*}
    \bignorm{\nabla F \bigopen{\vx_0^k} - \frac{1}{n} \sum_{i=1}^n \nabla f_{\sigma_k \bigopen{i}} \bigopen{\vx_{i-1}^k}}^2 & = \bignorm{\frac{1}{n} \sum_{i=1}^n \bigopen{\nabla f_{\sigma_k \bigopen{i}} \bigopen{\vx_0^k} - \nabla f_{\sigma_k \bigopen{i}} \bigopen{\vx_{i-1}^k}}}^2 \nonumber\\
    & \le \frac{1}{n} \sum_{i=1}^n \bignorm{\nabla f_{\sigma_k \bigopen{i}} \bigopen{\vx_0^k} - \nabla f_{\sigma_k \bigopen{i}} \bigopen{\vx_{i-1}^k}}^2 \nonumber\\
    & \le \frac{L^2}{n} \sum_{i=1}^n \bignorm{\vx_0^k - \vx_{i-1}^k}^2.
\end{align*}

Next, we need to derive an upper bound for $\bignorm{\vx_0^k - \vx_{i-1}^k}^2$. 
For $t \in [n]$, we have 
\begin{align}
    \bignorm{\vx_0^k - \vx_{t}^k}^2
    &= \eta^2 \bignorm{\sum_{i=1}^t \nabla f_{\sigma_k \bigopen{i}} \bigopen{\vx_{i-1}^k}}^2 \nonumber \\
    &\le 3\eta^2 \bignorm{\sum_{i=1}^t \bigopen{\nabla f_{\sigma_k \bigopen{i}} \bigopen{\vx_{i-1}^k} - \nabla f_{\sigma_k \bigopen{i}} \bigopen{\vx_0^k}}}^2 + 3\eta^2
    \bignorm{\sum_{i=1}^t \bigopen{\nabla f_{\sigma_k \bigopen{i}} \bigopen{\vx_0^k} - \nabla F \bigopen{\vx_0^k}}}^2 \nonumber \\
    &\quad+ 3\eta^2 
    \bignorm{\sum_{i=1}^t \nabla F \bigopen{\vx_0^k}}^2 \nonumber \\
    & \overset{(a)}{\leq} 3 \eta^2 t \sum_{i=1}^t L^2 \bignorm{\vx_0^k - \vx_{i-1}^k}^2  + 6 \eta^2 t^2 \bigopen{G^2 + P^2 \bignorm{\nabla F(\vx_0^k)}^2} + 3 \eta^2 t^2 \bignorm{\nabla F \bigopen{\vx_0^k}}^2 \nonumber \\
    & = 3 \eta^2 t L^2 \sum_{i=1}^t \bignorm{\vx_0^k - \vx_{i-1}^k}^2  + 6 \eta^2 t^2 G^2 + 3 \eta^2 t^2(1 + 2P^2) \bignorm{\nabla F \bigopen{\vx_0^k}}^2. \label{eq:y_diff}
\end{align}

Here, (a) is derived by applying \Cref{ass:grad-generalized} through the following sequence of inequalities:
\begin{align*}
    \bignorm{\sum_{i=1}^t \bigopen{\nabla f_{\sigma_k \bigopen{i}} \bigopen{\vx} - \nabla F \bigopen{\vx}}}^2 
    & \leq t \sum_{i=1}^t \bignorm{\nabla f_{\sigma_k \bigopen{i}} \bigopen{\vx} - \nabla F \bigopen{\vx}}^2\\
    & \leq t \sum_{i=1}^t \bigopen{G + P\bignorm{\nabla F(\vx)}}^2\\
    & \leq t \sum_{i=1}^t \bigopen{2G^2 + 2P^2 \bignorm{\nabla F(\vx)}^2}\\
    & \leq 2t^2\bigopen{G^2 + P^2 \bignorm{\nabla F(\vx)}^2}.
\end{align*}

Summing \cref{eq:y_diff} over $t = 1, \dots, n-1$, we have
\begin{align*}
    \sum_{i=1}^n \bignorm{\vx_0^k - \vx_{i-1}^k}^2
    & \leq 3 \eta^2 \frac{(n - 1) n}{2} L^2 \sum_{i=1}^n \bignorm{\vx_0^k - \vx_{i-1}^k}^2 + 6 \eta^2 \frac{(n - 1)n(2n - 1)}{6} G^2 \\
    & \phantom{\le} + 3 \eta^2 \frac{(n - 1) n (2n -1)}{6} (1 + 2P^2)\bignorm{\nabla F \bigopen{\vx_0^k}}^2\\
    & \le 3 \eta^2 n^2 L^2 \sum_{i=1}^n \bignorm{\vx_0^k - \vx_{i-1}^k}^2 + 2\eta^2 n^3 G^2 + \eta^2 n^3(1 + 2 P^2) \bignorm{\nabla F \bigopen{\vx_0^k}}^2.
\end{align*}

Given $\eta n L \le \frac{1}{4}$, it follows that $3 \eta^2 n^2 L^2 \leq \frac{1}{2}$ and the above inequality simplifies to
\begin{align}
    \sum_{i=1}^n \bignorm{\vx_0^k - \vx_{i-1}^k}^2 \leq 4 \eta^2 n^3G^2 + 2 \eta^2 n^3 (1 + 2 P^2) \bignorm{\nabla F \bigopen{\vx_0^k}}^2. \label{eq:y_diff_sum}
\end{align}

Substituting \cref{eq:y_diff_sum} to \cref{eq:largeubgenepoch} results in
\begin{align*}
    F \bigopen{\vx_0^{k+1}}
    &\leq F \bigopen{\vx_0^k} - \frac{\eta n}{2} \bignorm{\nabla F \bigopen{\vx_0^k}}^2 + \frac{\eta L^2}{2} \sum_{i=1}^n\bignorm{\vx^k_0 - \vx^k_{i-1}}^2\\
    & \leq F(\vx^k_0) - \frac{\eta n}{2} \bignorm{\nabla F(\vx^k_0)}^2 + \frac{\eta L^2}{2} \bigopen{4 \eta^2n^3G^2 + 2\eta^2 n^3(1 + 2P^2)\bignorm{\nabla F(\vx^k_0)}^2}\\
    & \leq F(\vx^k_0) - \frac{\eta n}{2}\bigopen{1 - 2\eta^2n^2L^2\bigopen{1 + 2P^2}} \bignorm{\nabla F(\vx^k_0)}^2 + 2 \eta^3n^3L^2G^2\\
    & \overset{(a)}{\leq} F(\vx^k_0) - \frac{\eta n}{4}\bignorm{\nabla F(\vx^k_0)}^2 + 2 \eta^3n^3L^2G^2\\
    & \overset{(b)}{\leq} F(\vx^k_0) - \frac{\eta n \mu}{2}\bigopen{F(\vx^k_0) - F(\vx^*)} + 2 \eta^3n^3L^2G^2,
\end{align*}
where at $(a)$, we use $\eta n L \leq \frac{1}{4}$ and $\eta n L \leq \frac{1}{4P}$, ensuring $\eta^2n^2L^2\bigopen{1 + 2P^2} \leq \frac{1}{16} + \frac{1}{8} \leq \frac{1}{4}$, and at $(b)$, we utilize the assumption that $F$ satisfies $\mu$-strongly convexity.
We note that $(b)$ is the only step where $\mu$-strong convexity of $F$ is utilized, and it also holds under the weaker assumption that $F$ satisfies the Polyak-\L ojasiewicz condition.
Thus, \Cref{thm:large-ub-generalizedgrad} remains valid when $F$ satisfies the P\L{} condition.

Rearranging this inequality leads to
\begin{align*}
    F \bigopen{\vx_0^{k+1}} - F(\vx^*) \le \bigopen{1 - \frac{\eta n \mu}{2}} \bigopen{F \bigopen{\vx_0^k} - F^*} + 2 \eta^3n^3L^2G^2,
\end{align*}
and we can obtain
\begin{align*}
    F \bigopen{\vx_n^K} - F(\vx^*)
    & \le \bigopen{1 - \frac{\eta n \mu}{2}}^K \bigopen{F \bigopen{\vx_0} - F(\vx^*)} + 2\eta^3n^3L^2G^2 \cdot \sum_{k=1}^K \bigopen{1 - \frac{\eta n \mu}{2}}^{k-1}  \\
    & \leq \bigopen{1 - \frac{\eta n \mu}{2}}^K \bigopen{F \bigopen{\vx_0} - F(\vx^*)} + 2\eta^3n^3L^2G^2 \cdot \frac{2}{\eta n \mu} \\
    & \leq e^{- \frac{\eta \mu n K}{2}} \bigopen{F \bigopen{\vx_0} - F^*} + \frac{4 \eta^2 n^2 L^2G^2}{\mu}.
\end{align*}

We now substitute $\eta = \frac{2}{\mu n K} \max \bigset{\log\bigopen{\frac{(F(\vx_0) - F(\vx^*))\mu^3K^2}{L^2G^2}}, 1}$.
For the case when $F(\vx_0) - F(\vx^*)$ is sufficiently large, the above inequality becomes
\begin{align*}
    F \bigopen{\vx_n^K} - F(\vx^*)
    \le \frac{L^2 G^2}{\mu^3 K^2} + \frac{16L^2 G^2}{\mu^3 K^2} \cdot \log^2 \bigopen{\frac{(F(\vx_0) - F(\vx^*))\mu^3K^2}{L^2G^2}} \lesssim \frac{L^2G^2}{\mu^3 K^2}.
\end{align*}

For the case when $F(\vx_0) - F(\vx^*)$ is small so that $1$ is chosen after the max operation, the above inequality then becomes
\begin{align*}
    F \bigopen{\vx_n^K} - F(\vx^*)
    \le \frac{1}{e} \cdot e \cdot \frac{L^2 G^2}{\mu^3 K^2} + \frac{16L^2 G^2}{\mu^3 K^2} \lesssim \frac{L^2G^2}{\mu^3 K^2},
\end{align*}
where we utilize $F(\vx_0) - F(\vx^*) \le e \cdot \frac{L^2 G^2}{\mu^3 K^2}$.
This ends the proof of \Cref{thm:large-ub-generalizedgrad}.
\end{proof}
\newpage
\section{Lemmas}
\label{appendix:lemmas}

\begin{lemma}
\label{lem:quadratic-twotype-IGD-closed}
Let $n$ be an even number. 
Define $F(x) = \frac{a}{2}x^2$ with component functions 
\begin{align*}
    f_i(x) = \begin{cases}
        \frac{a}{2}x^2 + Gx & \textrm{ if } i \leq n/2,\\
        \frac{a}{2}x^2 - Gx & \textrm{ otherwise}.
    \end{cases}
\end{align*}
Then, the final iterate $x_n^K$ obtained by running \Cref{alg:igd} for $K$ epochs with a step size $\eta$ starting from the initialization point $x_0$, satisfies:
\begin{align*}
    x_n^K = \bigopen{1 - \eta a}^{nK}x_0 + \frac{G}{a} \cdot \frac{1 - (1-\eta a)^{\frac{n}{2}}}{1 + (1-\eta a)^{\frac{n}{2}}} \bigopen{1 - \bigopen{1 - \eta a}^{nK}}.
\end{align*}
\end{lemma}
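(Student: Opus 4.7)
The plan is a direct computation by unrolling the iteration, done in three stages: within the first half-epoch, within the second half-epoch, and then across epochs. The key observation is that each component gradient is affine in $x$, so every iterate is an affine function of the previous one, and the per-step multiplicative factor $\beta := 1-\eta a$ is the same for all $i$; only the additive term flips sign at $i=n/2+1$.

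First, I would write out the one-step recursion. For $i \leq n/2$ we have $\nabla f_i(x) = ax + G$, giving $x_i^k = \beta x_{i-1}^k - \eta G$; for $i > n/2$ we get $x_i^k = \beta x_{i-1}^k + \eta G$. Unrolling the first $n/2$ steps via the standard geometric sum identity,
\begin{align*}
x_{n/2}^k = \beta^{n/2} x_0^k - \eta G \sum_{j=0}^{n/2-1} \beta^j = \beta^{n/2} x_0^k - \tfrac{G}{a}\bigl(1-\beta^{n/2}\bigr),
\end{align*}
using $\eta/(1-\beta) = 1/a$. Applying the same computation to the second half-epoch, starting now from $x_{n/2}^k$ with the flipped sign, yields
\begin{align*}
x_n^k = \beta^{n/2} x_{n/2}^k + \tfrac{G}{a}\bigl(1-\beta^{n/2}\bigr) = \beta^n x_0^k + \tfrac{G}{a}\bigl(1-\beta^{n/2}\bigr)^2,
\end{align*}
after cancellation. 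This gives the clean per-epoch recursion $x_0^{k+1} = \beta^n x_0^k + \tfrac{G}{a}(1-\beta^{n/2})^2$.

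Second, I would unroll this across the $K$ epochs. Since it is a linear recurrence with constant coefficients, one immediately obtains
\begin{align*}
x_n^K = \beta^{nK} x_0 + \tfrac{G}{a}\bigl(1-\beta^{n/2}\bigr)^2 \cdot \frac{1-\beta^{nK}}{1-\beta^n}.
\end{align*}
The final simplification uses the factorization $1-\beta^n = (1-\beta^{n/2})(1+\beta^{n/2})$ to reduce the ratio $(1-\beta^{n/2})^2/(1-\beta^n)$ to $(1-\beta^{n/2})/(1+\beta^{n/2})$, yielding exactly the stated closed form.

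There is no real obstacle here beyond careful bookkeeping of signs and exponents; the only minor subtlety is checking that $1-\beta^n \neq 0$ (which holds for $\eta \neq 0, 2/a$, with the degenerate cases handled by taking limits or by direct substitution), and making sure the geometric-series identity $\sum_{j=0}^{m-1}\beta^j = (1-\beta^m)/(1-\beta)$ is valid, which it is whenever $\beta \neq 1$.
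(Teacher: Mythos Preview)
Your proposal is correct and follows essentially the same route as the paper: derive the per-step affine recursions, unroll each half-epoch via a geometric sum to obtain the per-epoch recursion $x_0^{k+1} = \beta^n x_0^k + \tfrac{G}{a}(1-\beta^{n/2})^2$, then unroll across epochs and simplify using $1-\beta^n = (1-\beta^{n/2})(1+\beta^{n/2})$. Your added remark on the degenerate cases $\beta \in \{1,-1\}$ is a nice touch the paper omits.
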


\begin{proof}
For $i \leq \frac{n}{2}$, the update rule is given as:
\begin{align*}
    x^k_i = x^k_{i-1} - \eta (ax^k_{i-1} + G) = (1 - \eta a)x^k_{i-1} - \eta G.
\end{align*}

For $i \geq \frac{n}{2} + 1$, the update rule is given as:
\begin{align*}
    x^k_i = x^k_{i-1} - \eta \bigopen{ax^k_{i-1} - G} = (1 - \eta a)x_{i-1}^k + \eta G.
\end{align*}

By sequentially applying the component functions, we derive the following epoch-wise recursion equation: 
\begin{align}
    x^{k+1}_0 &= (1-\eta a)^n x^k_0 - \eta G\sum_{i=1}^{\frac{n}{2}} (1-\eta a)^{n-i} + \eta G\sum_{i=\frac{n}{2} + 1}^{n} (1-\eta a)^{n-i} \nonumber \\
    & = (1-\eta a)^n x^k_0 + \frac{G}{a}{\bigopen{1 - \bigopen{1 - \eta a}^{\frac{n}{2}}}^2}, \label{eq:lmmevenepochwise}
\end{align}
where the last equality follows from the following observation:
\begin{align*}
     - \eta G\sum_{i=1}^{\frac{n}{2}} (1-\eta a)^{n-i} + \eta G\sum_{i=\frac{n}{2} + 1}^{n} (1-\eta a)^{n-i} 
     & = \eta G \bigopen{1 - \bigopen{1 - \eta a}^{\frac{n}{2}}}\sum_{i=\frac{n}{2}+1}^n \bigopen{1 - \eta a}^{n-i}\\
     & = \eta G \bigopen{1 - \bigopen{1 - \eta a}^{\frac{n}{2}}}\frac{1 - \bigopen{1 - \eta a}^{\frac{n}{2}}}{\eta a}\\
     & = \frac{G}{a} \bigopen{1 - \bigopen{1 - \eta a}^{\frac{n}{2}}}^2.
\end{align*}

By unrolling \cref{eq:lmmevenepochwise} over $k \in [K]$, we obtain the equation for the final iterate $x_n^K$:
\begin{align*}
    x_n^K & = (1 - \eta a)^{nK}x_0 + \frac{G}{a} \cdot \frac{1 - \bigopen{1 - \eta a}^{nK}}{1 - \bigopen{1 - \eta a}^{n}}\bigopen{1 - \bigopen{1 - \eta a}^{\frac{n}{2}}}^2\\
    & = (1 - \eta a)^{nK}x_0 + \frac{G}{a} \cdot \frac{1 - \bigopen{1 - \eta a}^{\frac{n}{2}}}{1 + \bigopen{1 - \eta a}^{\frac{n}{2}}}\bigopen{1 - \bigopen{1 - \eta a}^{nK}}.
\end{align*}
\end{proof}

\begin{lemma}
    \label{lem:quadratic-threetype-IGD-closed}
Let $n$ be an odd number.
Define $F(x) = \frac{a}{2}x^2$ with component functions
\begin{align*}
    f_i(x) = \begin{cases}
        \frac{a}{2}x^2 & \textrm{ if } i=1,\\
        \frac{a}{2}x^2 + Gx & \textrm{ if } 2 \leq i \leq (n+1)/2,\\
        \frac{a}{2}x^2 - Gx & \textrm{ if } (n+3)/2 \leq i \leq n.
    \end{cases}
\end{align*}
Then, the final iterate $x_n^K$ obtained by running \Cref{alg:igd} for $K$ epochs with a step size $\eta$ starting from the initialization point $x_0$, satisfies:
\begin{align*}
    x^K_n & = (1 - \eta a)^{nK} x_0 + \frac{G}{a} \cdot \frac{1 - (1-\eta a)^{nK}}{1 - (1-\eta a)^n}\bigopen{1 - \bigopen{1 - \eta a}^{\frac{n-1}{2}}}^2.
\end{align*}
\end{lemma}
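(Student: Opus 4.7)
The plan is to mirror the strategy of Lemma A.5 (the even-$n$ case) almost verbatim, with a single modification to accommodate the neutral first component $f_1(x) = \frac{a}{2}x^2$. Writing $r := 1 - \eta a$ and $m := (n-1)/2$ for brevity, the three-block update rule becomes $x_1^k = r x_0^k$ (no constant), then $x_i^k = r x_{i-1}^k - \eta G$ for $2 \leq i \leq m+1$, and finally $x_i^k = r x_{i-1}^k + \eta G$ for $m+2 \leq i \leq n$.

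The first step would be to unroll one epoch explicitly. After $n$ iterations, the $x_0^k$ term acquires the factor $r^n$, while the accumulated constant equals
\[
-\eta G \sum_{i=2}^{m+1} r^{n-i} + \eta G \sum_{i=m+2}^{n} r^{n-i}.
\]
Each of these is a geometric sum with exactly $m$ terms; a short index shift rewrites the first as $\eta G\, r^{m}\, \frac{1 - r^{m}}{1 - r}$ and the second as $\eta G\, \frac{1 - r^{m}}{1 - r}$. Their difference collapses to $\eta G\, \frac{(1 - r^{m})^2}{1 - r} = \frac{G}{a}(1 - r^{m})^2$, giving the clean epoch-wise recursion $x_0^{k+1} = r^n x_0^k + \frac{G}{a}(1 - r^{m})^2$, in direct analogy with equation~\eqref{eq:lmmevenepochwise}.

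The second step is then purely mechanical: I would unroll this one-dimensional linear recursion over $k = 1, \ldots, K$ using the standard identity $\sum_{j=0}^{K-1} r^{nj} = \frac{1 - r^{nK}}{1 - r^{n}}$, obtaining
\[
x_n^K \;=\; r^{nK} x_0 \;+\; \frac{G}{a} \cdot \frac{1 - r^{nK}}{1 - r^{n}}\,(1 - r^{m})^2,
\]
which is precisely the claimed formula after reinstating $r = 1 - \eta a$ and $m = (n-1)/2$.

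The only obstacle worth flagging is the index bookkeeping in the two geometric sums: one must verify that isolating the neutral index $i=1$ leaves exactly $m$ terms in each remaining block (using the oddness of $n$), so that the positive and negative contributions telescope into a perfect square $(1 - r^{m})^2$ rather than an asymmetric product of two distinct geometric factors. Once this accounting is handled, no further analytical input is required; the argument is a direct odd-$n$ analogue of the even case, and in particular no new structural idea beyond Lemma~\ref{lem:quadratic-twotype-IGD-closed} is needed.
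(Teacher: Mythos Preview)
Your proposal is correct and follows essentially the same approach as the paper. The only cosmetic difference is that the paper invokes the even-$n$ recursion \eqref{eq:lmmevenepochwise} directly as a black box on the last $n-1$ components (starting from $x_1^k = (1-\eta a)x_0^k$), whereas you recompute the two geometric sums from scratch; both routes yield the same epoch-wise recursion $x_0^{k+1} = r^n x_0^k + \frac{G}{a}(1-r^m)^2$ and then unroll it identically.
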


\begin{proof}
Compared to \Cref{lem:quadratic-twotype-IGD-closed}, we have an additional component function $f_1 (x) = \frac{a}{2}x^2$ at the beginning of each epoch.
For this function, the update for $x_1^k$ is given as:
\begin{align*}
    x^k_1 = x^k_0 - \eta a x^k_0 = (1 - \eta a)x^k_0. 
\end{align*}
Thus, the epoch-wise equation in \cref{eq:lmmevenepochwise} of \Cref{lem:quadratic-twotype-IGD-closed} is modified as follows:
\begin{align*}
    x_0^{k+1} = (1-\eta a)^{n-1} x^k_1 + \frac{G}{a}{\bigopen{1 - \bigopen{1 - \eta a}^{\frac{n-1}{2}}}^2} = (1-\eta a)^n x^k_0 + \frac{G}{a}{\bigopen{1 - \bigopen{1 - \eta a}^{\frac{n-1}{2}}}^2}.
\end{align*}

By unrolling the above inequality over $k \in [K]$, we obtain the equation for the final iterate $x_n^k$:
\begin{align*}
    x_n^K & = (1 - \eta a)^{nK} x_0 + \frac{G}{a} \cdot \frac{1 - (1-\eta a)^{nK}}{1 - (1-\eta a)^n}\bigopen{1 - \bigopen{1 - \eta a}^{\frac{n-1}{2}}}^2.
\end{align*}
\end{proof}

\begin{lemma}
\label{lem:quadratic-concave-IGD-closed}
Let $n$ be an even number. 
Define $F(x) = \frac{a}{8}x^2$ with component functions
\begin{align*}
    f_i(x) = \begin{cases}
        \frac{a}{2}x^2 + Gx & \textrm{ if } i \leq n/2,\\
        -\frac{a}{4}x^2 - Gx &\textrm{ otherwise}.
    \end{cases}
\end{align*}
Consider applying \Cref{alg:igd} for a single epoch, starting from $x_0^k$.
The updated iterate $x_0^{k+1}$ satisfies the following equation:
\begin{align*}
    x_0^{k+1} = \bigopen{1 + \frac{\eta a}{2}}^{\frac{n}{2}} \bigopen{1 - \eta a}^{\frac{n}{2}} x^k_0 + \frac{G}{a}\bigopen{\bigopen{1 + \frac{\eta a}{2}}^{\frac{n}{2}} \bigopen{1 + (1 - \eta a)^\frac{n}{2}} - 2}.
\end{align*}
\end{lemma}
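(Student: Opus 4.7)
The plan is to split the single epoch into two halves, derive a closed-form expression for each half using the standard affine recursion $y \mapsto \alpha y + \beta$, and then compose the two halves. Since each half applies exactly the same component function $n/2$ times, both halves reduce to a geometric sum.

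First, I would compute the effect of the first $n/2$ iterations. For $i \leq n/2$, the update simplifies to $x_i^k = (1-\eta a) x_{i-1}^k - \eta G$. Unrolling this affine recursion over $n/2$ steps starting from $x_0^k$ gives
\begin{align*}
    x_{n/2}^k = (1-\eta a)^{n/2} x_0^k - \eta G \sum_{j=0}^{n/2-1}(1-\eta a)^j = (1-\eta a)^{n/2} x_0^k - \frac{G}{a}\bigopen{1-(1-\eta a)^{n/2}},
\end{align*}
where the sum is evaluated using the geometric series formula (and the $\eta$'s in numerator and denominator cancel).

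Next, I would compute the effect of the last $n/2$ iterations. For $i > n/2$, the update simplifies to $x_i^k = (1+\eta a/2) x_{i-1}^k + \eta G$. Unrolling this over $n/2$ steps starting from $x_{n/2}^k$ gives
\begin{align*}
    x_n^k = \bigopen{1+\tfrac{\eta a}{2}}^{n/2} x_{n/2}^k + \eta G \sum_{j=0}^{n/2-1}\bigopen{1+\tfrac{\eta a}{2}}^j = \bigopen{1+\tfrac{\eta a}{2}}^{n/2} x_{n/2}^k + \frac{2G}{a}\bigopen{\bigopen{1+\tfrac{\eta a}{2}}^{n/2}-1}.
\end{align*}

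Finally, substituting the expression for $x_{n/2}^k$ into the expression for $x_n^k = x_0^{k+1}$ and collecting the coefficient of $G/a$, I would obtain
\begin{align*}
    x_0^{k+1} = \bigopen{1+\tfrac{\eta a}{2}}^{n/2}(1-\eta a)^{n/2} x_0^k + \frac{G}{a}\bigclosed{\bigopen{1+\tfrac{\eta a}{2}}^{n/2}\bigopen{1+(1-\eta a)^{n/2}} - 2},
\end{align*}
which is precisely the stated formula. There is no real obstacle here; the only care required is in the algebraic simplification of the $G/a$ coefficient, where the terms $-(1+\eta a/2)^{n/2} + 2(1+\eta a/2)^{n/2}$ must be combined with $(1+\eta a/2)^{n/2}(1-\eta a)^{n/2}$ and $-2$ to yield the bracketed expression.
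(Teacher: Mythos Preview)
Your proposal is correct and follows essentially the same approach as the paper's proof: split the epoch into its two halves, unroll each affine recursion using the geometric series, and compose. The intermediate expressions for $x_{n/2}^k$ and $x_n^k$ and the final algebraic simplification match the paper line by line.
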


\begin{proof}
For $i \leq \frac{n}{2}$, the update rule is given as:
\begin{align*}
    x^k_i = x^k_{i-1} - \eta (ax^k_{i-1} + G) = (1 - \eta a)x^k_{i-1} - \eta G.
\end{align*}

By sequentially applying the first half of the component functions, we obtain
\begin{align}
    x^k_{\frac{n}{2}} & = (1 - \eta a)^{\frac{n}{2}}x^k_0 - \eta G \sum_{i=0}^{\frac{n}{2} - 1} (1 - \eta a)^i = (1 - \eta a)^{\frac{n}{2}}x^k_0 - \eta G \cdot \frac{1 - (1 - \eta a)^{\frac{n}{2}}}{\eta a} \nonumber \\
    & = (1 - \eta a)^{\frac{n}{2}}x^k_0 - \frac{G}{a}\bigopen{1 - (1 - \eta a)^{\frac{n}{2}}}. \label{eq:lmmconcavefirsthalf}
\end{align}

For $i \geq \frac{n}{2} + 1$, the update rule is given as:
\begin{align*}
    x^k_i = x^k_{i-1} - \eta \bigopen{-\frac{a}{2}x^k_{i-1} - G} = \bigopen{1 + \frac{\eta a}{2}}x^k_{i-1} + \eta G.
\end{align*}

Substituting the result from \cref{eq:lmmconcavefirsthalf} into the update rule for the second half of the component functions, we obtain $x_0^{k+1}$ (equivalently, $x_n^k$) as follows:
\begin{align*}
    x_0^{k+1} & = \bigopen{1 + \frac{\eta a}{2}}^{\frac{n}{2}}x^k_{\frac{n}{2}} + \eta G \sum_{i=0}^{\frac{n}{2} - 1} \bigopen{1 + \frac{\eta a}{2}}^i = \bigopen{1 + \frac{\eta a}{2}}^{\frac{n}{2}}x^k_\frac{n}{2} + \eta G \cdot \frac{2\bigopen{\bigopen{1 + \frac{\eta a}{2}}^{\frac{n}{2}} - 1}}{\eta a}\\
    & = \bigopen{1 + \frac{\eta a}{2}}^{\frac{n}{2}}x^k_{\frac{n}{2}} + \frac{2G}{a}\bigopen{\bigopen{1 + \frac{\eta a}{2}}^{\frac{n}{2}} - 1}\\
    & = \bigopen{1 + \frac{\eta a}{2}}^{\frac{n}{2}} \bigopen{(1 - \eta a)^{\frac{n}{2}}x^k_0 - \frac{G}{a}\bigopen{1 - (1 - \eta a)^{\frac{n}{2}}}} + \frac{2G}{a}\bigopen{\bigopen{1 + \frac{\eta a}{2}}^{\frac{n}{2}} - 1}\\
    & = \bigopen{1 + \frac{\eta a}{2}}^{\frac{n}{2}}\bigopen{1 - \eta a}^{\frac{n}{2}} x^k_0 + \frac{G}{a}\bigopen{\bigopen{1 + \frac{\eta a}{2}}^{\frac{n}{2}} \bigopen{1 + (1 - \eta a)^\frac{n}{2}} - 2}.
\end{align*}
\end{proof}
\newpage
\section{Experiments}
\label{appendix:experiments}

In this section, we validate the lower bound convergence rates for the functions used in the lower bound construction of \Cref{thm:small-lb-sc,thm:small-lb-concave}.
We compare the performance of four permutation-based SGD methods: \igd, \randr, Herding at Optimum, and with-replacement SGD. 
Here, Herding at Optimum refers to the instance of \Cref{alg:permutation} using the permutation suggested from \Cref{thm:herding-at-opt} satisfying \cref{eq:herdingproperty}. 
As mentioned in \Cref{subsec:small-optimal}, this permutation is generally unknown without prior knowledge of $\vx^*$. 
However, for the specific functions used in the lower bound construction, we can explicitly determine a permutation $\sigma$ that satisfies \cref{eq:herdingproperty}.
Thus, the plot for Herding represents the convergence rate achieved by a well-chosen permutation in permutation-based SGD.

Additionally, we conduct experiments using the MNIST (\cref{appendix:mnist}) and CIFAR-10 (\cref{appendix:cifar}) datasets.
For the real-world dataset, we compare the performance of \igd, \randr, and with-replacement SGD.
Training loss and the test accuracy of both MNIST and CIFAR-10 reveal the significant slowdown of \igd at the early stages of the training.
For details of the experiments, we refer readers to the corresponding subsections. 

\subsection{Results for the Function in \Cref{thm:small-lb-sc}}
\label{appendix:small-exp-sc}
Recall that the proof of \Cref{thm:small-lb-sc} uses 4-dimensional functions, formulated through the ``dimension aggregation'' step.
For a clear observation, we conduct experiments using the construction for the ``Moderate'' step size regime, and remove the first and the last dimension.

We use the parameters $\mu = 1.0 \times 10^0$, $L = 1.0 \times 10^4$, $G = 1.0 \times 10^0$, $n = 1.0 \times 10^3$, and choose the step size as $\eta = \frac{1}{\mu n K}$ which corresponds to the moderate step size regime. 
First, we examine the trajectory of \igd{} when initialized at $\vx^*$ in \Cref{fig:small-sc-trajectory}.
Recall that our construction is carefully designed so that the trajectory forms a regular $n$-polygon when starting from $(u_0(\eta), v_0(\eta))$ (see \Cref{subsec:small-lb-sc} for definitions).
As illustrated in \Cref{fig:small-sc-trajectory}, even when the iterate starts at $\vx^*$, it gradually drifts outward and rotates along a circular path.

\tightmargin
\tightmargin
\begin{figure}[ht]
    \centering
    \includegraphics[width=0.55\linewidth]{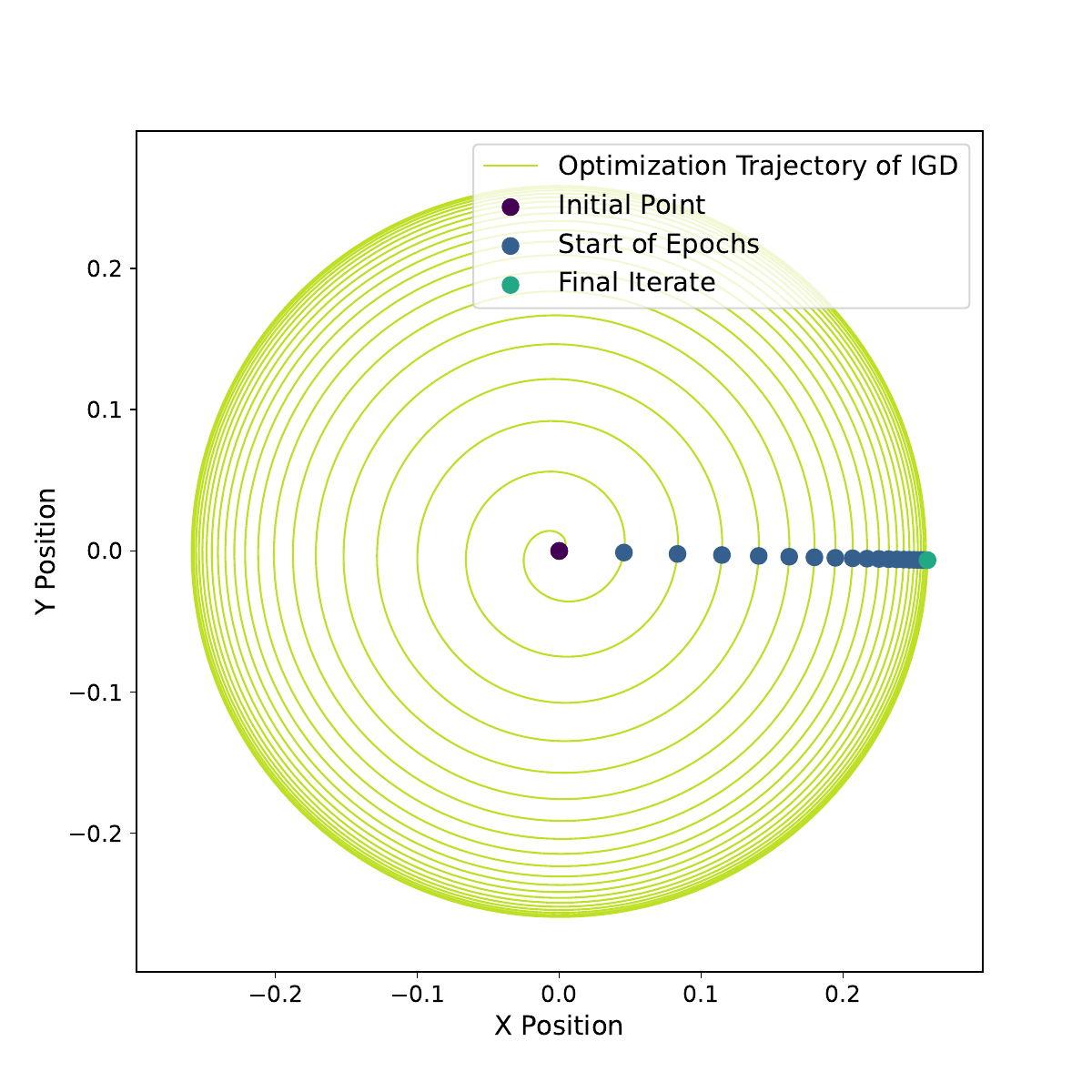}
    \tightmargin
    \tightmargin
    \caption{Trajectory of \igd{} with the function for \Cref{thm:small-lb-sc}, starting from the $\vx^*$ (the origin, purple dot), when $K = 20$. Blue dots starting point of each epoch, $\vx^k_0$, while the cyan dot indicates the final iterate $\vx^K_0$. }
    \label{fig:small-sc-trajectory}
\end{figure}

\Cref{fig:small-sc} reports the function optimality gap for different permutation-based SGD methods, when initialized at $(u_0(\frac{1}{\mu n K}, v_0(\frac{1}{\mu n K})))$.
Results for \randr{} and with-replacement SGD, which involves randomness, are reported after averaging over 20 trials for each number of epochs $k$.
The shaded region represents the first and the third quartiles across the 20 trials.

One might wonder why the trend of \igd{} does not match the rate derived in \Cref{thm:small-lb-sc}, given by $\frac{LG^2}{\mu^2}\min\left\{1, \, \frac{\kappa^2}{K^4}\right\}$.
We believe this occurs because the theoretical rate serves as a lower bound on the true convergence rate, and the empirical performance of \igd{} in this experiment can be influenced by additional factors not captured in the theoretical bound.

\begin{figure}[ht]
    \centering
    \includegraphics[width=0.55\linewidth]{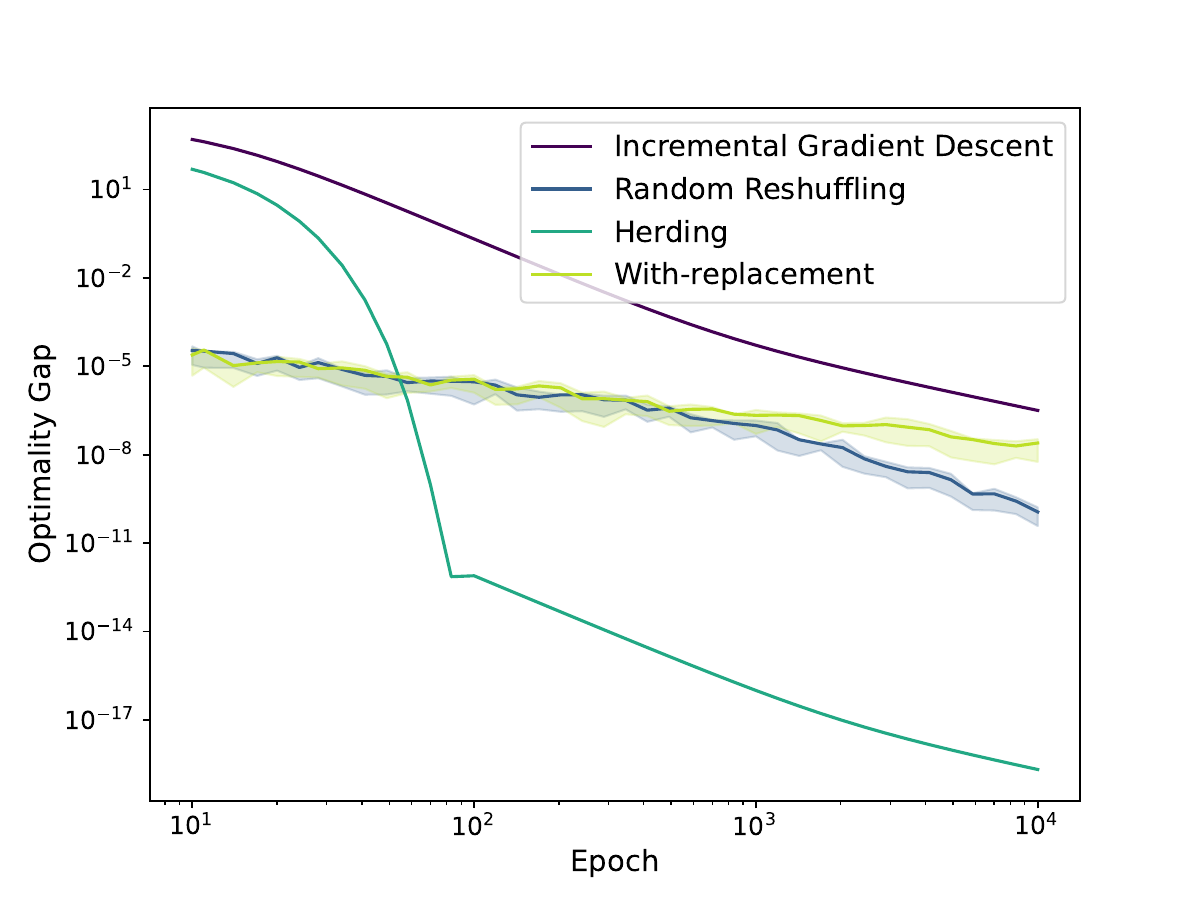}
    \caption{Experiments on \Cref{thm:small-lb-sc} for \igd{}, \randr{}, Herding at Optimum, and with-replacement SGD.
    Both axes are log-scaled.}
    \label{fig:small-sc}
\end{figure}

\subsection{Results for the Function in \Cref{thm:small-lb-concave}}
\label{appendix:small-exp-concave}
Recall that the proof of \Cref{thm:small-lb-concave} uses 4-dimensional functions, formulated through the ``dimension aggregation'' step. 
For a clear observation, we conduct experiments using the construction for the ``Moderate \& Large'' step size regime, and remove the first dimension.

We use the parameters $\mu = 1.0 \times 10^0$, $L = 1.0 \times 10^4$, $G = 1.0 \times 10^0$, $n = 1.0 \times 10^2$, and choose the step size as $\eta = \frac{1}{\mu n K}$.

\tightmargin
\tightmargin
\begin{figure}[ht]
    \centering
    \includegraphics[width=0.55\linewidth]{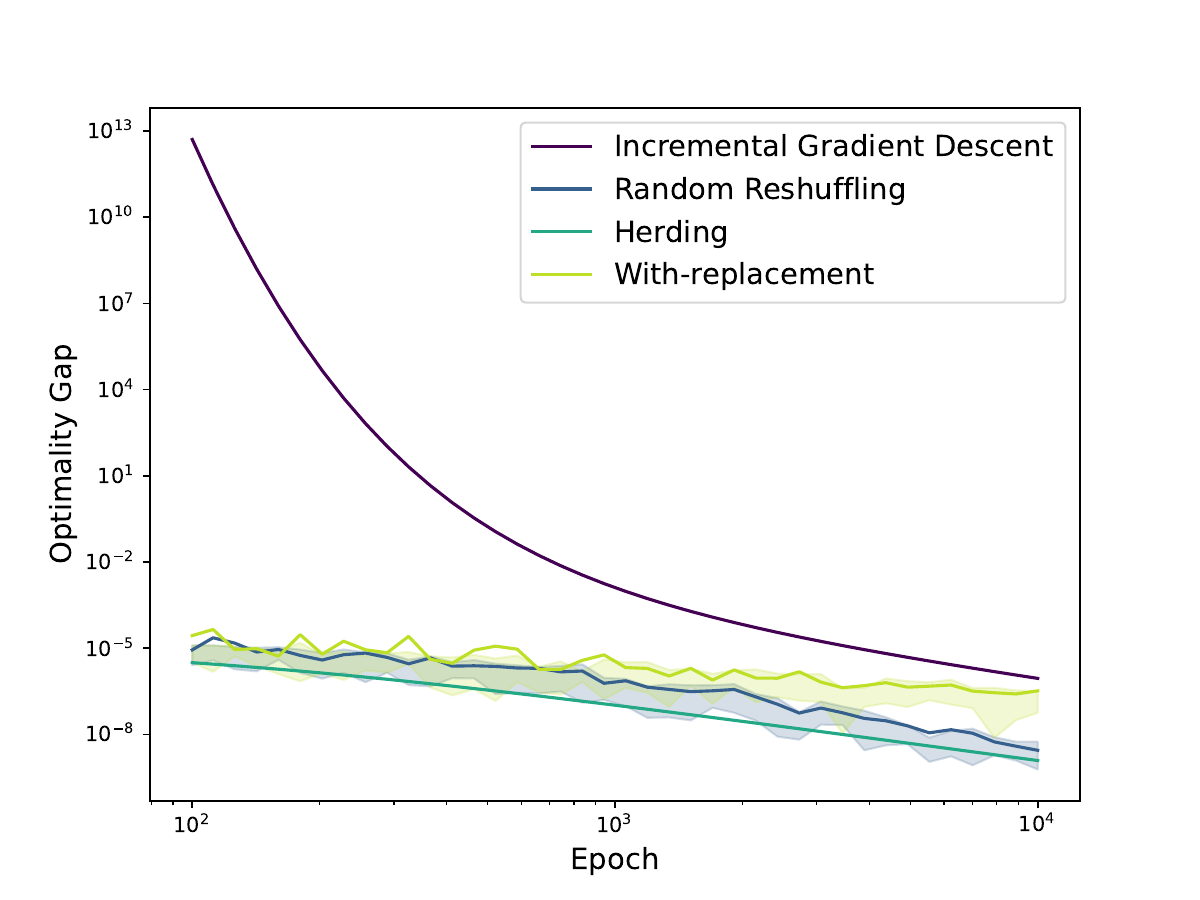}
    \tightmargin
    \tightmargin
    \caption{Experiments on \Cref{thm:small-lb-concave} for \igd{}, \randr{}, Herding at Optimum, and with-replacement SGD.
    Both axes are log-scaled.}
    \label{fig:small-concave}
\end{figure}

\Cref{fig:small-concave} reports the function optimality gap for different permutation-based SGD methods, when initialized at $(0, 0)$.
Results for \randr{} and with-replacement SGD, which involves randomness, are reported after averaging over 20 trials for each number of epochs, $k$.
The shaded region represents the first and the third quartiles over the 20 trials.
As suggested by \Cref{thm:small-lb-concave}, the function optimality gap increases sharply as $K$ decreases.
In contrast, \randr{} remains robust even for small $K$.

\subsection{Experiments on MNIST Dataset}
For the MNIST dataset, we consider the binary classification using only the data corresponding to the labels \textit{0} and \textit{1}. 
We consider the natural data ordering where all \textit{0} images are followed by all \textit{1} images. 
In this configuration, we have a total of 5,923 + 6,742 = 12,665 training data. 
We use a step size $\eta = 0.01$ throughout every part of the training. 
\label{appendix:mnist}
\begin{figure}[ht]
    \centering
    \begin{subfigure}{0.47\textwidth}
        \centering
        \includegraphics[width=\textwidth]{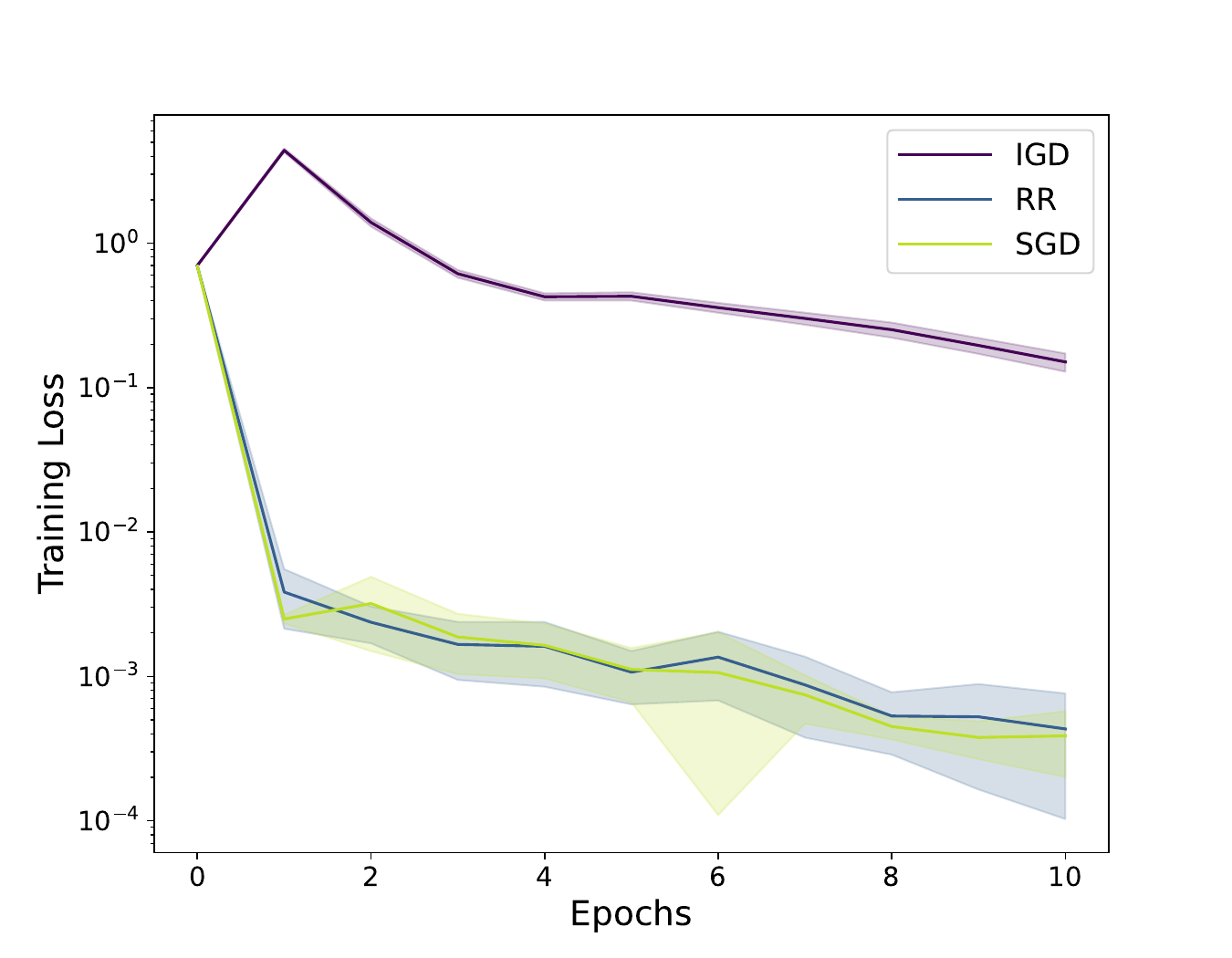}
        \caption{Training loss}
        \label{fig:MNIST-training-loss}
    \end{subfigure}
    \begin{subfigure}{0.47\textwidth}
        \centering
        \includegraphics[width=\textwidth]{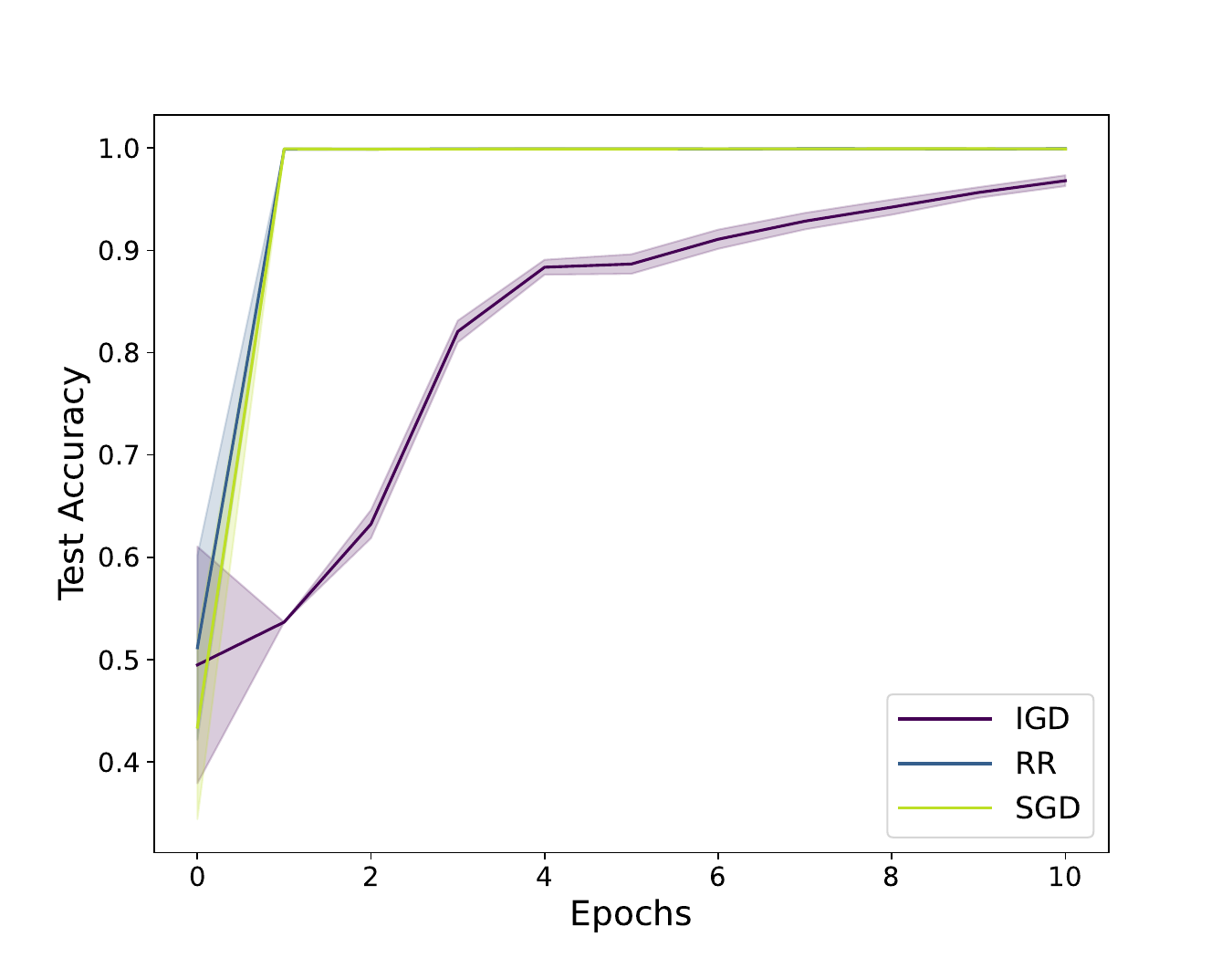}
        \caption{Test accuracy}
        \label{fig:MNIST-test-error}
    \end{subfigure}
    \caption{Experiments on MNIST dataset for \igd, \randr, and with-replacement SGD. 
    $y$-axis for the training loss is log-scaled.}
    \label{fig:mnist-results}
\end{figure}

\Cref{fig:mnist-results} reports the training loss and the test accuracy for different permutation-based SGD methods, with a random initialization.
Results are reported after averaging over 10 trials for each number of epochs, $k$.
The shaded region represents the 95\% confidence interval over 10 trials.
Unlike the experiments on the functions corresponding to the theorems using a fixed initialization, the randomness in the initialization for this experiment introduces a confidence interval even to \igd. 
Both the loss and the accuracy show no significant difference between \randr{} and with-replacement SGD, while \igd~shows a significantly slower convergence compared to the other two methods. 

\subsection{Experiments on CIFAR-10 Dataset}
\label{appendix:cifar}
For the CIFAR-10 dataset, we also consider the binary classification using only the data corresponding to the labels \textit{airplane} and \textit{automobile}. 
We consider the natural data ordering where all \textit{airplane} images are followed by all \textit{automobile} images. 
In this configuration, we have a total of 5,000 + 5,000 = 10,000 training data. 
We use a step size $\eta = 0.001$ throughout every part of the training. 

\begin{figure}[ht!]
    \centering
    \begin{subfigure}{0.47\textwidth}
        \centering
        \includegraphics[width=\textwidth]{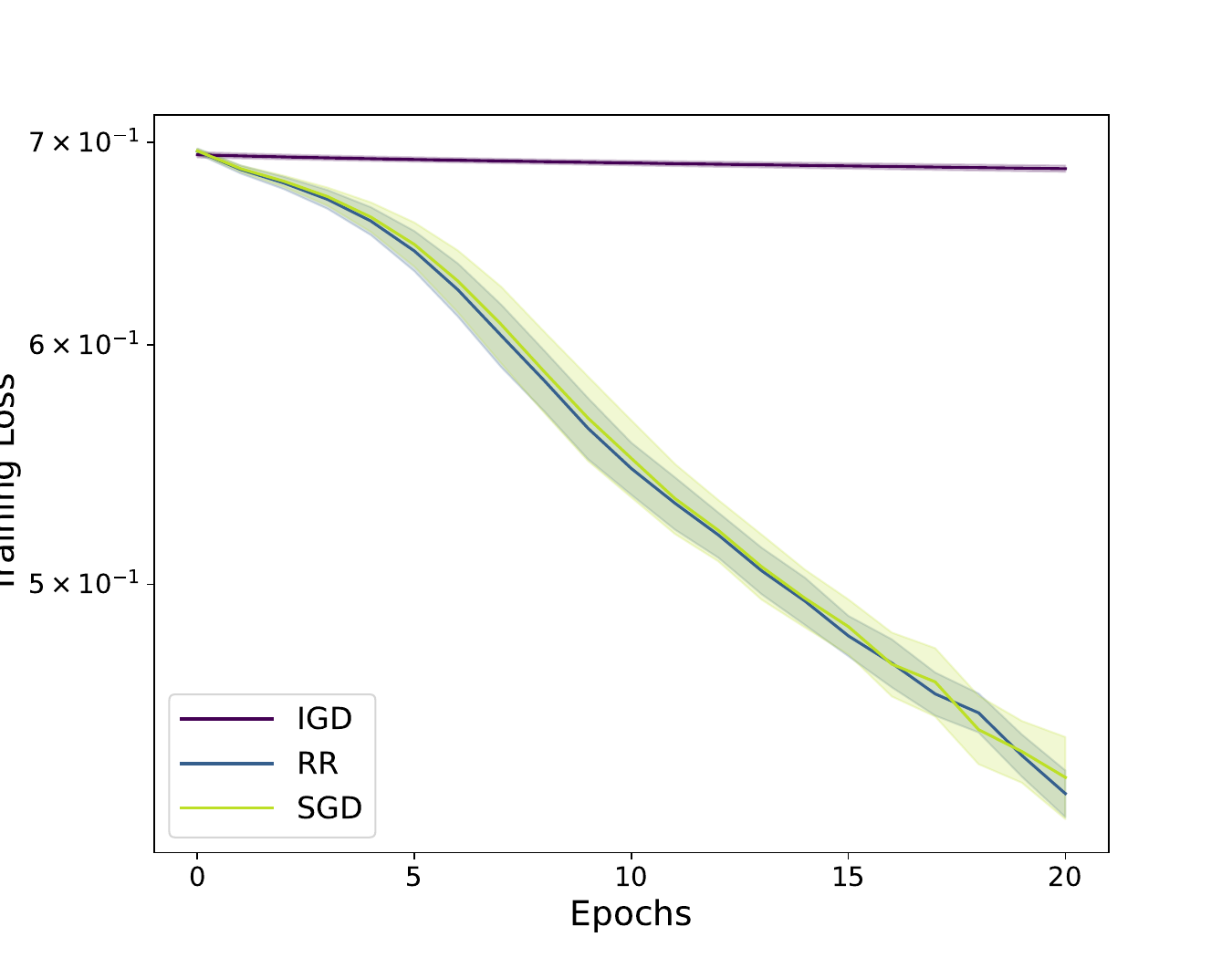}
        \caption{Training loss}
        \label{fig:CIFAR-training-loss}
    \end{subfigure}
    \begin{subfigure}{0.47\textwidth}
        \centering
        \includegraphics[width=\textwidth]{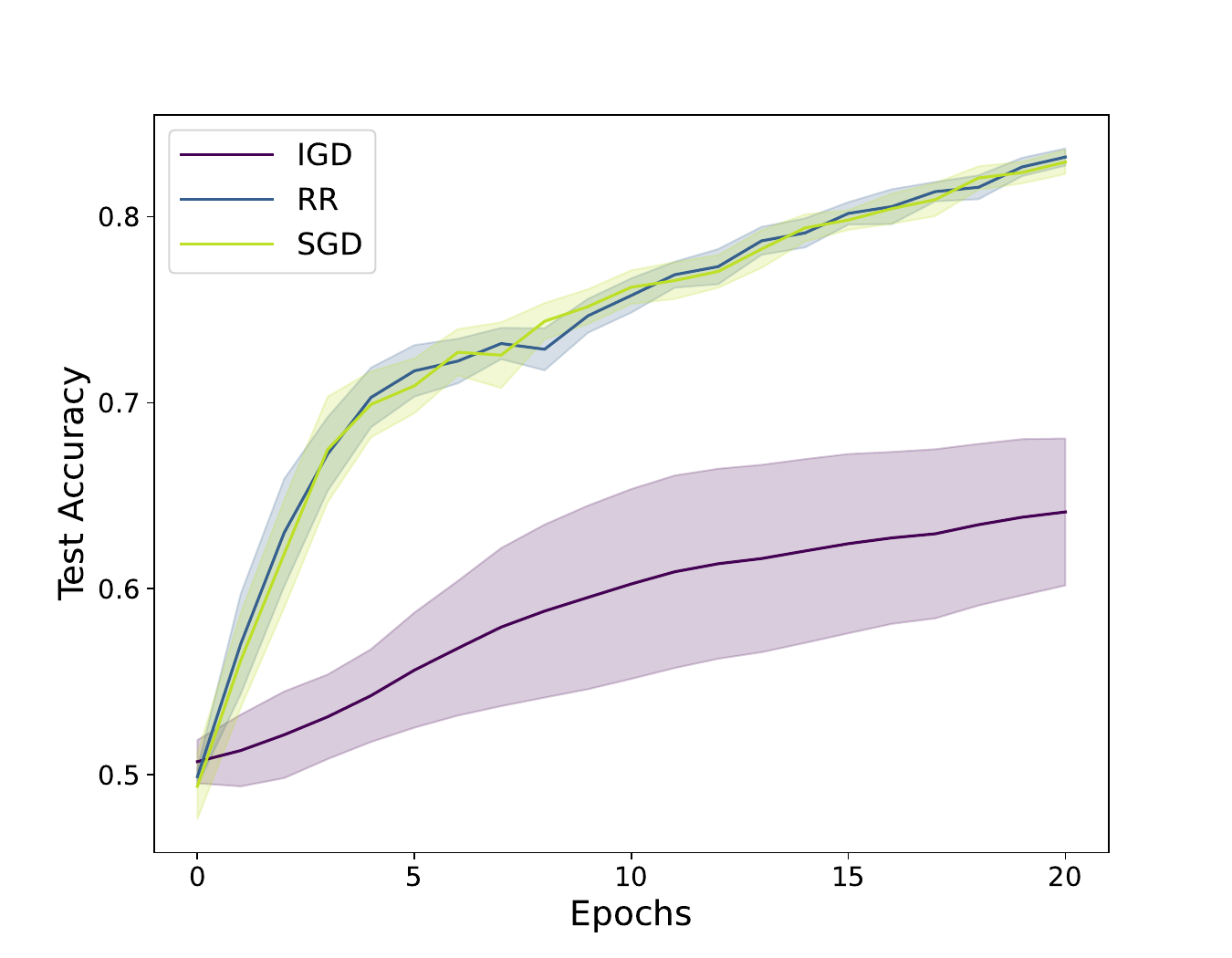}
        \caption{Test accuracy}
        \label{fig:CIFAR-test-error}
    \end{subfigure}
    \caption{Experiments on CIFAR-10 dataset for \igd, \randr, and with-replacement SGD. 
    $y$-axis for the training loss is log-scaled.}
    \label{fig:cifar-results}
\end{figure}

One slight difference from the experiment on the MNIST dataset is that we use a mini-batch of size 16 for the training. 
This is due to the instability of \igd~training. 
To ensure convergence of \igd~with a reasonable step size---such that the loss function decreases even with a small number of training epochs---we employ its mini-batch variant. 
For a fair comparison, we also adopt the corresponding mini-batch versions of \randr~and with-replacement SGD.

\Cref{fig:cifar-results} reports the training loss and the test accuracy for different permutation-based SGD methods, with a random initialization. 
Results are reported after averaging over 10 trials for each number of epochs, $k$. 
The shaded region represents the 95\% confidence interval over 10 trials. 
Both the loss and the accuracy show no significant difference between \randr{} and with-replacement SGD, while \igd~shows a significantly slower convergence compared to the other two methods.

\end{document}